\algnewcommand\algorithmicreturn{\textbf{return}}
\algnewcommand\RETURN{\State \algorithmicreturn}%
\newlist{steps}{enumerate}{1}
\setlist[steps, 1]{leftmargin=1.2cm, label = Step \arabic*:}
\newtheorem{theorem}{Theorem}
\newtheorem{lemma}{Lemma} 
\newtheorem{definition}{Definition}
\newtheorem{corollary}{Corollary}
\theoremstyle{remark}
\newtheorem*{remark}{Remark}
\newtheorem{hyp}{Hypothesis}
\begin{document}

\title{Time-Optimal Planning for Long-Range Quadrotor Flights: An Automatic Optimal Synthesis Approach}

\author{Chao Qin$^{1}$, Jingxiang Chen$^{2}$, Yifan Lin$^{2}$, Abhishek Goudar$^{1}$, Angela P. Schoellig$^{1,3}$ and Hugh H.-T. Liu$^{1}$
\thanks{$^{1}$ The authors are with the University of Toronto Institute for Aerospace Studies, Canada. They are also associated with the University of Toronto Robotics Institute.}
\thanks{$^{2}$ The authors are with the University of Toronto Division of Engineering Science, Canada.}
\thanks{$^{3}$ The authors are with the Technical University of Munich, Germany. They are also associated with the Munich Institute of Robotics and Machine Intelligence (MIRMI).}
}

\markboth{Journal of \LaTeX\ Class Files,~Vol.~14, No.~8, August~2021}%
{Shell \MakeLowercase{\textit{et al.}}: A Sample Article Using IEEEtran.cls for IEEE Journals}


\maketitle

\begin{abstract}
Time-critical tasks such as drone racing typically cover large operation areas. However, it is difficult and computationally intensive for current time-optimal motion planners to accommodate long flight distances since a large yet unknown number of knot points is required to represent the trajectory. We present a polynomial-based automatic optimal synthesis (AOS) approach that can address this challenge. Our method not only achieves superior time optimality but also maintains a consistently low computational cost across different ranges while considering the full quadrotor dynamics. First, we analyze the properties of time-optimal quadrotor maneuvers to determine the minimal number of polynomial pieces required to capture the dominant structure of time-optimal trajectories. This enables us to represent substantially long minimum-time trajectories with a minimal set of variables. Then, a robust optimization scheme is developed to handle arbitrary start and end conditions as well as intermediate waypoints. Extensive comparisons show that our approach is faster than the state-of-the-art approach by orders of magnitude with comparable time optimality. Real-world experiments further validate the quality of the resulting trajectories, demonstrating aggressive time-optimal maneuvers with a peak velocity of 8.86 m/s. 
\end{abstract}


\begin{IEEEkeywords}
Time-optimal control, autonomous aerial vehicles, motion planning, quadrotor.
\end{IEEEkeywords}

\section*{SUPPLEMENTARY MATERIAL}
\noindent\textbf{Video:} \url{https://www.youtube.com/watch?v=WLtnzOf9400}\\
\textbf{Code:} \url{https://github.com/FSC-Lab/aos_time_optimal}

\section{Introduction}\label{sec:intro}
\IEEEPARstart{Q}UADROTORS are one of the most agile and mechanically simple flying robots \cite{tal2020accurate} that have been widely used in time-critical tasks such as delivery, search \& rescue, and drone race \cite{foehn2022alphapilot}. These applications often cover a large and complicated operation area (e.g., using a stadium as a race course \cite{barin2017understanding}), and thus require time-optimal planners to scale well with long flight distances as well as a large number of waypoints, without diminishing the solution quality in terms of time optimality and dynamical feasibility. Unfortunately, state-of-the-art approaches cannot meet all these requirements for long-range tasks, which limits the potential of autonomous aerial vehicles in these time-sensitive scenarios.

\begin{figure}[!t]
\centering
\includegraphics[width=0.48\textwidth]{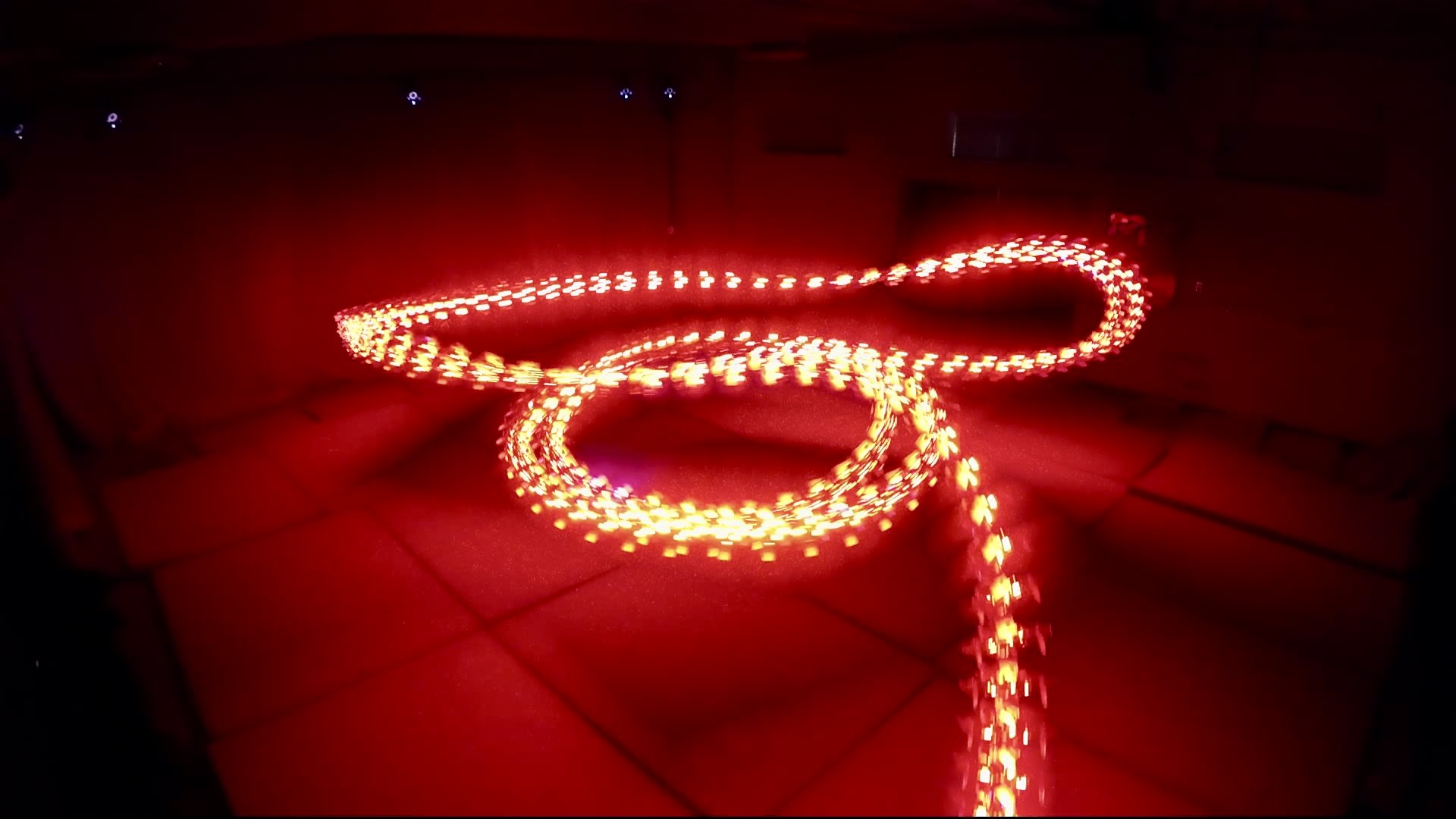}
\caption{Time-optimal trajectory generated by the proposed AOS approach to pass 19 waypoints and executed by a quadrotor platform in a motion capture room. Our approach achieves near-optimal racing performance and can handle long-range time-critical tasks with superior efficiency. 
}\label{fig_intro}
\end{figure}

State-of-the-art approaches \cite{spedicato2017minimum, foehn2021time, zhou2023efficient, romero2022model, westenberger2022efficient, toumieh2022near} employ time-discretized trajectory to construct the minimum-time path planning problem. This strategy often leads to optimal timings when the endpoint is relatively close to the initial point. However, it becomes problematic if a distant destination or waypoint is involved. This is because as the target distance increases, the optimal duration must increase correspondingly, necessitating more knot points (or nodes) \cite{tang2024direct} to maintain a short sampling period; otherwise, the integration error may manifest and thereby undermine the trajectory quality. As a result, a soaring problem complexity may arise, resulting in minutes or even hours of time consumption \cite{romero2022time}. Furthermore, the presence of a free end-time poses a challenge in determining the number of samples for trajectory discretization, and a considerable tuning effort is required to find a reasonable value for every waypoint layout. Although polynomial-based methods \cite{mellinger2011minimum, richter2016polynomial, wang2022geometrically, han2021fast, de2023alternating, ryou2021multi, qin2023time, wang2023polynomial} can circumvent the aforementioned problems, they yield significantly longer trajectory durations because smooth polynomials struggle to represent rapid state or input changes \cite{foehn2021time}. Moreover, it remains an open question under what conditions a polynomial-based method can achieve or approach true time optimality. 

In this paper, we show that the piecewise polynomial representation can yield trajectories that are nearly time-optimal, provided that the assigned piece number is large enough to account for all substantial changes in the collective thrust and rotational rate. We present a time-optimal planner for quadrotors that scales well with mission ranges in two-state and multi-waypoint flight problems as shown in Fig. \ref{fig_intro}. By leveraging the structure of time-optimal control (TOC) and the differential-flat properties of the quadrotor system \cite{mellinger2011minimum}, we demonstrate an efficient approach for solving time-optimal planning problems with arbitrary start and end points as well as waypoint layouts. Moreover, it reaches exceptional trajectory quality by accounting for the full quadrotor dynamics as well as actuation limits on single-rotor thrusts. We evaluate the proposed algorithm in a variety of scenarios and confirm its superior solution quality. For instance, on the Split-S race track \cite{song2023reaching}, our approach is merely 1.7\% slower than the state-of-the-art in terms of timing, with an order-of-magnitude faster computation speed.

The key to achieving true time optimality hinges on a thorough understanding of time-optimal maneuvers. We observe that for many systems, time-optimal trajectories can be represented as a concatenation of multiple trajectory segments \cite{soueres1996shortest}. Moreover, their structures can be classified into several categories, and the optimal trajectories follow the same pattern within each category regardless of the distance between the start and end points. Take a double integrator as an example. It is well-known that its optimal control policy is bang-bang with at most one switch \cite{liberzon2011calculus}. In other words, the resulting minimum-time trajectories must consist of at most two segments for any task. This paper generalizes this principle to quadrotors. As illustrated in Fig. \ref{fig_aos_diagram}, we study the structure of time-optimal quadrotor maneuvers via the necessary condition of optimality from the \textit{Pontryagin Maximum Principle} (PMP). This information sheds light on the minimal number of polynomials required to construct the time-optimal trajectories in the worst-case scenarios. We then formulate an optimization problem to put all pieces together under a time-minimization objective. We term this process \textbf{A}utomatic \textbf{O}ptimal \textbf{S}ynthesis (AOS) as the key idea is still optimal synthesis (OS) \cite{boscain2003optimal} but replacing the manual synthesis process with a numerical optimization. Our contributions can be summarized below:

\begin{itemize}[]
\item We present a time-optimal planner that can tackle long-range quadrotor flights with high solution quality and efficiency.
\item We prove that in minimum-time fixed-endpoint problems of quadrotors, the collective thrust is \textit{bang-bang with at most 5 switches} and the rotational rate is \textit{bang-singular with at most 2 singular arcs and 3 isolated bang arcs}.
\item To our best knowledge, this is the first polynomial-based method capable of yielding solutions comparable to discretization-based methods. And we conducted extensive experiments to validate its real-world performance.
\end{itemize}

\begin{figure*}[!htbp]
\centering
\includegraphics[width=0.9\textwidth]{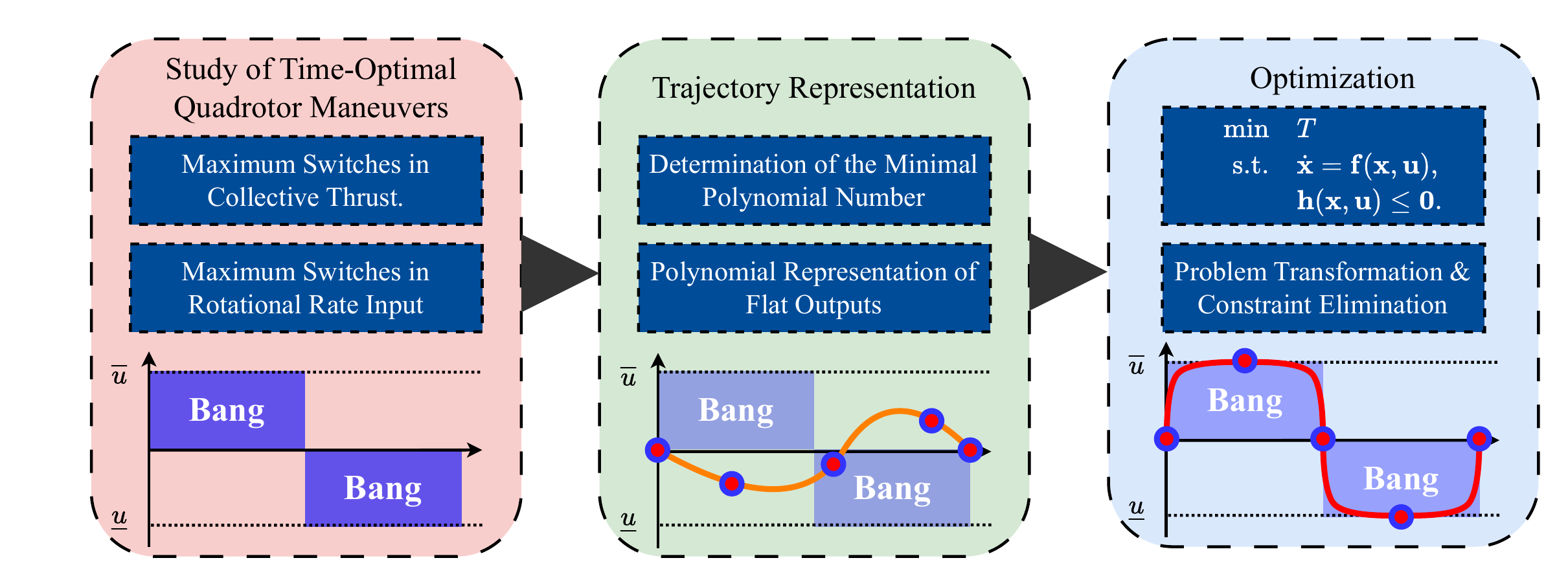}
\caption{Illustration of the AOS approach. \textbf{Step 1)}: understand the bang-bang or bang-singular structure of the optimal collective thrust and rotational rate trajectories. \textbf{Step 2)}: leverage the obtained information to determine an appropriate number of polynomials for trajectory representation. \textbf{Step 3)}: solve the time-optimal planning problem parameterized by the selected polynomials. It is expected that the polynomial can accommodate all rapid state or input changes with reasonable order and thereby generate close-to-optimal results.} \label{fig_aos_diagram}
\end{figure*}

\section{Related Work}\label{sec:related}
\subsection{Direct Methods vs. Indirect Methods}
Existing methods can be categorized into direct methods (DMs) and indirect methods (IMs). DMs perform state-space discretization, followed by formulating a constrained nonlinear programming problem (NLP). Many well-known algorithms fall into this family such as multiple/single shooting methods \cite{bock1984multiple} and direct collocation \cite{kelly2017introduction}. Note that their control inputs are typically parameterized as piecewise constant, which works quite well in TOC problems. However, DMs quickly become intractable as the prediction horizon increases. Moreover, when a minimum sampling time is specified, determining an appropriate number of knot points becomes difficult due to the unknown trajectory duration. To cope with this issue, prior works resort to either the user's experience \cite{foehn2021time} or a constant-velocity model \cite{zhou2023efficient}. However, these strategies are not reliable and often result in optimization failure when dealing with large waypoint layouts.

Different from the DMs' ``first discretize, then optimize '' strategy \cite{diehl2011numerical}, IMs first study the first-order necessary conditions, narrowing down the solution to a finite set of trajectory families. In the second step, a boundary value problem (BVP) can be constructed \cite{osborne1969shooting} to solve the optimal quadrotor maneuvers, as depicted by Hehn et al. \cite{hehn2012performance}. However, the BVP is highly sensitive to the initial guess of the adjoint state, which renders it unstable. An alternative second step is to establish an optimal synthesis---finitely concatenating candidate trajectories until the terminal state is reached---by exploring candidate trajectory families derived from the optimality conditions. One of its greatest achievements is solving Dubin's problem \cite{dubins1957curves}, in which Boissonnat et al. \cite{boissonnat1994shortest} deduced 46 trajectory families in the solution set and Soueres et al. \cite{soueres1996shortest} successfully identified the analytical solutions. However, finding OS for nonlinear systems in 3-D space is a daunting task. To the best of our knowledge, the OS for the two-dimensional quadrotor model has not been discovered, let alone its three-dimensional counterpart. Nevertheless, using a simplified model, Hehn et al. \cite{hehn2015real} provided certain analytical results but with compromised time optimality. Additionally, IMs inherently lack flexibility in incorporating state constraints, rendering them unsuitable for addressing complex tasks involving geometric constraints. Lastly, it is worth pointing out that our approach combines the strengths of IMs and DMs to achieve efficient and accurate solutions within a flexible framework. 

\subsection{Discrete Formulation vs Polynomial Representation}

There are two main options for trajectory representation in time-optimal planning: discretization-based and continuous-time polynomials. It is worth noting that trajectory discretization can be performed either in a temporal coordinate \cite{lai2006time, foehn2021time, zhou2023efficient, shen2023aggressive, romero2022model} or along a reference path \cite{spedicato2017minimum, arrizabalaga2022towards, van2013time}, and all these approaches are deemed discretization-based approaches. Multiple shooting (MS) methods are popular among those utilizing time-discretized trajectories. Particularly, to guarantee uniform sampling times, Foehn et al. \cite{foehn2021time} introduced complementary progress constraints (CPC) to tackle time-optimal waypoint flights. Zhou et al. \cite{zhou2023efficient} reduced runtime by manually specifying knot point numbers for trajectories between two waypoints. Spedicato et al. \cite{spedicato2017minimum} projected the system dynamics onto a reference path so that the total trajectory duration can be considered as an independent decision variable. To mitigate nonconvexity, Arrizabalaga et al. \cite{arrizabalaga2022towards} replaced the minimum-time objective with a quadratic progress-maximization term along the reference path. Mao et al.  \cite{mao2023toppquad} incorporated a more realistic quadrotor state and input bounds such as motor speeds. Although using a spatial coordinate decouples time from the dynamics equations, it implicitly forces the drone to follow the pre-defined path, which is suboptimal in most scenarios. Sampling-based methods \cite{allen2016real, liu2017search, liu2018search} have also been proven effective. Liu et al. \cite{liu2017search} generated a group of short-duration motion primitives and searched for the optimal one. The authors of \cite{foehn2022alphapilot, romero2022time} utilized the bang-bang structure of the point-mass model and sampled the velocity at each waypoint to generate near-optimal solutions. To enhance model accuracy without causing convergence issues, Penicka et al. \cite{penicka2022minimum} employed a hierarchical sampling scheme in which the model is refined at each level. However, it is computationally intensive and prone to error when the space is large.

Polynomial-based approaches are widely used in quadrotor navigation in clustered environments \cite{ji2022elastic, zhou2021raptor, ren2023online, gao2018online}. They typically model the quadrotor's flat outputs to eliminate the need for simulating the high-dimensional state space via numerical integration, which substantially alleviates the computation complexity. Van et al. \cite{van2013time} parameterized a polynomial trajectory expressed in the spatial coordinates, deriving a fixed end-time optimal control problem with path constraints that can be solvable, but it is restricted to a two-dimensional task. Ryou et al. \cite{ryou2021multi} employed Bayesian optimization to learn the optimal time allocation over each polynomial segment from multi-fidelity data. Mellinger et al. \cite{mellinger2011minimum} proposed the well-known minimum-snap trajectory and included time as a term to be minimized. Mueller et al. \cite{mueller2013computationally} computed minimum-time trajectories via two steps; they derived an analytic expression for a polynomial family and created a group of motion primitives with different durations, followed by searching for the minimum-time one. de Vries et al. \cite{de2023alternating} improved the second step by introducing a peak optimization method. Wang et al. \cite{wang2022geometrically} extended the first step from a single polynomial to a piecewise polynomial and provided the derivation process. Their framework enables spatial-temporal deformation of the decision variables, contributing to a highly efficient implementation of smooth trajectory planning subject to geometric constraints. This approach has been applied to time-critical tasks such as drone racing \cite{han2021fast, wang2023polynomial, qin2023time}. Han et al. \cite{han2021fast} focused on the collision avoidance aspect, Wang et al. \cite{wang2023polynomial} concentrated on the dynamic gates, and Qin et al. \cite{qin2023time} investigated the impact of gate shapes on time optimality. However, these works suffer from the inherent smoothness of polynomials and only yield time-suboptimal trajectories.

\section{Properties of Time-Optimal Maneuvers}\label{sec:properties}

This section investigates the bang-singular structure of optimal control policies in the time-optimal two-state problems. To make the analysis tractable, we follow the convention used in \cite{hehn2012performance} and consider a two-dimensional quadrotor model with collective thrusts and rotational rates as the control inputs. 

\subsection{An Account of The Previous Works}
Let's first introduce the cornerstone results in previous works. 

\subsubsection{Non-Dimensional Quadrotor Model}

To facilitate the analysis, Hehn et al. \cite{hehn2012performance} converted the original system to a non-dimensional model as shown in Fig. \ref{fig_quad2d}. Note that any properties derived in this model will equally apply to the original system. To make the paper self-contained, the derivation of the non-dimensional model is described below.

\begin{figure}[!htbp]
\centering
\includegraphics[width=0.35\textwidth]{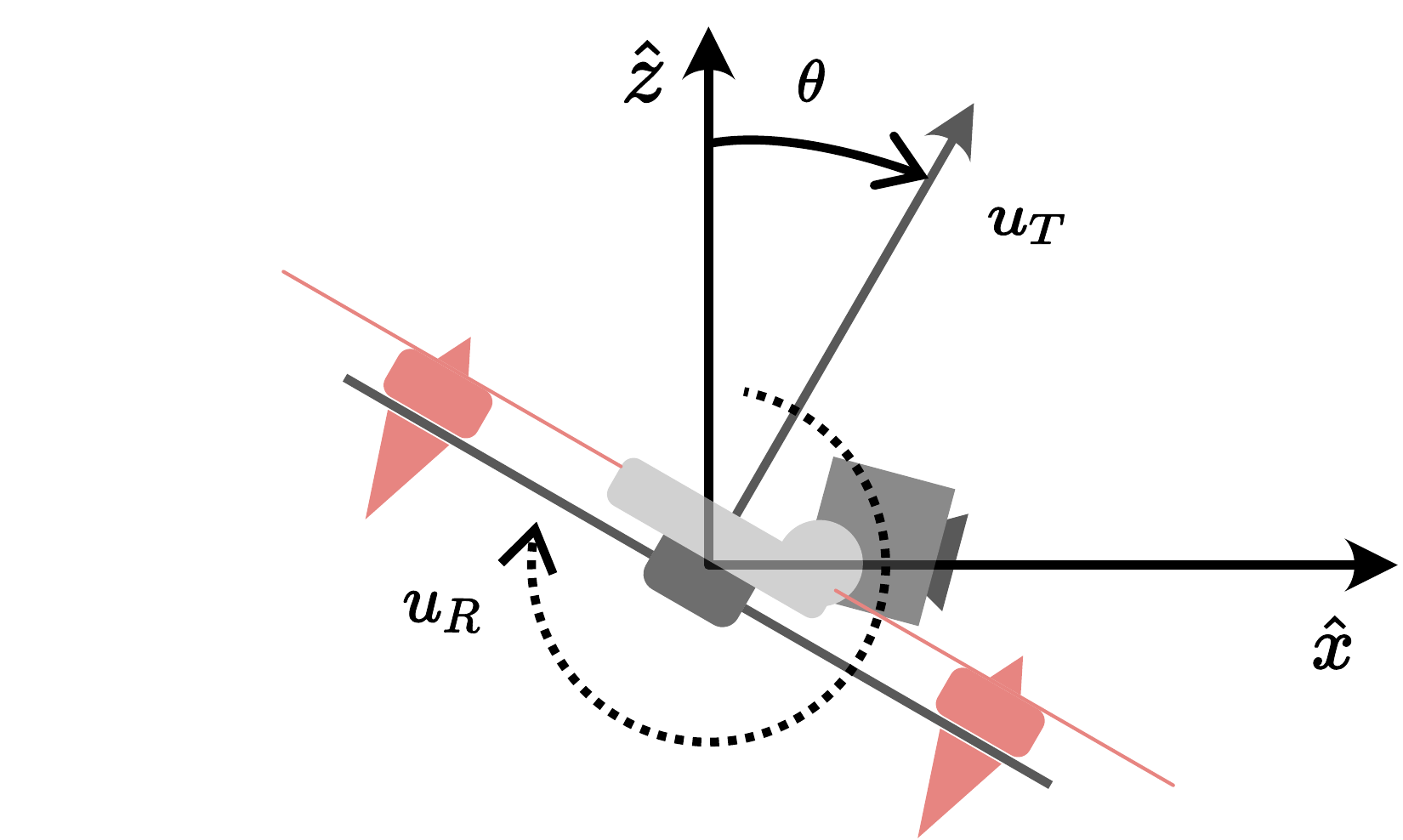}
\caption{Illustration of the non-dimensional quadrotor model to facilitate the analysis of time-optimal maneuvers.}
\label{fig_quad2d}
\end{figure}

Consider a quadrotor model with three degrees of freedom: the horizontal position $x$, the vertical position $z$, and the pitch angle $\theta$ \footnote{Here $\theta$ is 
treated as a value in $\mathbb{R}$ instead of $S$}. There are two inputs to this system, the collective thrust, $F_{T}$, and the pitch rate, $\omega$, which are subject to the following constraints:
\begin{align}
0<\underline{F_{T}}\leq F_{T}\leq\overline{F_{T}},\;
|\omega|\leq\overline{\omega},
\end{align}
where we employ $\underline{(\cdot)}$ to indicate the lower bound and $\overline{(\cdot)}$ the upper bound unless otherwise specified. It should be noted that this model can generalize to 3-D motions by rotating the coordinate system, making it applicable for representing a variety of maneuvers. 

We can transform this model into a non-dimensional form where the gravity magnitude, $g$, as well as the vehicle mass, $m$, can disappear throughout the derivation. The non-dimensional variables of time and position required for this process are defined as follows:
\begin{align}
\hat{t}=\overline{\omega}t,\;\hat{x}=\overline{\omega}^{2}x/g,\text{ and }\hat{z}=\overline{\omega}^{2}z/g,
\end{align}
where a hat $\hat{(\cdot)}$ is used to differentiate the non-dimensional states from the original states. We use overhead dots to represent time derivatives, e.g., $\dot{\hat{x}}=d\hat{x}/d\hat{t}$. This leads to the state and control vectors given below:
\begin{align}\hat{\mathbf{x}} & =[\hat{x},\dot{\hat{x}},\hat{z},\dot{\hat{z}},\theta]^{T}\in\mathbb{R}^{5},\\
\mathbf{\hat{u}} & =[u_{R},u_{T}]^{T}=[\frac{\omega}{\overline{\omega}},\frac{F_{T}}{mg}]^{T}\in \boldsymbol{U},
\end{align}
where $\boldsymbol{U}$ is the set of all admissible controls satisfying:
\begin{equation}
|u_{R}|\leq1 \text{ and }\frac{\underline{F_{T}}}{mg}=\underline{u_{T}}\leq u_{T}\leq\overline{u_{T}}=\frac{\overline{F_{T}}}{mg}.
\end{equation}
Note that $\theta$ and $\hat{\theta}$ are identical as they represent the same value. Finally, we obtain the following equation of motion:
\begin{align}
\dot{\hat{\mathbf{x}}}\!=\!\mathbf{f}(\hat{\mathbf{x}},\hat{\mathbf{u}})\!=\![\dot{\hat{x}},u_{T}\sin\theta,\dot{\hat{z}},u_{T}\cos\theta-1,u_{R}]^{T}.\label{equ_nondim_model}
\end{align}

\subsubsection{The Maximum Principle}
We leverage the PMP to derive the necessary optimality conditions as well as basic optimal control properties. As per the maximum principle \cite{liberzon2011calculus}, the optimal trajectory in a fixed-endpoint TOC problem must adhere to the following criteria: (i) satisfaction of the adjoint equation, (ii) satisfaction of the Hamiltonian maximization condition, and (iii) having a zero Hamiltonian along the optimal trajectory, i.e., $H\equiv0$ where $H$ is the Hamiltonian. Now we can apply these conditions to our model. 

Let's denote the time-optimal state trajectory as $\hat{\mathbf{x}}^{*}:[0,\hat{T}^{*}]\!\rightarrow\!\mathbb{R}^{5}$, or equivalently as the corresponding control inputs $\hat{\mathbf{u}}^{*}:[0,\hat{T}^{*}]\!\rightarrow\!\boldsymbol{U}$, where $\hat{T}^{*}$ is the optimal trajectory duration scaled by the maximal rotational rate. The maximization condition implies that:
\begin{equation}
H^{*}(\hat{t})=0,\;\forall \hat{t} \in [0,\hat{T}^{*}],\label{equ_hamiltonian}
\end{equation}
where $H^{*}(\hat{t})\!:=\!H(\hat{\mathbf{x}}^{*}(\hat{t}),\hat{\mathbf{u}}^{*}(\hat{t}),\mathbf{p}^{*}(\hat{t}))\!=\!\underset{\hat{\mathbf{u}}\in\boldsymbol{U}}{\max}\;H(\hat{\mathbf{x}}^{*},\hat{\mathbf{u}},\mathbf{p}^{*})$. Since the running cost for TOC is one, we can arrive at the following Hamiltonian for the quadrotor system:
\begin{align}
\begin{split}
&H(\hat{\mathbf{x}},\hat{\mathbf{u}},\mathbf{p})=\mathbf{p}^{T}\mathbf{f}(\hat{\mathbf{x}},\hat{\mathbf{u}})+1  \\=p_{1}\dot{\hat{x}}\!+\!p_{2}u_{T}&\sin\theta\!+\!p_{3}\dot{\hat{z}}
\!+\! p_{4}(u_{T}\cos\theta-1)\!+\!p_{5}u_{R}\!+\!1,
\end{split}\label{equ_hamiltonian_details}
\end{align}
where $\mathbf{p}\!=\![p_{1},p_{2},p_{3},p_{4},p_{5}]^{T}$ is the adjoint state vector (or co-states) that satisfies the adjoint equation below:
\begin{equation}
\mathbf{\dot{\mathbf{p}}}=-\nabla_{\hat{\mathbf{x}}}H(\hat{\mathbf{x}}^{*},\hat{\mathbf{u}}^{*},\mathbf{p}).\label{equ_adjoint_equation}
\end{equation}
By taking the derivative of the Hamiltonian in Eq. (\ref{equ_adjoint_equation}) w.r.t. $\hat{\mathbf{x}}$, we get the exact expressions of the first four adjoint states:
\begin{subequations}\label{equ_costates}
\begin{align}
p_{1}=c_{1},\;p_{2}=c_{2}-c_{1}\hat{t},\\
p_{3}=c_{3},\;p_{4}=c_{4}-c_{3}\hat{t},
\end{align}
\end{subequations}
where $c_{1},c_{2},c_{3},c_{4}$ are their coefficients which are constant. We introduce the vector $\mathbf{c}$ as $\mathbf{c}\!=\![c_{1},c_{2},c_{3},c_{4}]^{T}$. Meanwhile, the differential equation of $p_{5}$ is offered:
\begin{equation}
\dot{p}_{5}=-p_{2}u_{T}^{*}\cos\theta^{*}+p_{4}u_{T}^{*}\sin\theta^{*}.\label{equ:p5_dot}
\end{equation}
Since $\mathbf{p}$ must be nontrivial in time-optimal trajectories \cite{sussmann1991shortest}, i.e., $\mathbf{p}(\hat{t})\neq\mathbf{0}$ for every $\hat{t}\in\mathbb{R}$, we know that $\mathbf{c}$ cannot be all zeros, i.e., $\mathbf{c}\! \neq \!\mathbf{0}$. In addition, we observe that:
\begin{lemma}\label{lem:p2p4}
$p_{2}$ and $p_{4}$ have at most one isolated zero.
\end{lemma}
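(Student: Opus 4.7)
The plan is to read off the explicit forms of $p_2$ and $p_4$ from Eq.~(\ref{equ_costates}) and exploit the fact that they are affine in $\hat{t}$. Since the lemma's statement concerns only \emph{isolated} zeros, I would first fix language: a zero $\hat{t}_0$ of a function is isolated if there is a neighborhood of $\hat{t}_0$ in which $\hat{t}_0$ is the unique zero. A continuum of zeros of a continuous function (in particular, an interval on which the function vanishes) contributes no isolated zeros.

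Next, I would do a short case split on $p_2 = c_2 - c_1 \hat{t}$. If $c_1 \neq 0$, then $p_2$ is a genuinely non-constant affine function of $\hat{t}$, which has exactly one zero at $\hat{t} = c_2/c_1$; this zero is isolated because $p_2$ is strictly monotonic. If $c_1 = 0$, then $p_2 \equiv c_2$ is constant: either $c_2 \neq 0$, in which case $p_2$ has no zeros at all, or $c_2 = 0$, in which case $p_2$ is identically zero on $[0,\hat{T}^{*}]$ and therefore has no isolated zeros. In every case the number of isolated zeros is at most one. The argument for $p_4 = c_4 - c_3 \hat{t}$ is identical, using the pair $(c_3,c_4)$ in place of $(c_1,c_2)$.

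Because the whole argument reduces to elementary properties of affine functions made available directly by Eq.~(\ref{equ_costates}), there is no real obstacle to overcome; the only subtlety worth being explicit about is the distinction between the ``zero set'' and the set of isolated zeros, which is what prevents the identically-zero case from being a counterexample. I would therefore keep the proof to a few lines, noting these two or three cases and citing Eq.~(\ref{equ_costates}) for the affine form.
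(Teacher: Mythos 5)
Your proof is correct and follows essentially the same route as the paper's one-line argument, which likewise rests on the affine form of $p_{2}$ and $p_{4}$ from Eq.~(\ref{equ_costates}). If anything, your case split is slightly more careful: the paper simply asserts that $p_{2}$ and $p_{4}$ ``never vanish'' (i.e., are not identically zero) without justifying it, whereas you dispose of the identically-zero subcase explicitly by noting that it contributes no isolated zeros.
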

\begin{proof}
This is obvious from the fact that $p_{2}$ and $p_{4}$ are both affine functions of $\hat{t}$ and they never vanish.
\end{proof}

Now, given that optimal controls must maximize the Hamiltonian, we can express the optimal control policy as follows:
\begin{align}
u_{T}^{*}&=\underset{u_{T}\in[\underline{u_{T}},\overline{u_{T}}]}{\arg\max}\Phi_{T}u_{T},\label{equ_utopt}\\
u_{R}^{*}&=\underset{u_{R}\in[-1,+1]}{\arg\max}\Phi_{R}u_{R},\label{equ_uropt}
\end{align}
where $\Phi_{T}$ and $\Phi_{R}$ are \textit{switching functions} for the collective thrust and rotational rate, respectively, which can be obtained by combining terms related to control inputs in Eq. (\ref{equ_hamiltonian_details}):
\begin{equation}
\Phi_{T}=p_{2}\sin\theta^{*}+p_{4}\cos\theta^{*} \text{ and } \Phi_{R}=p_{5}. \label{equ_phi_T}
\end{equation}
Note that $\dot{\Phi}_{R}=\dot{p_{5}}$ also holds. Finally, expanding the optimal control policy gives:
\begin{equation}
u_{T}^{*}\!=\!\begin{cases}
\begin{array}{c}
\overline{u_{T}}\ \text{if }\Phi_{T}\!>\!0\\
\underline{u_{T}}\ \text{if }\Phi_{T}\!<\!0
\end{array} & ,\end{cases}u_{R}^{*}\!=\!\begin{cases}
\begin{array}{c}
+1\ \text{if }\Phi_{R}\!>\!0\\
-1\ \text{if }\Phi_{R}\!<\!0
\end{array} & .\end{cases}
\end{equation}
It is obvious that $u_{T}$ and $u_{R}$ will switch their values when the corresponding switching functions change signs or vanish (i.e., become zero identically). 

We recap some time-optimal-control-related terms used in this paper. A control policy is called \textit{bang-bang} if it only takes on either minimum or maximum values throughout the mission, or equivalently, its switching function never vanishes. A trajectory segment is rendered a \textit{bang arc} for a certain control if the corresponding switching function contains no zeros over $(a,b)$ where $a$ and $b$ are switching times. Moreover, we use ``a control is B-B-2 over $[a,b]$'' to refer that this control is bang-bang with two switches inside. A control is \textit{singular} if the corresponding switching function vanishes within a compact interval $[c,d]$. Note that this segment is deemed a \textit{singular arc} if $c$ and $d$ are switching times. Furthermore, a control policy is called \textit{bang-singular} if it contains both bang and singular arcs, and we say ``a control is B-S-B over $[a,b]$'' if this control consists of three segments following an order of bang, singular, and bang within the interval. It is worth mentioning that in our definitions, a switch can refer to control input changes between two bang arcs, as well as between a bang arc and a singular arc.

The main conclusions from \cite{hehn2012performance} are summarized in the following lemma:
\begin{lemma}\label{lem:prior_works}
There exist time-optimal trajectories for quadrotors. Moreover, $u_{T}^{*}$ is bang-bang and $u_{R}^{*}$ is bang-singular. Additionally, the optimal rotational rate input during a singular arc can be expressed as:
\begin{equation}
u_{R}^{*}=\frac{c_{2}c_{3}-c_{1}c_{4}}{(c_{1}^{2}+c_{3}^{2})\hat{t}^{2}-2(c_{1}c_{2}+c_{3}c_{4})\hat{t}+c_{2}^{2}+c_{4}^{2}}.\label{equ_ur_sing}
\end{equation}
\end{lemma}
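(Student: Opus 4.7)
The plan is to establish the four assertions of the lemma in sequence via the Pontryagin Maximum Principle, following the flow of the preceding subsection. Existence would come from a standard Filippov--Roxin-type argument: the admissible control set $\boldsymbol{U}$ is compact, the velocity set $\mathbf{f}(\hat{\mathbf{x}},\boldsymbol{U})$ from \eqref{equ_nondim_model} is convex (the dependence on $(u_R,u_T)$ is affine once $\theta$ is frozen), and state trajectories under bounded inputs grow at most linearly in time, so provided the target is reachable a time-optimal trajectory is guaranteed to exist.

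For the bang-bang structure of $u_T^*$, I would argue by contradiction. Assume $\Phi_T=p_2\sin\theta^*+p_4\cos\theta^*$ vanishes on a nontrivial subinterval. Differentiating with respect to $\hat{t}$ and substituting $\dot{p}_2=-c_1$, $\dot{p}_4=-c_3$ from \eqref{equ_costates} together with $\dot{\theta}^*=u_R^*$ yields a second identity $u_R^*(p_2\cos\theta^*-p_4\sin\theta^*)=c_1\sin\theta^*+c_3\cos\theta^*$. Combining $\Phi_T\equiv0$ and this derived relation with the zero-Hamiltonian condition \eqref{equ_hamiltonian} applied to \eqref{equ_hamiltonian_details}, and invoking the nontriviality $\mathbf{p}\not\equiv\mathbf{0}$, one should be able to force $\mathbf{c}=\mathbf{0}$ and $p_5\equiv0$, contradicting $\mathbf{c}\neq\mathbf{0}$. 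I expect this to be the principal obstacle: the affine structure of $(p_2,p_4)$ permits several degenerate configurations of $\theta^*$ (for example $\theta^*$ chosen so the two affine functions lock in phase) that must each be ruled out before the contradiction closes.

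For the bang-singular property of $u_R^*$ and the explicit formula \eqref{equ_ur_sing}, the strategy is the classical singular-arc analysis. Along a singular arc, $\Phi_R=p_5\equiv0$ and therefore $\dot{p}_5\equiv0$; substituting into \eqref{equ:p5_dot} and using $u_T^*>0$ forces $p_4\sin\theta^*=p_2\cos\theta^*$. Differentiating once more so that $\ddot{p}_5\equiv0$, and using $\dot{u}_T^*=0$ in the interior of the arc, a short computation yields
\begin{equation*}
u_T^*\bigl(c_1\cos\theta^*-c_3\sin\theta^*+u_R^*\Phi_T\bigr)=0.
\end{equation*}
Solving for $u_R^*$ and parameterizing the singular relation by $\sin\theta^*=\lambda p_2$, $\cos\theta^*=\lambda p_4$ (so that $\Phi_T=\lambda(p_2^2+p_4^2)$), the scaling $\lambda$ cancels and gives $u_R^*=(c_3p_2-c_1p_4)/(p_2^2+p_4^2)$. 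Substituting the affine expressions $p_2=c_2-c_1\hat{t}$ and $p_4=c_4-c_3\hat{t}$, the numerator telescopes to the time-independent constant $c_2c_3-c_1c_4$, while the denominator expands into the quadratic $(c_1^2+c_3^2)\hat{t}^2-2(c_1c_2+c_3c_4)\hat{t}+c_2^2+c_4^2$ of \eqref{equ_ur_sing}, completing the derivation. The fact that $u_R^*$ comes out as a single closed-form rational function of $\hat{t}$ also implicitly certifies that the singular arc is well-defined and of order one, so no further derivative conditions need to be checked.
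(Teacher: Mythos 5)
The paper does not actually prove this lemma---its ``proof'' is a one-line citation to Hehn et al.\ \cite{hehn2012performance}, so your reconstruction is necessarily a different route: you are supplying the argument the paper outsources. Most of it holds up. The existence claim via a Filippov-type theorem is the standard and correct route (compact control set, velocity set of \eqref{equ_nondim_model} affine in $(u_R,u_T)$ for frozen state, linear growth). The derivation of \eqref{equ_ur_sing} is correct and complete: from $\Phi_R=\dot\Phi_R\equiv 0$ and \eqref{equ:p5_dot} you get $p_4\sin\theta^*=p_2\cos\theta^*$, one more differentiation gives $c_1\cos\theta^*-c_3\sin\theta^*+u_R^*\Phi_T=0$, and the parameterization $(\sin\theta^*,\cos\theta^*)=\lambda(p_2,p_4)$ yields $u_R^*=(c_3p_2-c_1p_4)/(p_2^2+p_4^2)$, whose numerator indeed telescopes to $c_2c_3-c_1c_4$ under \eqref{equ_costates}. (Two small notes: the vanishing of the $\dot u_T^*$ term in $\ddot p_5$ follows because it multiplies a quantity that is identically zero on the arc, not because $\dot u_T^*=0$; and this computation is exactly the $\dot\varTheta$ formula the paper reuses in Lemma \ref{lem:vartheta}, which is a useful consistency check.)

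The one genuine soft spot is the bang-bang claim for $u_T^*$, where you yourself flag that the contradiction may not close. It does close, but the ingredient you need is the maximization condition on $u_R$, not the zero-Hamiltonian condition. Concretely: suppose $\Phi_T\equiv 0$ on an interval. If $p_5\not\equiv 0$ there, then $u_R^*=\pm 1$, while $\Phi_T\equiv 0$ forces $\dot\theta^*=(c_2c_3-c_1c_4)/\bigl((c_1^2+c_3^2)\hat t^2-2(c_1c_2+c_3c_4)\hat t+c_2^2+c_4^2\bigr)$; equating a constant $\pm 1$ with this rational function on an interval is a polynomial identity that forces $c_1=c_3=0$ and then $c_2=c_4=0$, i.e.\ $\mathbf{c}=\mathbf{0}$. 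If instead $p_5\equiv 0$ as well, then $\Phi_T\equiv 0$ and $\dot\Phi_R\equiv 0$ say that $(p_2,p_4)$ is orthogonal to two orthonormal vectors, hence $(p_2,p_4)\equiv(0,0)$, giving $\mathbf{p}\equiv\mathbf{0}$ and contradicting nontriviality. With that case split made explicit your argument is complete and self-contained, which is arguably more than the paper itself provides.
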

\begin{proof}
See \cite{hehn2012performance}.
\end{proof}

Lemma \ref{lem:prior_works} helps explain control behavior during optimal maneuvers, but it is insufficient to fully characterize the structure of optimal trajectories. Therefore, we introduce a concept of singular flows to enable a more in-depth analysis.

\subsection{Singular Flows and Their Properties}

We observe that unlike $\Phi_{T}$, the zeros of $\Phi_{R}$ are quite difficult to study as only the expression of its time derivative is available. Fortunately, $\dot{\Phi}_{R}\equiv0$ is a necessary condition for $\Phi_{R}\equiv0$. Therefore, by studying $\theta$ that can satisfy $\dot{\Phi}_{R}\equiv0$, it is still possible to gain insights into conditions a singular control must obey. This necessitates the introduction of singular flows:

\begin{definition}[Singular Flow]\label{def:vanishing_curve}
Given $(p_{2}, p_{4})\!\neq\!(0,0)$ for all times, a singular flow $\varTheta$ is an absolutely continuous function that satisfies $-p_{2}(\hat{t})\cos\varTheta(\hat{t})\!+\!p_{4}(\hat{t})\sin\varTheta(\hat{t})\!=\!0$ for all $\hat{t}\in\mathbb{R}$. Moreover, we assume that $|\dot{\varTheta}|\!<\!1$ holds almost everywhere.
\end{definition}

It is not difficult to find that the constraint related to singular flows is equivalent to condition $\dot{\Phi}_{R}=0$. The proof is straightforward: because $u_{T}\!>\!0$, we can divide $u_{T}^{*}$ at both sides of Eq. (\ref{equ:p5_dot}), which reveals that any $\theta^{*}$ that meets $\dot{\Phi}_{R}=0$ will also satisfy $-p_{2}\cos\theta^{*}\!+\!p_{4}\sin\theta^{*}\!=\!0$.

Singular flows are closely associated with the behavior of $u_{R}^{*}$ during a singular arc:

\begin{lemma}\label{lem:ur_singular}
$u_{R}^{*}$ is singular if and only if the corresponding $\theta^{*}$ lies on a singular flow $\varTheta$.
\end{lemma}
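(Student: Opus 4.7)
The plan is to prove the two directions separately, both grounded in the identity
\[
\dot{\Phi}_R \;=\; \dot p_5 \;=\; u_T^{*}\bigl(-p_2\cos\theta^{*}+p_4\sin\theta^{*}\bigr),
\]
which follows from Eq.~(\ref{equ:p5_dot}) and the definition $\Phi_R=p_5$, together with the hard positivity constraint $u_T^{*}\geq\underline{u_T}>0$. This positivity is what lets us pass freely between ``$\Phi_R$ vanishes'' and ``the singular-flow algebraic condition holds''.

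For the forward direction ($\Rightarrow$), I would take a singular arc $[a,b]$ of $u_R^{*}$, so that $\Phi_R=p_5\equiv 0$ there. Differentiating gives $\dot p_5\equiv 0$ on $[a,b]$, and dividing the displayed identity by $u_T^{*}>0$ yields $-p_2\cos\theta^{*}+p_4\sin\theta^{*}\equiv 0$ on $[a,b]$. The remaining task is to extend $\theta^{*}|_{[a,b]}$ to an absolutely continuous function $\varTheta$ on all of $\mathbb{R}$ satisfying the same algebraic relation. Because $\mathbf{c}\neq\mathbf{0}$ and $p_2,p_4$ are affine (Lemma \ref{lem:p2p4}), the pair $(p_2,p_4)$ never simultaneously vanishes, so the equation $p_4\sin\varTheta=p_2\cos\varTheta$ determines $\varTheta$ uniquely modulo $\pi$; one selects the branch that matches $\theta^{*}$ at one point of $[a,b]$ and continues it, handling the at-most-one isolated zero of $p_4$ by crossing through the vertical branch. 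The side condition $|\dot\varTheta|<1$ a.e.\ transfers from the fact that on a genuine singular arc the optimal control lies strictly inside $[-1,1]$ a.e.\ (otherwise the arc would be bang rather than singular), and the continuation outside $[a,b]$ is driven by the smooth affine functions $p_2,p_4$, which makes the derivative control easy to verify away from the vertical-tangent instant.

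For the converse ($\Leftarrow$), suppose $\theta^{*}$ coincides on some interval $I$ with a singular flow $\varTheta$. By the defining relation for $\varTheta$, the bracket $-p_2\cos\theta^{*}+p_4\sin\theta^{*}$ vanishes identically on $I$, hence $\dot p_5\equiv 0$ and $p_5\equiv k$ for some constant $k$ on $I$. If $k\neq 0$, the Hamiltonian maximization (Eq.~(\ref{equ_uropt})) forces $u_R^{*}=\operatorname{sgn}(k)$ a.e.\ on $I$, which gives $|\dot\theta^{*}|=1$ a.e., directly contradicting the built-in requirement $|\dot\varTheta|<1$ a.e.\ in Definition \ref{def:vanishing_curve}. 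Therefore $k=0$, so $\Phi_R\equiv 0$ on the compact interval $I$, and $u_R^{*}$ is singular there by definition.

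I expect the main obstacle to be the forward-direction extension step: showing that the local algebraic constraint picked out by the singular arc can be globalized into a bona fide singular flow on $\mathbb{R}$ with $|\dot\varTheta|<1$ a.e. The backward direction is really a short Hamiltonian-maximization argument that only uses the $|\dot\varTheta|<1$ clause in the definition, so once the extension machinery is in place the proof closes quickly.
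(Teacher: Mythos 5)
Your proposal is correct and follows essentially the same route as the paper: both directions rest on the identity $\dot{\Phi}_{R}=u_{T}^{*}(-p_{2}\cos\theta^{*}+p_{4}\sin\theta^{*})$ together with $u_{T}^{*}>0$, with the forward direction reading $\Phi_{R}\equiv\dot{\Phi}_{R}\equiv 0$ off the singular arc and the converse using the $|\dot{\varTheta}|<1$ clause to force $\Phi_{R}\equiv 0$ via the Hamiltonian maximization. Your write-up is in fact somewhat more careful than the paper's, which silently skips both the global-extension step in the forward direction and the ``$p_{5}\equiv k$, rule out $k\neq 0$'' step in the converse.
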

\begin{proof}
``$\Rightarrow$'': The singularity of of $u_{R}^{*}$ implies that $\Phi_R\!=\!\dot{\Phi}_R \!\equiv \!0$ holds. Knowing that $p_{2}$, $p_{4}$ have at most one isolated zero by Lemma \ref{lem:p2p4}, the only possibility to fulfill $\dot{\Phi}_R \equiv 0$ for a nontrivial interval is by having a smooth trajectory of $\theta^{*}$ satisfy $p_{4}\sin\theta^{*}\!-\!p_{2}\cos\theta^{*}\!=\!0$, which suggests that $\theta^{*}=\varTheta$ and therefore gives the statement.

``$\Leftarrow$'': $\theta^{*}$ lies on $\varTheta$ $\Rightarrow$ $\theta^{*}\equiv\varTheta$ within that interval $\Rightarrow$ $|u_{R}^{*}|\equiv|\dot{\varTheta}|<1$ $\Rightarrow$ $u_{R}^{*}$ is singular.
\end{proof}

\begin{remark}
Note that singular flows are not unique. This can be confirmed by checking its analytical expression, i.e.,  $\varTheta(\hat{t})\!=\!\arctan(p_{2}(\hat{t})/p_{4}(\hat{t}))\!\pm\! k\pi$, $k\in\mathbb{N}_{0}$, which discloses that different $k$ results in different singular flow. Moreover, two neighboring singular flows have a constant shift of $\pi$. In the remainder of this paper, we say ``two singular flows are \textit{$k$-adjacent}'' if their gap is $k\pi$. Fig. \ref{fig_vanishing_curve} gives an example of singular flows in a rest-to-rest purely horizontal translation, illustrating their functionality in analysis.
\end{remark}

\begin{figure}[!htbp]
\centering
\includegraphics[width=0.45\textwidth]{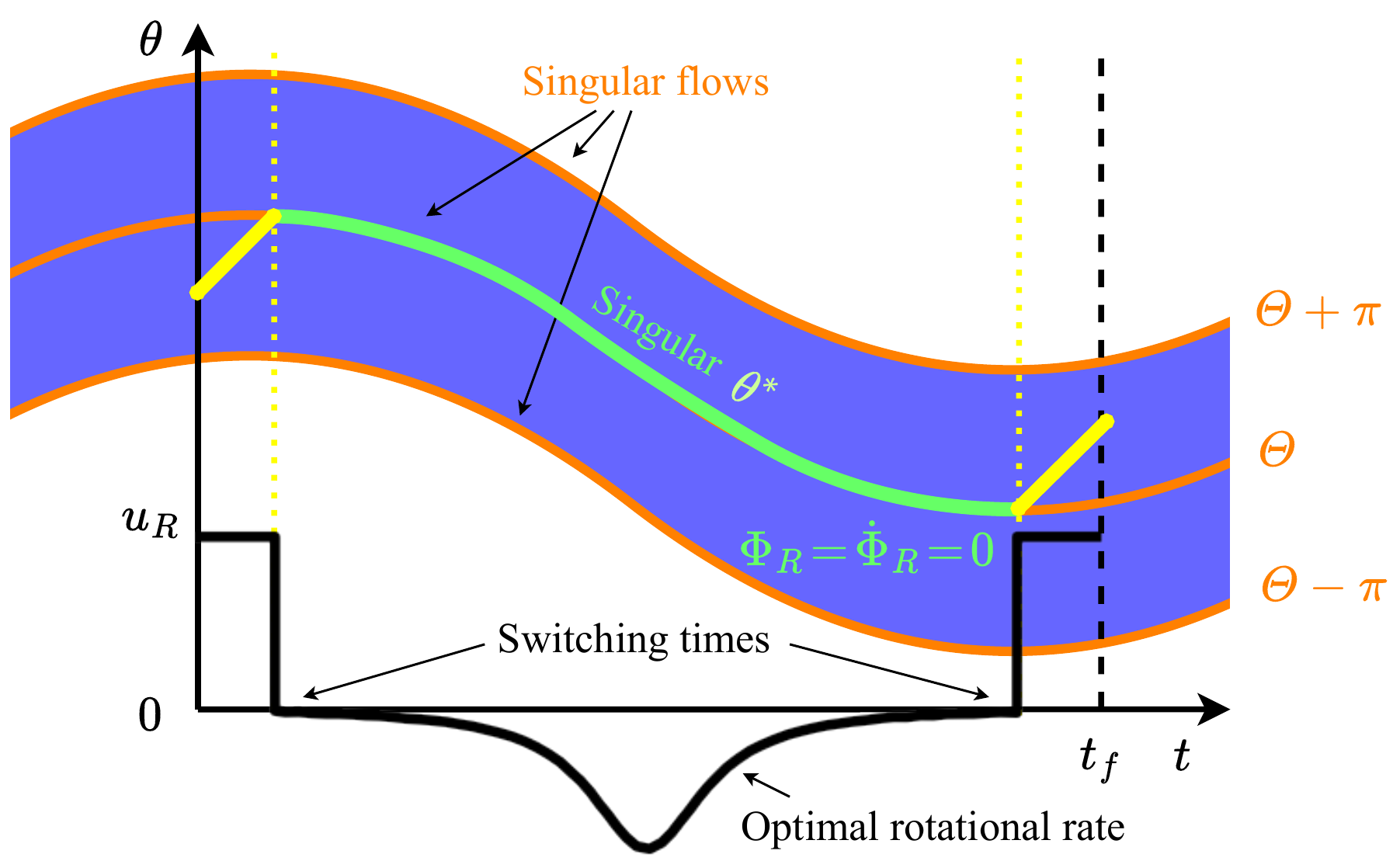}
\caption{Demonstration of singular flows (orange lines) in a time-optimal maneuver of a pure horizontal translation. Note that the optimal rotation trajectory $\theta^{*}$ consists of three phases, represented by two yellow lines and one green curve in the middle. The corresponding optimal rotational rate input $u_{R}^{*}$ is depicted at the bottom trajectory (black line). In the first phase, the vehicle pitches forward with a maximal value of $u_{R}^{*}\!=\!1$, which corresponds to a bang arc (yellow line). Once it intersects the singular flow, we have $\theta^{*}\!=\!\varTheta$ for a long time where $\Phi_{R}\!=\!\dot{\Phi}_{R}\!=\!0$ holds, implying that $u_{R}^{*}$ is singular. This phase entails a smooth transition from acceleration to deceleration of the vehicle. In the last phase, the vehicle pitches forward again with $u_{R}\!=\!1$ to return a hover state. We see that by drawing singular flows, we know when $u_{R}^{*}$ can be singular and what the structure of optimal maneuvers looks like.} \label{fig_vanishing_curve}
\end{figure}

\begin{definition}[Flat Singular Flow]\label{def:straight_vanishing_curve}
If $\varTheta$ is a constant, then all related singular flows are flat. 
\end{definition}

As shown in Fig. \ref{fig_flip_maneuver}, a flat singular flow implies that $u_{R}^{*}$ is zero when singular. Now we study some implicit conditions that can potentially make a singular flow flat.

\begin{lemma}\label{lem:vartheta}
A singular flow is flat if and only if $c_{2}c_{3}\!=\!c_{1}c_{4}$. Moreover, the corresponding $\varTheta$ has two solutions:
\begin{equation}
\varTheta=\begin{cases}
\begin{array}{c}
\arctan(c_{2}/c_{4})\pm k\pi,\\
\arctan(c_{1}/c_{3})\pm k\pi,
\end{array} & \begin{array}{c}
c_{1}=c_{3}=0,\\
\text{otherwise},
\end{array}\end{cases}\label{equ:straight_vanishing_curve}
\end{equation}
where $k\in\mathbb{N}_{0}$.
\end{lemma}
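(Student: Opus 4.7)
The plan is to plug the explicit affine forms $p_{2}(\hat t)=c_{2}-c_{1}\hat t$ and $p_{4}(\hat t)=c_{4}-c_{3}\hat t$ from Eq.~(\ref{equ_costates}) into the defining identity of a singular flow and read off the coefficients of the resulting polynomial in $\hat t$. Concretely, $\varTheta$ being a singular flow means
\begin{equation*}
(c_{4}-c_{3}\hat t)\sin\varTheta-(c_{2}-c_{1}\hat t)\cos\varTheta=0\quad\text{for all }\hat t.
\end{equation*}
Flatness means $\varTheta$ is constant, so the left-hand side is an affine polynomial in $\hat t$ whose constant and linear coefficients must both vanish:
\begin{equation*}
c_{4}\sin\varTheta-c_{2}\cos\varTheta=0, \qquad c_{1}\cos\varTheta-c_{3}\sin\varTheta=0.
\end{equation*}
This gives the forward direction: from these two equations, $\varTheta$ is a common solution of $c_{2}\cos\varTheta=c_{4}\sin\varTheta$ and $c_{1}\cos\varTheta=c_{3}\sin\varTheta$, and eliminating $\varTheta$ (by cross-multiplying, or by noting the $2\times 2$ system in $(\sin\varTheta,\cos\varTheta)$ must be rank-deficient) yields $c_{2}c_{3}=c_{1}c_{4}$.

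For the converse, assume $c_{2}c_{3}=c_{1}c_{4}$ and construct a constant $\varTheta$ satisfying both coefficient equations. The natural split is by which pair of coefficients is nonzero. If $c_{1}=c_{3}=0$, the second equation is trivial and the first reduces to $\tan\varTheta=c_{2}/c_{4}$; nontriviality of $\mathbf{c}$ together with the hypothesis $(p_{2},p_{4})\neq(0,0)$ for all $\hat t$ (which in this case forces $(c_{2},c_{4})\neq(0,0)$) guarantees this is well-defined (with the usual convention $\arctan(\pm\infty)=\pm\pi/2$ when $c_{4}=0$), giving the first branch. Otherwise $(c_{1},c_{3})\neq(0,0)$; solving the second equation yields $\tan\varTheta=c_{1}/c_{3}$ (again handling $c_{3}=0$ via the limit convention), and the proportionality $c_{2}c_{3}=c_{1}c_{4}$ ensures that the value of $\tan\varTheta$ thus chosen also satisfies the first equation, so both are compatible. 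The $\pm k\pi$ ambiguity in the statement simply reflects that singular flows come in $\pi$-shifted families (the remark after Lemma~\ref{lem:ur_singular}).

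The only real care point is bookkeeping of the degenerate cases so that the two boxed formulas together cover every admissible $\mathbf{c}$ without dividing by zero. I expect this case split to be the main (mild) obstacle: one must verify that the ``otherwise'' branch still makes sense when $c_{3}=0$ but $c_{1}\neq 0$, and that the ``$c_{1}=c_{3}=0$'' branch is forced onto the reader precisely when the second coefficient equation carries no information about $\varTheta$. Once the case analysis is set up, both directions reduce to the identification of coefficients of a degree-$1$ trigonometric polynomial, which is routine.
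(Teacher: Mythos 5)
Your proof is correct, and it lands on exactly the same two trigonometric identities that drive the paper's argument, namely $c_{4}\sin\varTheta-c_{2}\cos\varTheta=0$ and $c_{1}\cos\varTheta-c_{3}\sin\varTheta=0$, with the same case split on whether $(c_{1},c_{3})=(0,0)$. The one genuine difference is in how the equivalence with $c_{2}c_{3}=c_{1}c_{4}$ is extracted: the paper leans on the closed-form expression for $\dot{\varTheta}$, observing that it coincides with the singular-arc formula for $u_{R}^{*}$ in Eq.~(\ref{equ_ur_sing}) so that constancy is equivalent to the numerator $c_{2}c_{3}-c_{1}c_{4}$ vanishing, whereas you never invoke that rational expression at all --- you expand the defining identity of Definition~\ref{def:vanishing_curve} as a degree-one polynomial in $\hat{t}$, kill both coefficients, and read off $c_{2}c_{3}=c_{1}c_{4}$ as the rank-deficiency (determinant) condition for the resulting homogeneous $2\times2$ system in $(\cos\varTheta,\sin\varTheta)$. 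Your route is slightly more self-contained (it does not import the $u_{R}^{*}$ formula from Lemma~\ref{lem:prior_works}), and the coefficient-matching step is the polynomial-identity counterpart of the paper's differentiation of the constraint that produces Eq.~(\ref{equ:lemma2_proof}). The only point worth tightening is the converse: the lemma asserts that \emph{every} singular flow is flat when $c_{2}c_{3}=c_{1}c_{4}$, while you construct \emph{a} constant solution; this is harmless because the constraint determines $\varTheta(\hat{t})$ modulo $\pi$ pointwise and continuity fixes the branch, so one constant solution forces all of them to be constant, but that closing remark should be stated rather than left implicit.
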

\begin{proof}
``$\Rightarrow$'': Writing down the expression of $\dot{\varTheta}$ by using the constraint given in Definition \ref{def:vanishing_curve}, we find that $\dot{\varTheta}$ shares the same expression as $u_{R}^{*}$ in Eq. (\ref{equ_ur_sing}). Since $\varTheta$ is a constant, $\dot{\varTheta}(\hat{t})$ must be zero for all $\hat{t}\in\mathbb{R}$. This can only be met when the numerator $c_{2}c_{3}\!-\!c_{1}c_{4}\!=\!0$, and hence confirm the statement.

``$\Leftarrow$'': Inserting $c_{2}c_{3}\!=\!c_{1}c_{4}$ into the expression of $\dot{\varTheta}$ yields that $\varTheta$ is a constant $\Rightarrow$ the singular flow is flat.

Now let's solve $\varTheta$. Assume that $c_{1}\!=\!c_{3}\!=\!0$. This implies that $(c_{2},c_{4})\!\neq\!(0,0)$ because otherwise $\mathbf{c}\neq\mathbf{0}$ cannot be met. Using the constraint, it is trivial to have $\varTheta=\arctan(c_{2}/c_{4})\pm k\pi$, which gives the first solution family. For the second solution family, we need to take the time derivative on both sides of the constraint, which leads to:
\begin{equation}
(c_{1}\cos\varTheta-c_{3}\sin\varTheta)+\Phi_{T}\dot{\varTheta}=0.
\end{equation}
Since $\dot{\varTheta}\!\equiv\!0$, we have
\begin{equation}
c_{1}\cos\varTheta-c_{3}\sin\varTheta=0,\label{equ:lemma2_proof}
\end{equation}
which eventually suggests that $\varTheta=\arctan(c_{1}/c_{3})\pm k\pi$.
\end{proof}

\begin{remark}
Lemma \ref{lem:vartheta} establishes the relation between the flatness of the singular flow and the configuration of the adjoint state coefficients $\mathbf{c}$. It is worth mentioning that since the denominator of $\dot{\varTheta}$ shown in Eq. (\ref{equ_ur_sing}) is always nonzero (this can be easily verified by deriving its roots with $\mathbf{c}\!\neq\!\mathbf{0}$), we obtain that $u_{R}^{*}$ must be nonzero provided $c_{2}c_{3}\!-\!c_{1}c_{4}\!\neq\!0$. Using Lemma \ref{lem:vartheta}, we further reach the conclusion that if $u_{R}^{*}$ is zero at any time, the corresponding singular flows must be flat, and hence $u_{R}^{*}$ must be zero identically when singular.
\end{remark}

\subsection{Elementary Properties of Optimal Trajectories}

This section investigates inherent properties of optimal trajectories that shed light on the bang-bang structure of $u_{T}^{*}$ and the bang-singular structure of $u_{R}^{*}$.

\begin{lemma}\label{lem:phi_p2p4}
The following conditions are equivalent: 
\begin{enumerate}[label=(\roman{*})]
\item there exist an unique $\hat{t}\in\mathbb{R}$ such that $\Phi_{T}(\hat{t})\!=\!\dot{\Phi}_{R}(\hat{t})\!=\!0$;
\item $c_{2}c_{3}\!=\!c_{1}c_{4}$ and $(c_{1},c_{3})\neq(0,0)$;
\item $p_{2}(\hat{t})\!=\!p_{4}(\hat{t})\!=\!0$.
\end{enumerate}
\end{lemma}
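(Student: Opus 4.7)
The plan is to establish the equivalences by using the explicit forms of $p_2$ and $p_4$ and the fact that at any instant the two conditions $\Phi_T=0$ and $\dot\Phi_R=0$ form a linear system in $(p_2,p_4)$ whose coefficient matrix is a rotation (hence invertible).

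First I would handle (i) $\Leftrightarrow$ (iii) pointwise. Writing out the two conditions gives
\begin{equation}
\begin{pmatrix} \sin\theta^{*}(\hat t) & \cos\theta^{*}(\hat t) \\ -\cos\theta^{*}(\hat t) & \sin\theta^{*}(\hat t) \end{pmatrix}\!\begin{pmatrix} p_2(\hat t) \\ p_4(\hat t) \end{pmatrix}=\begin{pmatrix}0\\0\end{pmatrix},
\end{equation}
where I used $u_T^{*}>0$ to divide $\dot\Phi_R$ by $u_T^{*}$. The coefficient matrix has determinant $\sin^2\theta^{*}+\cos^2\theta^{*}=1$, so it is invertible and the only solution is $p_2(\hat t)=p_4(\hat t)=0$. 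Conversely, if $p_2(\hat t)=p_4(\hat t)=0$ then trivially $\Phi_T(\hat t)=0$ and $\dot\Phi_R(\hat t)=0$. This identifies the $\hat t$ of (i) with the $\hat t$ of (iii); uniqueness will drop out of the next step.

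Next I would establish (iii) $\Leftrightarrow$ (ii) by plugging in the closed-form costates from Eq.~(\ref{equ_costates}). Condition (iii) becomes the pair of affine equations $c_2=c_1\hat t$ and $c_4=c_3\hat t$. If $(c_1,c_3)=(0,0)$ then these force $c_2=c_4=0$, contradicting $\mathbf{c}\neq\mathbf 0$; so necessarily $(c_1,c_3)\neq(0,0)$. Assuming w.l.o.g.\ $c_1\neq 0$, the first equation fixes $\hat t=c_2/c_1$, and substituting into the second gives $c_1c_4=c_2c_3$; the symmetric argument works if $c_3\neq 0$. Since $(c_1,c_3)\neq(0,0)$, at least one of the affine functions $p_2,p_4$ has nonzero slope, so the simultaneous zero is unique, which supplies the uniqueness claim in (i). Conversely, given (ii), setting $\hat t:=c_2/c_1$ (or $c_4/c_3$) produces the unique zero of both $p_2$ and $p_4$, hence (iii).

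No serious obstacle is expected: the whole argument is linear-algebraic once one notices the rotation structure of the $(\sin\theta^{*},\cos\theta^{*})$ block, and the bookkeeping between the two parameterizations $c_2/c_1$ and $c_4/c_3$ is resolved by the hypothesis $c_1c_4=c_2c_3$. The only point that deserves a line of care is the nontriviality step $\mathbf c\neq\mathbf 0$, which is what rules out the degenerate case $(c_1,c_3)=(0,0)$ and is why the clause ``$(c_1,c_3)\neq(0,0)$'' must be explicitly included in (ii).
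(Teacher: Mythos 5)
Your proof is correct, and the logical bookkeeping (establishing (i)$\Leftrightarrow$(iii) pointwise and then (iii)$\Leftrightarrow$(ii) from the affine costate formulas) closes all three equivalences. The route differs from the paper's in the one step that carries real content, namely connecting condition (i) to the costates. The paper proves (i)$\Rightarrow$(ii) by solving for $\theta^{*}$ from each equation separately, equating the two $\arctan$ expressions, taking tangents to obtain $p_{2}^{2}+p_{4}^{2}=0$, and then expanding this as a convex quadratic in $\hat{t}$ whose global minimum $(c_{2}c_{3}-c_{1}c_{4})^{2}/(c_{1}^{2}+c_{3}^{2})$ must be nonpositive; the condition $c_{2}c_{3}=c_{1}c_{4}$ drops out of the discriminant analysis. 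Your observation that the pair $\Phi_{T}=0$, $\dot{\Phi}_{R}/u_{T}^{*}=0$ is a homogeneous linear system in $(p_{2},p_{4})$ with a rotation matrix as coefficient matrix reaches the intermediate fact $p_{2}(\hat{t})=p_{4}(\hat{t})=0$ in one line, and it sidesteps the division-by-zero and branch issues lurking in the $\arctan$ manipulation (the paper has to treat $p_{2}=0$ or $p_{4}=0$ implicitly). Your passage from (iii) to (ii) via $c_{2}=c_{1}\hat{t}$, $c_{4}=c_{3}\hat{t}$ is also slightly more careful than the paper's, which writes $c_{2}/c_{1}=c_{4}/c_{3}$ and thereby tacitly assumes both $c_{1}$ and $c_{3}$ are nonzero, whereas your case split on which of them is nonzero covers the boundary cases. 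The one thing the paper's quadratic gives for free that yours does not is the explicit formula $\hat{t}^{*}=(c_{1}c_{2}+c_{3}c_{4})/(c_{1}^{2}+c_{3}^{2})$ for the common zero, but that formula is not used in the lemma statement, so nothing is lost.
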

\begin{proof}
(i)$\Rightarrow$(ii): Given $\Phi_{T}(\hat{t})\!=\!\dot{\Phi}_{R}(\hat{t})\!=\!0$, solving $\theta^{*}$ meeting both equations yields:
\begin{equation}
\theta^{*}=\arctan\left(-\frac{p_{4}}{p_{2}}\right)=\arctan\left(\frac{p_{2}}{p_{4}}\right).
\end{equation}
Taking the tangent of both sides and multiplying the constraint out results in $p_{4}^{2}+p_{2}^{2}=0$, expending which gives:
\begin{equation}
(c_{1}^{2}+c_{3}^{2})\hat{t}^{2}-2(c_{1}c_{2}+c_{3}c_{4})\hat{t}+c_{2}^{2}+c_{4}^{2}=0.\label{equ_p2_01}
\end{equation}
It should be noticed that the left-hand side is a convex function in $\hat{t}$, and since $c_{1}^{2}+c_{3}^{2}>0$, it has a global minimum. To justify Eq. (\ref{equ_p2_01}), the global minimum must be non-positive. We compute the global minimizer $\hat{t}^{*}$ through the first-order optimality condition:
\begin{equation}
\hat{t}^{*}=(c_{1}c_{2}+c_{3}c_{4})/(c_{1}^{2}+c_{3}^{2}).\label{equ_tmin}
\end{equation}
Substituting $\hat{t}^{*}$ into Eq. (\ref{equ_p2_01}) yields the global minimum:
\begin{align}
\begin{split}
&-(c_{1}c_{2}+c_{3}c_{4})^{2}/(c_{1}^{2}+c_{3}^{2})+c_{2}^{2}+c_{4}^{2}\\
=&(c_{2}c_{3}-c_{1}c_{4})^{2}/(c_{1}^{2}+c_{3}^{2})\leq0.
\end{split}
\end{align}
It is clear that $c_{2}c_{3}-c_{1}c_{4}$ must be zero to have the above inequality hold. To complete the proof, we still need to show that $(c_{1},c_{3})\neq(0,0)$. This can be easily validated by assuming $c_{1}\!=\!c_{3}\!=\!0$ and inserting it into Eq. (\ref{equ_p2_01}), which yields $c_{2}^{2}\!+\!c_{4}^{2}\!=\!0$ $\Rightarrow$ $c_{2}\!=\!c_{4}\!=\!0$. However, this is impossible because $\mathbf{c}\neq\mathbf{0}$.

(ii)$\Rightarrow$(iii): Given $(c_{1},c_{3})\neq(0,0)$, we have $c_{2}c_{3}\!=\!c_{1}c_{4}$ $\Rightarrow$ $c_{2}/c_{1}\!=\!c_{4}/c_{3}\!=\!a$ for some $a\in\mathbb{R}$ $\Rightarrow$ $c_{2}\!-\!c_{1}a\!=\!c_{4}\!-\!c_{3}a\!=\!0$ $\Rightarrow$ $p_{2}(a)\!=\!p_{4}(a)\!=\!0$. According to Lemma \ref{lem:p2p4}, $a$ is unique and thus $a=\hat{t}$.

(iii)$\Rightarrow$(i): The proof is trivial by inserting $p_{2}\!=\!p_{4}\!=\!0$ into $\Phi_{T}$ and $\dot{\Phi}_{R}$, respectively. The uniqueness of $\hat{t}$ follows from Lemma \ref{lem:p2p4}.
\end{proof}

\begin{remark}
Lemma \ref{lem:phi_p2p4} indicates that if $p_{2}(\hat{t})\!=\!p_{4}(\hat{t})\!=\!0$ holds at any time, the corresponding singular flows must be flat.
\end{remark}

\begin{corollary}\label{cor:phir_0}
Consider a time $\hat{t}\in\mathbb{R}$. We have $\dot{\Phi}_R(\hat{t})=0$ if and only if either $p_{2}(\hat{t})\!=\!p_{4}(\hat{t})\!=\!0$ or $\theta^{*}(\hat{t})=\varTheta(\hat{t})$ hold.
\end{corollary}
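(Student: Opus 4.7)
The plan is to reduce the claim to a direct statement about the zeros of the expression $p_{4}\sin\theta^{*}-p_{2}\cos\theta^{*}$, and then match it against Definition \ref{def:vanishing_curve} in two cases.

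First, I would use Eq.~(\ref{equ:p5_dot}) together with the identity $\dot{\Phi}_{R}=\dot{p}_{5}$ to rewrite
\[
\dot{\Phi}_{R}(\hat{t})\;=\;u_{T}^{*}(\hat{t})\bigl(p_{4}(\hat{t})\sin\theta^{*}(\hat{t})-p_{2}(\hat{t})\cos\theta^{*}(\hat{t})\bigr).
\]
The admissibility constraint $u_{T}^{*}\geq\underline{u_{T}}>0$ guarantees the scalar factor is strictly positive, so $\dot{\Phi}_{R}(\hat{t})=0$ is equivalent to $p_{4}(\hat{t})\sin\theta^{*}(\hat{t})-p_{2}(\hat{t})\cos\theta^{*}(\hat{t})=0$. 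This single reduction is the engine of the whole argument.

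For the ($\Leftarrow$) direction, both sufficient conditions make the bracket vanish immediately: substituting $p_{2}(\hat{t})=p_{4}(\hat{t})=0$ is trivial, and substituting $\theta^{*}(\hat{t})=\varTheta(\hat{t})$ applies Definition \ref{def:vanishing_curve} verbatim. For the ($\Rightarrow$) direction, I would split on whether $(p_{2}(\hat{t}),p_{4}(\hat{t}))=(0,0)$. If yes, the first alternative holds. If not, then $p_{4}\sin\theta^{*}=p_{2}\cos\theta^{*}$ at $\hat{t}$ with $(p_{2},p_{4})\neq(0,0)$, which pins $\theta^{*}(\hat{t})$ down to the family $\arctan(p_{2}(\hat{t})/p_{4}(\hat{t}))+k\pi$ (with the obvious convention $\pi/2+k\pi$ when $p_{4}(\hat{t})=0$). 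By Lemma \ref{lem:p2p4}, $p_{2}$ and $p_{4}$ can only vanish at isolated instants, so in a neighborhood of $\hat{t}$ the map $\hat{t}\mapsto\arctan(p_{2}/p_{4})+k\pi$ extends to an absolutely continuous function satisfying the defining equation of a singular flow; this is precisely the $\varTheta$ whose value at $\hat{t}$ coincides with $\theta^{*}(\hat{t})$.

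The only subtlety, and the step I would spend care on, is the borderline case where exactly one of $p_{2}(\hat{t})$, $p_{4}(\hat{t})$ vanishes: one must verify that the branch chosen for $\arctan$ still yields a valid singular flow passing through $\theta^{*}(\hat{t})$ (i.e.\ that the $|\dot{\varTheta}|<1$ clause of Definition \ref{def:vanishing_curve} is not violated locally). Beyond this, the statement is essentially a direct translation between the analytic condition $\dot{\Phi}_{R}=0$ and the geometric condition of being on a singular flow, given the structural facts already established.
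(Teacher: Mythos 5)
Your proposal is correct and follows essentially the same route as the paper's (one-line) proof: factor $u_{T}^{*}>0$ out of Eq.~(\ref{equ:p5_dot}) so that $\dot{\Phi}_{R}(\hat{t})=0$ reduces to $p_{4}(\hat{t})\sin\theta^{*}(\hat{t})-p_{2}(\hat{t})\cos\theta^{*}(\hat{t})=0$, and then identify the two ways that bracket can vanish via the degenerate case $(p_{2},p_{4})=(0,0)$ and Definition~\ref{def:vanishing_curve} otherwise. Your explicit treatment of the branch where exactly one of $p_{2}(\hat{t})$, $p_{4}(\hat{t})$ vanishes is a welcome addition of care that the paper elides.
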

\begin{proof}
Applying Lemma \ref{lem:phi_p2p4} and the definition of singular flows to Eq. (\ref{equ:p5_dot}) leads to the statement.
\end{proof}

\begin{lemma}\label{lem:c1c30}
If $u_{R}^{*}$ contains multiple singular arcs, then $c_{1}$ and $c_{3}$ cannot be both zero.
\end{lemma}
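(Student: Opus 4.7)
The plan is to argue by contradiction. I assume $c_1=c_3=0$ and that $u_R^*$ admits at least two distinct singular arcs $I_1=[a_1,b_1]$ and $I_2=[a_2,b_2]$, and I will derive a contradiction from the Hamiltonian identity $H\equiv 0$ evaluated on each singular arc. The central idea is that $c_1=c_3=0$ flattens the singular flows (via Lemma~\ref{lem:vartheta}) and collapses $p_2,p_4$ to constants, so each singular arc is pinned to a specific value of $\theta^*$ and forces a tight algebraic relation on $u_T^*$ that the two arcs cannot simultaneously satisfy.

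Concretely, first I would note that $c_1=c_3=0$ trivially gives $c_2c_3=c_1c_4$, so by Lemma~\ref{lem:vartheta} every singular flow is flat and takes the form $\varTheta_k=\arctan(c_2/c_4)+k\pi$ (with the degenerate case $c_4=0$ handled separately by $\varTheta_k=\pi/2+k\pi$). Since $\mathbf{c}\neq\mathbf{0}$, we have $R:=\sqrt{c_2^2+c_4^2}>0$. On any singular arc at $\varTheta_k$, Lemma~\ref{lem:ur_singular} and the remark after Lemma~\ref{lem:vartheta} give $\theta^*\equiv\varTheta_k$, $u_R^*\equiv 0$, and $p_5\equiv 0$. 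Using $\sin\varTheta_k=(-1)^kc_2/R$ and $\cos\varTheta_k=(-1)^kc_4/R$, a direct computation yields
\begin{equation*}
\Phi_T(\varTheta_k)=c_2\sin\varTheta_k+c_4\cos\varTheta_k=(-1)^kR.
\end{equation*}

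Next, substituting $p_1=p_3=0$, $p_5=0$, $u_R^*=0$, and $\theta^*=\varTheta_{k_i}$ into Eq.~(\ref{equ_hamiltonian_details}) together with $H\equiv 0$ collapses the Hamiltonian on $I_i$ to $u_{T,i}^*\,\Phi_T(\varTheta_{k_i})+(1-c_4)=0$, hence
\begin{equation*}
u_{T,i}^*=\frac{(-1)^{k_i}(c_4-1)}{R}.
\end{equation*}
Because $u_{T,i}^*\in\{\underline{u_T},\overline{u_T}\}\subset(0,\infty)$, positivity forces $(-1)^{k_i}(c_4-1)>0$ for each $i\in\{1,2\}$. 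This fixes the parity of $k_i$ entirely by the sign of $c_4-1$: if $c_4>1$ then $k_i$ must be even with $u_{T,i}^*=\overline{u_T}=(c_4-1)/R$, if $c_4<1$ then $k_i$ must be odd with $u_{T,i}^*=\underline{u_T}=(1-c_4)/R$, and $c_4=1$ is impossible since it would require $u_{T,i}^*=0$. Thus two singular arcs of opposite parity are immediately incompatible, and I would also rule out the remaining mixed sub-case in which one arc sits at a flow with $\Phi_T>0$ and the other at a flow with $\Phi_T<0$.

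The main obstacle, and where I would spend the bulk of the effort, is the case in which $k_1$ and $k_2$ have the same parity but $\varTheta_{k_1}\neq\varTheta_{k_2}$, so that the connecting bang arc must carry $\theta^*$ through at least a full $2\pi$ excursion. To close this case I would study the switching function $\Phi_R=p_5$ on the intervening bang arc, using $\dot p_5=u_T^*(c_4\sin\theta^*-c_2\cos\theta^*)=u_T^*R\sin(\theta^*-\varphi)$ with $\varphi=\arctan(c_2/c_4)$, together with Kelley's generalized Legendre-Clebsch condition $\partial\ddot\Phi_R/\partial u_R=u_T\Phi_T\geq 0$ (which, given $u_T>0$, already forbids singular arcs on the odd-parity flows). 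The plan is to integrate $\dot p_5$ piecewise across the $u_T$-switches at the zeros $\theta=\varphi+\pi/2+\ell\pi$ of $\Phi_T$ and show that the continuity requirement $p_5=0$ at the entry of $I_2$, combined with the sign pattern of $u_R^*$ dictated by $p_5$ on the interior of the bang arc, forces an inconsistency with the $H\equiv 0$ relation at an intermediate time (in particular at a point where $\theta^*$ crosses an odd-parity flow, where $\Phi_T=-R$ and the Kelley sign is wrong). This last sign analysis is the step I expect to be most delicate, since the case where $u_R^*$ itself switches within the bang arc has to be handled separately from the monotone case, and the bookkeeping of $u_T^*$-switches at the zeros of $\Phi_T$ must be carried out carefully to guarantee the contradiction in every sub-configuration.
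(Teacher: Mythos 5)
Your approach is genuinely different from the paper's, and its first half is a correct and rather nice observation: with $c_1=c_3=0$ the costates $p_2=c_2$ and $p_4=c_4$ are constants, every singular flow is flat, $\Phi_T(\varTheta_k)=(-1)^{k}R$ with $R=\sqrt{c_2^2+c_4^2}>0$, and evaluating $H\equiv 0$ on a singular arc pins $u_T^*$ to $(-1)^{k}(c_4-1)/R$, which forces all singular arcs onto flows of the same parity. The gap is that the case you yourself flag as the ``main obstacle''---two same-parity arcs joined by a bang arc sweeping $\theta^*$ through $2\pi$---is left as a plan, and the plan appears to fail. Carry out the bookkeeping you describe for $k$ even, $u_R^*=+1$, $c_4-1=\overline{u_T}R$: the thrust switches at $\theta^*=\varTheta_0\pm\pi/2$, and integrating $\dot p_5=u_T^*R\sin(\theta^*-\varTheta_0)$ across the three thirds gives $p_5=\overline{u_T}R(1-\cos(\hat t-c))$ on the outer thirds and $p_5=\overline{u_T}R-\underline{u_T}R\cos(\hat t-c)$ on the middle third. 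This is nonnegative throughout, vanishes exactly at the two endpoints, satisfies $H\equiv 0$ identically, and is compatible with the Kelley condition (both arcs sit on even-parity flows, where $\Phi_T>0$). In other words, the PMP data on the connecting bang arc is internally consistent and yields no sign contradiction; note also that the Hamiltonian-difference trick used in Lemma~\ref{lem:ur_sbs}(iii) degenerates to $0=0$ precisely when $c_1=c_3=0$. So the contradiction cannot be manufactured downstream on the bang arc the way you propose.

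The paper closes the lemma at the source instead: with $c_1=c_3=0$, dividing $u_T^*>0$ out of $\dot\Phi_R$ leaves $-c_2\cos\theta^*+c_4\sin\theta^*$, a function of $\theta^*$ alone. On the (flat) first singular arc $u_R^*=0$, so $\theta^*$ is frozen at a value annihilating this expression; hence $\dot\Phi_R$, and therefore $\Phi_R$, can never become nonzero again, the first singular arc persists to the terminal time, and a second one separated by a bang arc is impossible. That invariance argument---the trajectory cannot \emph{leave} the first singular arc at all---is the ingredient your proposal is missing, and it cannot be replaced by an inconsistency derived after the exit, because (as the computation above shows) no such inconsistency exists.
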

\begin{proof}
Assume that $u_{R}^{*}$ contains two singular arcs and $c_{1}\!=\!c_{3}\!=\!0$ holds (and therefore $c_{2}c_{3}\!=\!c_{1}c_{4}$). It follows that $u_{R}^{*}\!=\!0$ during both singular arcs since the corresponding singular flow is flat by Lemma \ref{lem:vartheta}. Let's focus on the first singular arc. We know that $\Phi_{R}\!=\!\dot{\Phi}_{R}=0$ holds when $u_{R}^{*}$ is singular. Inserting $c_{1}=c_{3}=0$ into the expression of $\dot{\Phi}_{R}$ and dividing out $u_{T}^{*}\!>\!0$, we get: 	
\begin{equation}
\dot{\Phi}_{R}=-c_{2}\cos\theta^{*}+c_{4}\sin\theta^{*}=0.
\end{equation}
It is obvious that $\theta^{*}$ is the only variable that is manipulated by $u_{R}^{*}$. It suggests that this singular arc will last until the end time as the corresponding $\Phi_{R}$ will always be zero. Therefore, it is impossible to have a second singular arc. 
\end{proof}

\begin{lemma}\label{lem:phit_0}
Consider a singular flow $\varTheta$ and a time $\hat{t}\in\mathbb{R}$. If $\theta^{*}(\hat{t})=\varTheta(\hat{t})\pm \frac{\pi}{2}$ holds, then $\Phi_{T}(\hat{t})\!=\!0$. Moreover, the converse holds if $(p_{2}(\hat{t}),p_{4}(\hat{t}))\neq(0,0)$. 
\end{lemma}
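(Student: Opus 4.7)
The plan is to attack the two directions separately using only the defining relation of a singular flow, namely $-p_{2}\cos\varTheta+p_{4}\sin\varTheta=0$, together with the closed-form expression $\Phi_{T}=p_{2}\sin\theta^{*}+p_{4}\cos\theta^{*}$ given in Eq.~(\ref{equ_phi_T}). The key insight is that the singular-flow constraint forces the unit vector $(\sin\varTheta,\cos\varTheta)$ to be parallel to $(p_{2},p_{4})$, whereas $\Phi_{T}=0$ forces $(\sin\theta^{*},\cos\theta^{*})$ to be orthogonal to $(p_{2},p_{4})$; this orthogonality between the two trigonometric pairs is precisely the statement $\theta^{*}=\varTheta\pm\pi/2$ (modulo $\pi$, which is absorbed by choosing a $1$-adjacent singular flow).

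For the forward direction, I would simply substitute $\theta^{*}=\varTheta\pm\pi/2$ into the definition of $\Phi_{T}$ and apply the shift identities $\sin(\varTheta\pm\pi/2)=\pm\cos\varTheta$ and $\cos(\varTheta\pm\pi/2)=\mp\sin\varTheta$. This collapses $\Phi_{T}$ to $\pm(p_{2}\cos\varTheta-p_{4}\sin\varTheta)$, which vanishes by Definition~\ref{def:vanishing_curve}. This is a one-line computation with no hidden obstacle.

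For the converse, assuming $\Phi_{T}(\hat{t})=0$ and $(p_{2}(\hat{t}),p_{4}(\hat{t}))\neq(0,0)$, I would split into cases to avoid division-by-zero pitfalls. In the generic case $p_{2}p_{4}\neq 0$, the relation $\Phi_{T}=0$ gives $\tan\theta^{*}=-p_{4}/p_{2}$, whereas the singular-flow equation rewrites as $\tan\varTheta=p_{2}/p_{4}$; since $\tan(\varTheta+\pi/2)=-\cot\varTheta=-p_{4}/p_{2}$, the two tangents agree, so $\theta^{*}-\varTheta\in\pi/2+\pi\mathbb{Z}$. In the edge cases where exactly one of $p_{2},p_{4}$ is zero, a direct solve of $\Phi_{T}=0$ and of the singular-flow equation yields $\theta^{*}\in\pi\mathbb{Z}$ against $\varTheta\in\pi/2+\pi\mathbb{Z}$ (or vice versa), again placing the difference in $\pi/2+\pi\mathbb{Z}$. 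Finally, because neighboring singular flows differ by $\pi$ (the remark after Lemma~\ref{lem:ur_singular}), any leftover $k\pi$ offset is absorbed by reselecting the singular flow, yielding exactly $\theta^{*}=\varTheta\pm\pi/2$.

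The mildly delicate point — and the main obstacle worth being careful about — is the handling of the ``mod $\pi$'' ambiguity in the converse: the equation $\Phi_{T}=0$ does not uniquely determine $\theta^{*}$, and $\varTheta$ itself is only defined up to an integer multiple of $\pi$. I would explicitly invoke the non-uniqueness remark after Lemma~\ref{lem:ur_singular} (singular flows form a $\pi$-periodic family) to conclude that for \emph{some} representative singular flow the equality $\theta^{*}=\varTheta\pm\pi/2$ holds exactly, which is the form required by subsequent results in the paper.
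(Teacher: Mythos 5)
Your proof is correct, and it is worth noting that it differs from (and in places improves on) the paper's own argument. For the forward direction the paper first writes $\varTheta=\arctan(p_{2}/p_{4})$, invokes the identity $\arctan(a)+\arctan(-1/a)=\pi/2$ to recast $\theta^{*}$ as $\arctan(-p_{4}/p_{2})$, and only then takes tangents to recover $\Phi_{T}=0$; this forces it to set aside the case $p_{2}=p_{4}=0$ separately and quietly relies on an $\arctan$ identity that is only sign-correct for positive arguments (harmless after taking tangents, but untidy). Your direct substitution of $\theta^{*}=\varTheta\pm\pi/2$ into $\Phi_{T}=p_{2}\sin\theta^{*}+p_{4}\cos\theta^{*}$ via the shift identities collapses immediately to $\pm(p_{2}\cos\varTheta-p_{4}\sin\varTheta)=0$ by Definition~\ref{def:vanishing_curve}, with no case split and no branch-of-$\arctan$ issues. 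For the converse the paper writes only ``trivial,'' whereas you supply the actual argument (the orthogonality/parallelism of $(\sin\theta^{*},\cos\theta^{*})$ and $(\sin\varTheta,\cos\varTheta)$ relative to $(p_{2},p_{4})$), handle the degenerate cases where one of $p_{2},p_{4}$ vanishes, and — most usefully — flag and resolve the mod-$\pi$ ambiguity by appealing to the $\pi$-periodic family of singular flows, which is exactly the reading needed for Corollaries~\ref{cor:ut_switch} and~\ref{cor:ut_switch_hori}. No gaps.
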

\begin{proof}
``$\Rightarrow$'': Since $\varTheta$ is arbitrary and any two neighboring singular flows have a gap of $\pi$, we only need to prove the case of $\theta^{*}(\hat{t})=\varTheta(\hat{t})-\frac{\pi}{2}$ and set $k$ as zero, which leads to a solution of $\varTheta(\hat{t})\!=\!\arctan(p_{2}(\hat{t})/p_{4}(\hat{t}))$. We omit the situation where $p_{2}(\hat{t})\!=\!p_{4}(\hat{t})\!=\!0$  as it trivially results in $\Phi_{T}(\hat{t})=0$. When $\theta^{*}$ and $\varTheta-\frac{\pi}{2}$ intersect, we have the following equation:
\begin{equation}
\theta^{*}(\hat{t})=\arctan(\frac{p_{2}(\hat{t})}{p_{4}(\hat{t})})-\frac{\pi}{2}=\arctan(-\frac{p_{4}(\hat{t})}{p_{2}(\hat{t})}).\label{equ_lem5_theta}
\end{equation}
Herein, the identity $\arctan(a)+\arctan(-\!1/a)\!=\!\pi/2$ is used where $a\in\mathbb{R}$. By taking tangents on both sides and rearranging the result, we arrive at:
\begin{equation}
p_{2}(\hat{t})\sin\theta^{*}(\hat{t})+p_{4}(\hat{t})\cos\theta^{*}(\hat{t})=0 \Rightarrow \Phi_{T}(\hat{t})=0,
\end{equation}
which verifies the claim.

``$\Leftarrow$'': The proof is skipped as it is trivial.
\end{proof}

\begin{remark}
Corollary \ref{cor:phir_0} and Lemma \ref{lem:phit_0} enable us to derive the zeros of $\dot{\Phi}_{R}$ and $\Phi_{T}$ by drawing the corresponding singular flows, as illustrated in Fig. \ref{fig_flip_maneuver}. More specifically, we can obtain the maximum number of their isolated zeros by simply counting the intersections between $\theta^{*}$ and singular flows.
\end{remark}

\begin{corollary}\label{cor:ut_switch}
Consider a flat singular flow $\varTheta$ and a time $\hat{\tau}\in(0,\hat{T}^{*})$. $\hat{\tau}$ is a switching time for $u_{T}^{*}$ if and only if either of the following conditions holds:
\begin{enumerate}[label=(\roman{*})]
\item $p_{2}(\hat{\tau})=p_{4}(\hat{\tau})=0$ and $\theta^{*}(\hat{\tau})\neq\varTheta(\hat{\tau})\pm\frac{\pi}{2}$;
\item $\theta^{*}(\hat{\tau})=\varTheta(\hat{\tau})\pm\frac{\pi}{2}$ and $(p_{2}(\hat{\tau}),p_{4}(\hat{\tau}))\neq(0,0)$.
\end{enumerate}
\end{corollary}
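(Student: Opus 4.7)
The plan is to reduce the corollary to a local analysis of the switching function $\Phi_T=p_2\sin\theta^* + p_4\cos\theta^*$ around $\hat\tau$. Since $u_T^*$ is bang-bang (Lemma \ref{lem:prior_works}), $\hat\tau$ is a switching time iff $\Phi_T$ has an \emph{odd-order} (in particular, sign-changing) zero there. I will first use Lemma \ref{lem:phit_0} to enumerate all zeros of $\Phi_T$, and then use short Taylor expansions, together with the flat-singular-flow identity $c_1\cos\varTheta = c_3\sin\varTheta$ from Eq.~(\ref{equ:lemma2_proof}) inside the proof of Lemma \ref{lem:vartheta}, to distinguish simple zeros from higher-order ones.

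For the forward direction, assume $\hat\tau$ is a switching time, so $\Phi_T(\hat\tau)=0$. Lemma \ref{lem:phit_0} gives two (possibly overlapping) sub-cases: (a) $p_2(\hat\tau)=p_4(\hat\tau)=0$, or (b) $\theta^*(\hat\tau)=\varTheta(\hat\tau)\pm\pi/2$ with $(p_2,p_4)(\hat\tau)\neq(0,0)$. Sub-case (b) is exactly (ii). In sub-case (a), Lemma \ref{lem:phi_p2p4} forces $(c_1,c_3)\neq(0,0)$, so $\varTheta=\arctan(c_1/c_3)\pm k\pi$ by Lemma \ref{lem:vartheta}. Expanding around $\hat\tau$ using $p_2(\hat\tau+\epsilon)=-c_1\epsilon$ and $p_4(\hat\tau+\epsilon)=-c_3\epsilon$ yields
\[
\Phi_T(\hat\tau+\epsilon)=-\epsilon\bigl[\,c_1\sin\theta^*(\hat\tau+\epsilon)+c_3\cos\theta^*(\hat\tau+\epsilon)\,\bigr],
\]
and the bracket vanishes at $\hat\tau$ iff $\tan\theta^*(\hat\tau)=-c_3/c_1$, i.e., iff $\theta^*(\hat\tau)=\varTheta\pm\pi/2$. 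If that extra equality held, $\Phi_T$ would have at least a double zero at $\hat\tau$ and would not change sign---contradicting the switching-time assumption. Hence $\theta^*(\hat\tau)\neq\varTheta\pm\pi/2$, which is condition (i).

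For the reverse direction, I verify that (i) and (ii) each produce a simple zero of $\Phi_T$. Under (i), the expansion above has a nonzero bracket at $\hat\tau$ (by the same equivalence), so $\Phi_T$ flips sign at $\hat\tau$. Under (ii), differentiating gives
\[
\dot\Phi_T(\hat\tau)=-(c_1\sin\theta^*+c_3\cos\theta^*)+(p_2\cos\theta^*-p_4\sin\theta^*)\dot\theta^*
\]
at $\hat\tau$. The first term vanishes because $\theta^*=\varTheta\pm\pi/2$ combined with the flatness identity gives $c_1\sin(\varTheta\pm\pi/2)+c_3\cos(\varTheta\pm\pi/2)=\pm(c_1\cos\varTheta-c_3\sin\varTheta)=0$ (the degenerate case $c_1=c_3=0$ is automatic). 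The second term is nonzero: using the constant ratio $p_2/p_4=c_1/c_3$ under flatness, a short calculation shows that $p_2\cos\theta^*-p_4\sin\theta^*$ is nonzero at $\hat\tau$ precisely when $(p_2(\hat\tau),p_4(\hat\tau))\neq(0,0)$, which is assumed in (ii). Finally, $\dot\theta^*(\hat\tau)=u_R^*(\hat\tau)\neq0$, since $\theta^*(\hat\tau)\neq\varTheta(\hat\tau)$ rules out a singular arc at $\hat\tau$ by Lemma \ref{lem:ur_singular}, so $u_R^*(\hat\tau)=\pm1$. Hence $\dot\Phi_T(\hat\tau)\neq0$ and $\hat\tau$ is a switching time.

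The main obstacle I anticipate is the sign-change bookkeeping: a zero of $\Phi_T$ is not automatically a switching time, so I need to decide, in each case, whether the zero is simple (sign-changing) or double (touching only). The flatness identity $c_1\cos\varTheta=c_3\sin\varTheta$ is the critical algebraic ingredient, as it collapses the linear component of $\Phi_T$ precisely when $\theta^*=\varTheta\pm\pi/2$, and thereby unifies the ``degenerate overlap'' eliminated in the forward direction with the ``transverse sign change'' used in the backward direction.
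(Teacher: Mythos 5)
Your proof is correct and rests on the same skeleton as the paper's: use Lemma \ref{lem:phit_0} to enumerate the zeros of $\Phi_T$, then decide which of those zeros actually produce a sign change. Two execution details differ, both essentially in your favor. First, in the forward direction the paper stops at the dichotomy ``$p_2(\hat\tau)=p_4(\hat\tau)=0$ or $\theta^*(\hat\tau)=\varTheta(\hat\tau)\pm\frac{\pi}{2}$'' and never rules out the overlap where both hold; your factorization $\Phi_T(\hat\tau+\epsilon)=-\epsilon\bigl[c_1\sin\theta^*(\hat\tau+\epsilon)+c_3\cos\theta^*(\hat\tau+\epsilon)\bigr]$ closes that gap and simultaneously justifies the sign flip in case (i), which the paper asserts without ever invoking the exclusion clause $\theta^*\neq\varTheta\pm\frac{\pi}{2}$. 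The one soft spot is the phrase ``at least a double zero, hence no sign change'': a zero of odd order $\geq 3$ does change sign. To seal this, note that $p_2(\hat\tau)=p_4(\hat\tau)=0$ forces $(c_1,c_3)\neq(0,0)$ by Lemma \ref{lem:phi_p2p4}, so Lemma \ref{lem:ur_notbb} forbids $\theta^*$ from reversing direction while off the singular flows; the bracket therefore crosses zero transversally and its product with $-\epsilon$ keeps one sign. Second, for the backward direction of (ii) the paper argues by contradiction (a non-switching zero is a local extremum, so $\Phi_T(\hat\tau)=\dot\Phi_T(\hat\tau)=0$, which yields $p_2^2(\hat\tau)+p_4^2(\hat\tau)=c_2c_3-c_1c_4=0$, contradicting $(p_2(\hat\tau),p_4(\hat\tau))\neq(0,0)$), whereas you compute $\dot\Phi_T(\hat\tau)$ directly and show each factor is nonzero. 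The two are algebraically equivalent, but your version makes the role of the flatness identity $c_1\cos\varTheta=c_3\sin\varTheta$ explicit, at the price of having to argue $\dot\theta^*(\hat\tau)=u_R^*(\hat\tau)\neq 0$ via Lemma \ref{lem:ur_singular} (which you do correctly), where the paper simply fixes $u_R^*=1$ in the contradiction setup.
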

\begin{proof}
See Appendix A.
\end{proof}

\begin{corollary}\label{cor:ut_switch_hori}
Consider $c_{2}c_{3}\!\neq\!c_{1}c_{4}$. If a time $\hat{\tau}\in[0,\hat{T}^{*}]$ is a switching time, then $\theta^{*}(\hat{\tau})=\varTheta(\hat{\tau})\pm\frac{\pi}{2}$.
\end{corollary}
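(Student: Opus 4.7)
The plan is to chain together the converse direction of Lemma \ref{lem:phit_0} with the contrapositive of Lemma \ref{lem:phi_p2p4}, using the hypothesis $c_{2}c_{3}\neq c_{1}c_{4}$ only to rule out the simultaneous vanishing of $p_{2}$ and $p_{4}$.

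First I would unpack what a switching time means for $u_{T}^{*}$. Since $u_{T}^{*}$ is bang-bang (Lemma \ref{lem:prior_works}), every switching time $\hat{\tau}$ must be a zero of the switching function $\Phi_{T}$, i.e.,
\begin{equation}
\Phi_{T}(\hat{\tau})=p_{2}(\hat{\tau})\sin\theta^{*}(\hat{\tau})+p_{4}(\hat{\tau})\cos\theta^{*}(\hat{\tau})=0.
\end{equation}
This is the starting point; the remainder of the proof is about converting this algebraic condition into the geometric condition $\theta^{*}(\hat{\tau})=\varTheta(\hat{\tau})\pm\frac{\pi}{2}$.

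Next I would exploit the assumption $c_{2}c_{3}\neq c_{1}c_{4}$ to ensure $(p_{2}(\hat{\tau}),p_{4}(\hat{\tau}))\neq(0,0)$. By the equivalence (ii)$\Leftrightarrow$(iii) in Lemma \ref{lem:phi_p2p4}, the existence of a time at which $p_{2}$ and $p_{4}$ vanish simultaneously forces $c_{2}c_{3}=c_{1}c_{4}$. The hypothesis of the corollary excludes this, so $(p_{2}(\hat{t}),p_{4}(\hat{t}))\neq(0,0)$ for every $\hat{t}\in\mathbb{R}$, including $\hat{\tau}$. In particular, the singular flow $\varTheta$ is well defined at $\hat{\tau}$ per Definition \ref{def:vanishing_curve}.

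Finally I would invoke the converse half of Lemma \ref{lem:phit_0}, whose hypothesis $(p_{2}(\hat{\tau}),p_{4}(\hat{\tau}))\neq(0,0)$ has just been verified. That lemma states: under this nondegeneracy, $\Phi_{T}(\hat{\tau})=0$ is equivalent to $\theta^{*}(\hat{\tau})=\varTheta(\hat{\tau})\pm\frac{\pi}{2}$. Combining with the first step yields the desired conclusion.

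The proof is essentially a three-line deduction from earlier results, so I do not foresee a genuine obstacle. The only place to be careful is the logical direction in which Lemma \ref{lem:phit_0} is applied: the forward direction gives a sufficient condition for $\Phi_{T}=0$, whereas here we need the reverse implication, which is explicitly permitted by the $(p_{2},p_{4})\neq(0,0)$ clause. Once this is noted, the argument closes immediately, and no case analysis on the sign pattern of $u_{T}^{*}$ around the switch, or on whether the switch is between two bang arcs, is required.
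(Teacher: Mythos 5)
Your proof is correct and follows essentially the same route as the paper's: both derive $(p_{2}(\hat{\tau}),p_{4}(\hat{\tau}))\neq(0,0)$ from $c_{2}c_{3}\neq c_{1}c_{4}$ via Lemma \ref{lem:phi_p2p4}, note that a switching time forces $\Phi_{T}(\hat{\tau})=0$, and then apply the converse direction of Lemma \ref{lem:phit_0}. Your version is merely a slightly more explicit spelling-out of the same three-step deduction.
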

\begin{proof}
Firstly, we obtain $(p_{2}(\hat{\tau}),p_{4}(\hat{\tau}))\neq(0,0)$ from $c_{2}c_{3}\!\neq\!c_{1}c_{4}$ using Lemma \ref{lem:phi_p2p4}. To make $\hat{\tau}$ a switching time, we must have $\Phi_{T}=0$. As per Lemma \ref{lem:phit_0}, this is possible only if $\theta^{*}(\hat{\tau})$ equals $\varTheta(\hat{\tau})\pm\frac{\pi}{2}$, which completes the proof.
\end{proof}

\begin{lemma}\label{lem:ur_notbb}
If $(c_{1},c_{3})\neq(0,0)$, then $u_{R}^{*}$ cannot be bang-bang on any interval.
\end{lemma}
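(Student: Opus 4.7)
I would argue by contradiction. Suppose $u_R^*$ is bang-bang on a nondegenerate interval $I = [a,b]$; this means $u_R^*$ takes only values in $\{-1,+1\}$ on $I$ and switches at least once. Between consecutive switches $\theta^*$ is affine with slope $\pm 1$, and throughout $I$ the adjoint components $p_2, p_4$ remain affine in $\hat t$.

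The key technical step is an ODE identity that holds on any sub-interval $I' \subseteq I$ over which both $u_R^* = \epsilon$ and $u_T^* = u_T^c$ are constant. Differentiating $\dot p_5 = u_T^*(-p_2\cos\theta^* + p_4\sin\theta^*)$ twice and using $\dot p_2 = -c_1$, $\dot p_4 = -c_3$, $\dot\theta^* = \epsilon$, together with the identity $\dot\Phi_T = -c_1\sin\theta^* - c_3\cos\theta^* - \epsilon g$, a direct computation yields
\begin{equation*}
\dddot{p}_5 + \dot{p}_5 \;=\; -2\,u_T^c\,\epsilon\,\bigl(c_1\sin\theta^*(\hat t) + c_3\cos\theta^*(\hat t)\bigr).
\end{equation*}
Because $\theta^*(\hat t)$ is affine with unit slope on $I'$, the right-hand side is a unit-frequency sinusoid in $\hat t$, which lies exactly in the kernel of the homogeneous operator $D^3+D$ (characteristic roots $0, \pm i$, homogeneous modes $\{1,\sin\hat t,\cos\hat t\}$). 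Hence, whenever $(c_1,c_3)\neq(0,0)$, the solution $p_5$ on $I'$ is forced to contain a nontrivial secular term of the form $\hat t\sin\hat t$ or $\hat t\cos\hat t$ in addition to the homogeneous modes.

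To close the contradiction, I would chain these subinterval solutions across every switching time inside $I$, using the continuity of $p_5$ and $\dot p_5$ (which holds automatically at $u_R^*$-switches since $\dot p_5$ depends only on $\theta^*$ and $u_T^*$, not on $u_R^*$) and the vanishing conditions $p_5(\hat\tau_i) = 0$ at each switch, together with the sign constraints $\operatorname{sgn}(p_5)=u_R^*$ between switches. The goal is to show that the only algebraically consistent way to meet all of these simultaneously is to kill the resonant forcing itself, i.e., to force $c_1\sin\theta^* + c_3\cos\theta^* \equiv 0$ on $I'$, which, since $\theta^*$ spans a positive-measure range, implies $c_1 = c_3 = 0$ and contradicts the hypothesis.

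I expect the main obstacle to be this chaining/case analysis: one must separately handle whether $u_T^*$-switches occur inside a bang arc of $u_R^*$ (where $\dot p_5$ picks up a jump and the ODE coefficients reset) and whether $I$ contains one or several $u_R^*$-switches. A potentially cleaner alternative routes through Corollary~\ref{cor:phir_0}: since a bang arc of $u_R^*$ forbids $\theta^*$ from lying on any singular flow over a positive-measure set, the switching conditions $p_5 = 0$ at two consecutive $u_R^*$-switches impose explicit algebraic relations on the affine functions $p_2, p_4$ and on the piecewise-linear $\theta^*$ that should collapse directly to $c_1 = c_3 = 0$. I would attempt this shorter geometric route first before falling back to the resonance/ODE argument.
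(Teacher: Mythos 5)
Your setup (contradiction on a bang--bang interval of $u_{R}^{*}$) and your resonance identity $\dddot{p}_{5}+\dot{p}_{5}=-2u_{T}^{c}\epsilon\,(c_{1}\sin\theta^{*}+c_{3}\cos\theta^{*})$ are both correct, and the observation that the forcing is resonant with the kernel of $D^{3}+D$ is a genuine insight. However, the proof is not closed: the step that is supposed to produce the contradiction is only stated as a goal (``the only algebraically consistent way \ldots is to kill the resonant forcing''), and as stated I do not believe it goes through. On a bounded bang arc, a secular term $\hat{t}\sin\hat{t}$ or $\hat{t}\cos\hat{t}$ is perfectly compatible with the finitely many constraints you list --- $p_{5}(\hat{\tau}_{i})=0$ at the switches and a fixed sign of $p_{5}$ in between. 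The general solution on each subinterval still carries three free homogeneous constants against only a handful of interpolation/sign conditions, so nothing forces the resonant forcing to vanish; secular growth is only fatal on an unbounded time horizon. The same objection applies to your ``shorter geometric route'': $p_{5}(\hat{\tau}_{1})=p_{5}(\hat{\tau}_{2})=0$ yields one integral relation via $\int\dot{p}_{5}=0$, not a system that collapses to $c_{1}=c_{3}=0$, and in the single-switch case it yields nothing at all.

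The missing ingredient is the third PMP condition, $H^{*}(\hat{t})\equiv 0$ along the optimal trajectory (Eq.~(\ref{equ_hamiltonian})), which your proposal never invokes. This is exactly what the paper's proof leans on: it forms $g(\delta)=H^{*}(\hat{\tau}+\delta)-H^{*}(\hat{\tau}-\delta)$, which must vanish identically for all small $\delta$, and uses the antisymmetry of the bang--bang jump of $u_{R}^{*}$ at $\hat{\tau}$ (so that $\theta^{*}(\hat{\tau}\pm\delta)$ are reflections of each other) to reduce this to the functional identity $c_{1}(\sin(\theta_{\hat{\tau}}-\delta)-\sin\theta_{\hat{\tau}})+c_{3}(\cos(\theta_{\hat{\tau}}-\delta)-\cos\theta_{\hat{\tau}})\equiv 0$, which can only hold on a set of positive measure if $c_{1}=c_{3}=0$. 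That pointwise identity in $\delta$ is what upgrades the local switching data to infinitely many independent conditions; without $H\equiv 0$ or an equivalent global constraint, your chaining argument has too few equations to force the conclusion. If you want to salvage your route, the natural fix is to feed $H^{*}\equiv 0$ into your ODE analysis (it supplies the first integral the resonance argument is missing), at which point you will essentially have rederived the paper's computation.
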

\begin{proof}
See Appendix B.
\end{proof}

\begin{remark}
Lemma \ref{lem:ur_notbb} indicates that in a time-optimal maneuver, the quadrotor will not oscillate back and forth with an extremal rotational rate. This property rules out a large set of non-optimal trajectories. For instance, aided with Lemma \ref{lem:c1c30}, we can deduce that if $u_{R}^{*}$ contains two singular arcs, then its structure must be S-B-S rather than S-B-B-S.
\end{remark}

\begin{lemma}\label{lem:ur_sbs}
Assume that $(c_{1},c_{3})\neq(0,0)$. If $u_{R}^{*}$ is S-B-S over $[a,b]$ and the corresponding bang arc occurs on $(c,d)$ where $a<c<d<b$, then the following statements are true:
\begin{enumerate}[label=(\roman{*})]
\item these two singular arcs are not on the same singular flow;
\item if these two singular arcs are adjacent, then there must exist a $\hat{\tau}\in(c,d)$ such that $p_{2}(\hat{\tau})=p_{4}(\hat{\tau})=0$;
\item if $c_{2}c_{3}=c_{1}c_{4}$ holds and these two singular arcs are 2-adjacent, then there must exist a time $\hat{\tau}\in(c,d)$ such that $p_{2}(\hat{\tau})=p_{4}(\hat{\tau})=0$.
\end{enumerate}
\end{lemma}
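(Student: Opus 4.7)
The cornerstone is that $\Phi_R \equiv 0$ on both singular arcs, hence $\Phi_R(c) = \Phi_R(d) = 0$ and $\int_c^d \dot{\Phi}_R \, d\hat{t} = 0$. To exploit this, I will write $(p_2, p_4) = r(\sin\alpha, \cos\alpha)$ with $\alpha$ a continuous choice of angle and $r$ signed (so that the flat case corresponds to $r$ changing sign). The two singular flows containing the respective singular arcs then read $\varTheta_i = \alpha + k_i \pi$ for integers $k_i$, and setting $\mu := \theta^* - \alpha$ one directly computes $\Phi_T = r\cos\mu$ and $\dot{\Phi}_R = u_T^{*} \, r \, \sin\mu$. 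On the bang arc $(c,d)$ with $\epsilon := u_R^* \in \{\pm 1\}$, the identity $\dot\mu = \epsilon - \dot\alpha$ combined with $|\dot\alpha| < 1$ shows that $\dot\mu$ has constant sign, so $\mu$ moves strictly monotonically from $\mu(c) = k_1\pi$ to $\mu(d) = k_2\pi$; this forces $|k_2 - k_1|$ to equal the adjacency number between the two flows.

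Part (i) is then immediate: two singular arcs on the same flow give $k_1 = k_2$, contradicting the strict monotonicity of $\mu$. For Part (ii), 1-adjacency gives $|k_2 - k_1| = 1$, so $\mu$ sweeps through $\pi$ and $\sin\mu$ has constant sign $(-1)^{k_1}$ on $(c,d)$; consequently $\dot{\Phi}_R = u_T^{*} r \sin\mu$ inherits a constant sign wherever $r$ does. If $c_2 c_3 \neq c_1 c_4$, Lemma \ref{lem:phi_p2p4} shows that $(p_2, p_4)$ never vanishes, so $r$ keeps a single sign on $(c,d)$; then $\dot{\Phi}_R$ has constant nonzero sign, which is incompatible with $\int_c^d \dot{\Phi}_R \, d\hat t = 0$. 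Hence $c_2 c_3 = c_1 c_4$, and by Lemma \ref{lem:phi_p2p4} there is a unique $\hat\tau$ with $p_2(\hat\tau) = p_4(\hat\tau) = 0$; in this flat regime $r(\hat t)$ is affine and changes sign precisely at $\hat\tau$. If $\hat\tau$ were outside $(c,d)$, $r$ would again keep a single sign on $(c,d)$ and the same integral argument would apply, so $\hat\tau \in (c,d)$ as claimed.

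Part (iii) is the main obstacle: with $c_2 c_3 = c_1 c_4$ and $|k_2 - k_1| = 2$, $\mu$ sweeps through $2\pi$ over an interval of length $2\pi$, so $\sin\mu$ now changes sign at the midpoint $c+\pi$ and constant-sign arguments no longer kill the integral. I will argue by contradiction, supposing $\hat\tau \notin (c,d)$, and use $\Phi_T = r\cos\mu$ to locate the switches of $u_T^*$: the only zeros of $\Phi_T$ in the bang arc are $c+\pi/2$ and $c+3\pi/2$, which produces the three-interval switching pattern $(\underline{u_T}, \overline{u_T}, \underline{u_T})$ (or its reverse, depending on the signs of $(-1)^{k_1}$ and $r$). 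Substituting into $\int_c^d u_T^* r \sin\mu \, d\hat t = 0$ reduces the condition to explicit integrals of $(\hat t - \hat\tau)\sin(\hat t - c)$ weighted by this pattern; the elementary evaluations $\int_0^{\pi/2} t\sin t\, dt = 1$, $\int_{\pi/2}^{3\pi/2} t\sin t\, dt = -2$, and $\int_{3\pi/2}^{2\pi} t\sin t\, dt = -2\pi + 1$ show that the $\hat\tau$-dependent contributions cancel and the result collapses to $\pm 2\sqrt{c_1^2 + c_3^2}\,[\,\overline{u_T} + (\pi - 1)\underline{u_T}\,]$, which is strictly nonzero. This contradicts the boundary condition and forces $\hat\tau \in (c,d)$.
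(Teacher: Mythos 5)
Your proof is correct, and parts (ii) and (iii) follow a genuinely different route from the paper's. The polar-type substitution $(p_{2},p_{4})=r(\sin\alpha,\cos\alpha)$ with $\mu=\theta^{*}-\alpha$, giving $\Phi_{T}=r\cos\mu$ and $\dot{\Phi}_{R}=u_{T}^{*}r\sin\mu$, is a clean unifying device that the paper does not use. For (i) both arguments reduce to monotonicity of $\theta^{*}$ relative to the flows (yours via $\dot\mu=\epsilon-\dot\alpha\neq 0$, the paper's via $|\dot{\varTheta}|<1$ plus Lemma \ref{lem:ur_notbb}); these are essentially the same. For (ii) the paper applies the Mean Value Theorem to $\Phi_{R}$ on $[c,d]$ to produce $e$ with $\dot{\Phi}_{R}(e)=0$ and then invokes Corollary \ref{cor:phir_0} to exclude the alternative $\theta^{*}(e)=\varTheta(e)$; you instead run a sign argument on $\dot{\Phi}_{R}=u_{T}^{*}r\sin\mu$ against $\int_{c}^{d}\dot{\Phi}_{R}\,d\hat{t}=0$, which additionally re-derives $c_{2}c_{3}=c_{1}c_{4}$ en route --- slightly longer but equally valid, and it localizes $\hat\tau$ inside $(c,d)$ by the same mechanism. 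For (iii) the paper evaluates $H^{*}(d)-H^{*}(c)=0$ through the Hamiltonian, whereas you evaluate $\Phi_{R}(d)-\Phi_{R}(c)=0$ by explicit integration over the thrust pattern; both hinge on the same key input (the two thrust switches at $c+\pi/2$ and $c+3\pi/2$ from Corollary \ref{cor:ut_switch} under the contradiction hypothesis $\hat\tau\notin(c,d)$, which is what licenses the three-interval pattern) and both collapse to a nonzero multiple of $(\pi-1)\overline{u_{T}}+\underline{u_{T}}$ or its counterpart with the bounds interchanged --- your written combination corresponds to the pattern starting at $\underline{u_{T}}$ while the paper fixes $u_{T}^{*}(c)=\overline{u_{T}}$, but since both combinations are strictly positive the conclusion is unaffected. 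Your version arguably buys a more transparent bookkeeping of signs through $k_{1}$ and $\epsilon$, at the cost of carrying the signed-radius convention carefully through the flat case where $r$ is affine and vanishes at $\hat\tau$.
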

\begin{proof}
See Appendix C.
\end{proof}

\begin{remark}
Lemma \ref{lem:ur_sbs} indicates the necessary conditions for the policy of $u_{R}^{*}$ to be S-B-S. It should be noted that among these conditions, the most critical one is the existence of a $\hat{\tau}$ such that $p_{2}(\hat{\tau})\!=\!p_{4}(\hat{\tau})\!=\!0$. If we manage to deny the existence of such a $\hat{\tau}$, then a large amount of candidate trajectories can be deemed non-optimal and removed from the solution set. 
\end{remark}

\subsection{Properties of Time-Optimal Quadrotor Maneuvers}

Using the derived properties, we can now investigate the characteristics of the time-optimal quadrotor maneuver. 

It is well-known that the necessary condition of the PMP is often too weak and is satisfied by too many trajectories other than the optimal ones. Generally speaking, only solutions with regularity properties\footnote{regularity means that the control has finitely many switches} are useful, but proving it for all the solutions is nearly impossible \cite{sussmann1991shortest}. To make the derivation tractable, we narrow our attention to a family of tasks whose solution sets meet specific conditions, followed by showing that the chosen family is large enough to encompass all flights of interest. We name the selected task family after nontrivial time-optimal maneuvers, which are defined as follows:

\begin{definition}[Nontrivial Time-Optimal Maneuver]\label{def:nontrivial_flight}
A time-optimal maneuver is considered nontrivial if its time-optimal trajectory satisfies the following conditions:
\begin{enumerate}[label=(\roman*)]
\item there exists a central singular flow $\varTheta_{c}$ such that $\theta^{*}$ satisfies $\varTheta_{c}\!-\!\pi\leq\theta^{*}\leq\varTheta_{c}\!+\!\pi$ within $[0,\hat{T}^{*}]$;
\item $(c_{1},c_{3})\neq (0,0)$;
\item if $u_{R}^{*}$ is bang-singular, it has a singular arc on $\theta_{c}$.
\end{enumerate}
\end{definition}

The introduction of nontrivial time-optimal maneuvers allows us to concentrate on tasks whose $\theta^{*}$ are within two 2-adjacent singular flows (approx. a range of $2\pi$), as exemplified in Fig. \ref{fig_vanishing_curve} and Fig. \ref{fig_flip_maneuver}. This condition is quite broad, and it even includes maneuvers with a 360$^{\circ}$ flip. Note that the second and third conditions serve to exclude cases where $u_{R}^{*}$ has infinitely many switches (the proof is skipped)---cases that are not of interest to us.

To facilitate the discussion, we consider two adjoint state configurations separately: $c_{2}c_{3}\!=\!c_{1}c_{4}$ and $c_{2}c_{3}\!\neq\! c_{1}c_{4}$.

\begin{figure}[t]
\centering
\includegraphics[width=0.45\textwidth]{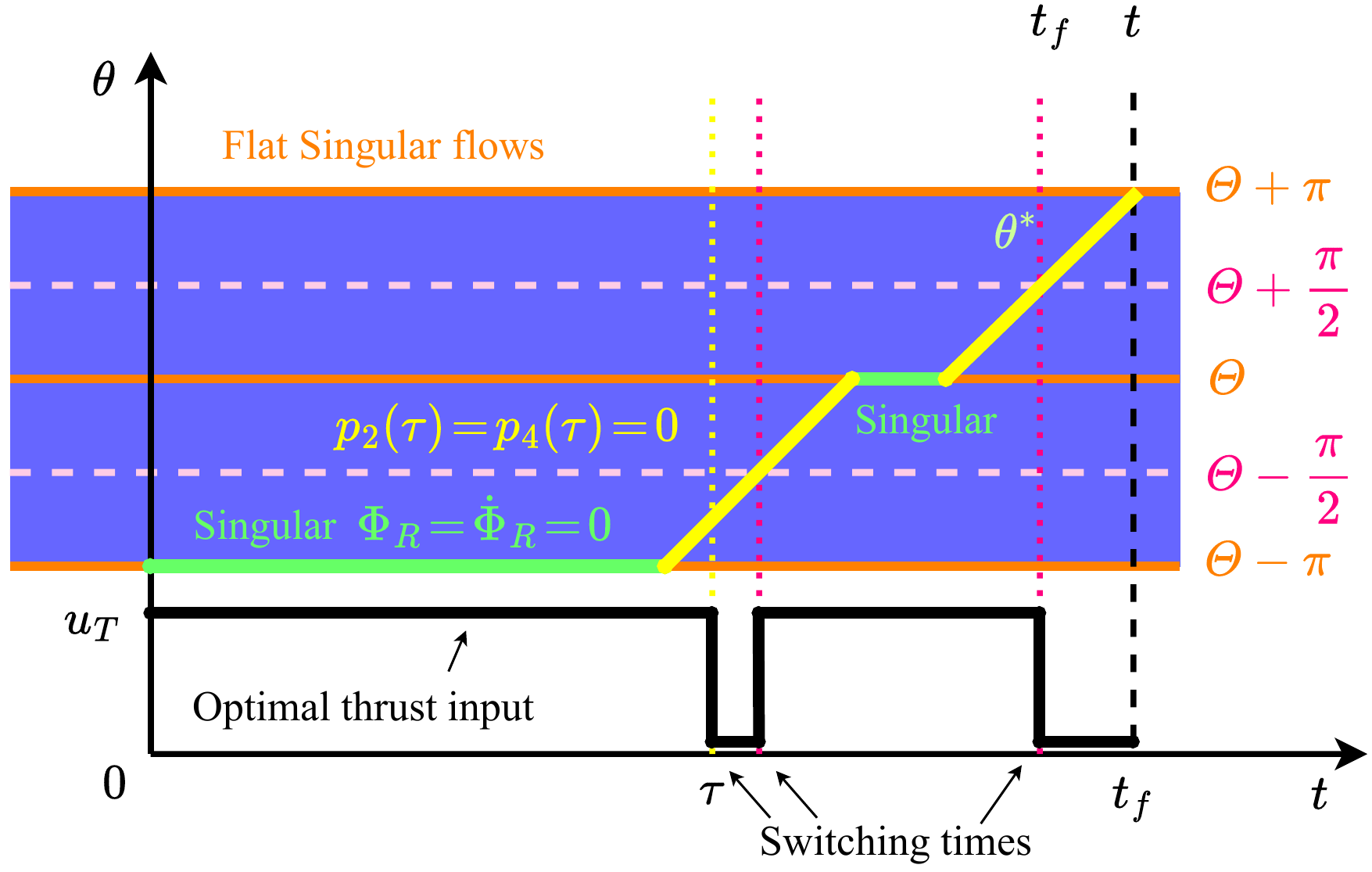}
\caption{Demonstration of how to use singular flows (orange lines) to determine the optimal thrust input, $u_{T}^{*}$ (black line),  for a purely vertical translation. It is known that the time-optimal maneuver entails a flip. By checking the intersections between the optimal rotation trajectory (green and yellow lines) and the singular flows shifted by $\pi/2$ as per Corollary \ref{cor:ut_switch}, we get the exact switching times of $u_{T}^{*}$. After taking into account one additional switch caused by $p_{2}(\hat{\tau})=p_{4}(\hat{\tau})=0$, we obtain the full structure of $u_{T}^{*}$ as shown in the bottom trajectory (black).} \label{fig_flip_maneuver}
\end{figure}

\begin{lemma}[]\label{lem:nonhorizontal}
Consider a nontrivial time-optimal maneuver with $c_{2}c_{3}\!=\!c_{1}c_{4}$. The following statements hold:
\begin{enumerate}[label=(\roman*)]
\item $u_{R}^{*}=0$ when singular;
\item $u_{T}^{*}$ is bang-bang with at most 5 switches, and $u_{R}^{*}$ is bang-singular with at most 2 singular arcs and 3 isolated bang arcs, following a structure of B-S-B-S-B.
\end{enumerate}
\end{lemma}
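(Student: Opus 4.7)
The plan is to tackle the two claims in turn, leaning on the machinery of singular flows and the earlier lemmas. Claim (i) is immediate: the hypothesis $c_{2}c_{3}=c_{1}c_{4}$ together with Lemma~\ref{lem:vartheta} forces every singular flow to be a constant $\varTheta$, and Lemma~\ref{lem:ur_singular} identifies a singular arc with an interval on which $\theta^{*}\equiv\varTheta$; differentiating gives $u_{R}^{*}=\dot{\theta}^{*}=0$ throughout.

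For the $u_{R}^{*}$ skeleton in (ii), I would first invoke Lemma~\ref{lem:ur_notbb}, which applies by Definition~\ref{def:nontrivial_flight}(ii) and which forbids two consecutive bang arcs. Next I would argue by contradiction that at most two singular arcs can appear: if three or more were present, Lemma~\ref{lem:ur_sbs}(i) would rule out consecutive singular arcs lying on the same flow, while Lemma~\ref{lem:ur_sbs}(ii)--(iii) would force a time $\hat{\tau}$ in each intervening bang arc with $p_{2}(\hat{\tau})=p_{4}(\hat{\tau})=0$; this contradicts Lemma~\ref{lem:phi_p2p4}, which supplies a unique such $\hat{\tau}$. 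The only admissible pattern is therefore at most B-S-B-S-B.

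For the $u_{T}^{*}$ switch count, I would exploit the factorization unlocked by $c_{2}c_{3}=c_{1}c_{4}$. Writing $\hat{t}_{0}$ for the unique zero of $p_{2}$ and $p_{4}$ provided by Lemma~\ref{lem:phi_p2p4}, one has $p_{2}=-c_{1}(\hat{t}-\hat{t}_{0})$ and $p_{4}=-c_{3}(\hat{t}-\hat{t}_{0})$, hence
\begin{equation*}
\Phi_{T}=-(\hat{t}-\hat{t}_{0})\bigl(c_{1}\sin\theta^{*}+c_{3}\cos\theta^{*}\bigr)=\pm\sqrt{c_{1}^{2}+c_{3}^{2}}\,(\hat{t}-\hat{t}_{0})\cos(\theta^{*}-\varTheta_{c}).
\end{equation*}
Sign changes of $\Phi_{T}$ therefore come only from (a) $\hat{t}$ crossing $\hat{t}_{0}$ at most once, or (b) $\theta^{*}$ crossing the two horizontal lines $\varTheta_{c}\pm\pi/2$. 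Using the piecewise-linear (bang) / constant-at-$\varTheta_{c}$-or-$\varTheta_{c}\pm\pi$ (singular) structure of $\theta^{*}$, the bound $\theta^{*}\in[\varTheta_{c}-\pi,\varTheta_{c}+\pi]$ from Definition~\ref{def:nontrivial_flight}(i), and the fact that at least one singular arc sits on $\varTheta_{c}$ by Definition~\ref{def:nontrivial_flight}(iii), I would run a case analysis on B-S-B-S-B: the middle bang arc has length exactly $\pi$ and contributes one crossing; the outer bang arc adjacent to the singular at $\varTheta_{c}$ is capped in length by $\pi$ and contributes at most one; the other outer bang arc begins or ends at $\varTheta_{c}\pm\pi$, can stretch up to $2\pi$ across the interior of the bound, and so contributes at most two. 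Summing yields at most four type-(b) crossings and, with the single type-(a) switch, at most five sign changes of $\Phi_{T}$.

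The main obstacle is the book-keeping in the $u_{T}^{*}$ step: I must verify that the crossings at $\varTheta_{c}\pm\pi/2$ and the switch at $\hat{t}_{0}$ cannot silently cancel; confirm that no extra sign change is produced while $\theta^{*}$ sits on a singular arc (there $\cos(\theta^{*}-\varTheta_{c})$ is a nonzero constant, so $\Phi_{T}$ inherits only the sign of $\hat{t}-\hat{t}_{0}$); and check that degenerate sub-patterns (B-S, S-B, B-S-B, S-B-S, B-S-B-S, S-B-S-B, or a singular arc abutting $\hat{t}=0$ or $\hat{T}^{*}$) all remain within the five-switch bound. Once this bookkeeping is complete, both assertions follow.
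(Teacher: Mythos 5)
Your proposal follows essentially the same route as the paper's Appendix D: part (i) via flat singular flows and Lemma \ref{lem:ur_singular}; the $u_{R}^{*}$ skeleton via Lemma \ref{lem:ur_notbb} plus a contradiction between Lemma \ref{lem:ur_sbs} and the uniqueness of the common zero of $p_{2},p_{4}$; and the $u_{T}^{*}$ count via the zeros of $\Phi_{T}$ at $\hat{t}_{0}$ and at the crossings of $\varTheta_{c}\pm\pi/2$. Your explicit factorization $\Phi_{T}=-(\hat{t}-\hat{t}_{0})\bigl(c_{1}\sin\theta^{*}+c_{3}\cos\theta^{*}\bigr)$ and the ensuing case analysis actually carry out the intersection counting that the paper's proof only gestures at via Corollary \ref{cor:ut_switch}, and the residual bookkeeping you flag (coincident zeros, degenerate sub-patterns) can only decrease the switch count, so the bound of five is safe.
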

\begin{proof}
See Appendix D.
\end{proof}

\begin{lemma}[]\label{lem:horizontal}
Consider a nontrivial time-optimal maneuver with $c_{2}c_{3}\neq c_{1}c_{4}$. The following statements hold:
\begin{enumerate}[label=(\roman*)]
\item $(p_{2}(\hat{t}),p_{4}(\hat{t}))\neq(0,0)$ for all $\hat{t}\in\mathbb{R}$;
\item when $u_{R}^{*}$ is singular, it is nonzero and follows Eq. (\ref{equ_ur_sing}).
\item $u_{T}^{*}$ is bang-bang with at most 2 switches, and $u_{R}^{*}$ is bang-singular with at most 1 singular arc and 2 isolated bang arcs, following a structure of B-S-B.
\end{enumerate}
\end{lemma}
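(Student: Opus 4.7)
The plan is to handle the three claims in order. Part (i) is a one-line contrapositive of Lemma \ref{lem:phi_p2p4}: the equivalence (ii)$\Leftrightarrow$(iii) there says that $p_2(\hat t)=p_4(\hat t)=0$ at some $\hat t$ forces $c_2c_3=c_1c_4$, so our hypothesis immediately excludes this. Part (ii) follows from Lemma \ref{lem:vartheta}: since $c_2c_3\neq c_1c_4$, no singular flow is flat, hence by Lemma \ref{lem:ur_singular} the singular value $u_R^*=\dot\varTheta$ is nonzero, with the closed form \eqref{equ_ur_sing} already recorded in Lemma \ref{lem:prior_works}.

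For part (iii) I first bound the structure of $u_R^*$. By Lemma \ref{lem:prior_works} the control is bang-singular, and by the nontrivial-maneuver assumption it has a singular arc on $\varTheta_c$. Suppose for contradiction a second singular arc exists. Lemma \ref{lem:ur_notbb} rules out two adjacent bang arcs of opposite signs, so the two singular arcs must sit in an S-B-S pattern. Because $\theta^*\in[\varTheta_c-\pi,\varTheta_c+\pi]$, the second arc lies on $\varTheta_c+k\pi$ with $k\in\{-1,0,+1\}$: $k=0$ is excluded by Lemma \ref{lem:ur_sbs}(i), while $k=\pm1$ make the two arcs adjacent, so Lemma \ref{lem:ur_sbs}(ii) demands a time at which $p_2=p_4=0$, contradicting part (i). Hence at most one singular arc exists, and applying Lemma \ref{lem:ur_notbb} once more forces the pattern of $u_R^*$ to be B-S-B, possibly with trivial bang segments, which gives the ``at most 2 isolated bang arcs'' statement.

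To bound the number of switches of $u_T^*$ I invoke Corollary \ref{cor:ut_switch_hori}: every switching time $\hat\tau$ satisfies $\theta^*(\hat\tau)=\varTheta(\hat\tau)\pm\pi/2$, so $\theta^*(\hat\tau)$ must lie on one of the shifted flows $\varTheta_c\pm\pi/2$. During the singular arc, $\theta^*\equiv\varTheta_c$ and direct substitution into $\Phi_T$ gives $\Phi_T=\pm\sqrt{p_2^2+p_4^2}$, which by part (i) is bounded away from zero, so no switch occurs there. On each bang arc the differences $\theta^*-(\varTheta_c\pm\pi/2)$ evolve at rate $u_R^*-\dot\varTheta_c$, which has constant nonzero sign because $|u_R^*|=1$ and $|\dot\varTheta_c|<1$; hence both differences are strictly monotone. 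Each bang arc is adjacent to the singular arc, where $\theta^*=\varTheta_c$ is at distance exactly $\pi/2$ from both shifted flows, so a short case analysis on the sign of $u_R^*$ and on the side of $[-\pi,\pi]$ where $\theta^*-\varTheta_c$ starts/ends shows that each bang arc contributes at most one switch of $u_T^*$. Since at most two bang arcs exist, $u_T^*$ has at most two switches.

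The delicate step I expect to be the main obstacle is the final case analysis: one must simultaneously track the sign of $u_R^*$, the initial/terminal value of $\theta^*-\varTheta_c$ inside $[-\pi,\pi]$, and the direction of monotonicity, and then use the boundary value $\theta^*=\varTheta_c$ at the interface with the singular arc to argue that the monotone quantity can cross the level $\pm\pi/2$ at most once per bang arc. This is what converts the local information of Corollary \ref{cor:ut_switch_hori} into the global bound of two switches; everything else is a straightforward application of the earlier lemmas.
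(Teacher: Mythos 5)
Your proof is correct and follows essentially the same route as the paper's Appendix E: part (i) via Lemma \ref{lem:phi_p2p4}, part (ii) via Lemmas \ref{lem:prior_works} and \ref{lem:vartheta}, and part (iii) by excluding a second singular arc through Lemma \ref{lem:ur_sbs} together with $(p_{2},p_{4})\neq(0,0)$, then counting intersections of $\theta^{*}$ with $\varTheta\pm\frac{\pi}{2}$ via Corollary \ref{cor:ut_switch_hori}. Your monotonicity argument for the two-switch bound on $u_{T}^{*}$ is a careful elaboration of what the paper asserts in one line, but it is the same underlying idea.
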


\begin{proof}
See Appendix E.
\end{proof}

\begin{remark}
Lemma \ref{lem:nonhorizontal} and Lemma \ref{lem:horizontal} 
are powerful verification tools for analytical and numerical results. They depict the most complicated control policy one may encounter in nontrivial time-optimal maneuvers. In most cases, the optimal trajectory can be characterized by a small portion of this policy. Take a purely vertical rest-to-rest flight as an example, whose singular flows and optimal rotation trajectory $\theta^{*}$ are shown in Fig. \ref{fig_flip_maneuver}. The first information we get is that the singular flows are flat, implying that $c_{2}c_{3}\!=\!c_{1}c_{4}$ and therefore Lemma \ref{lem:nonhorizontal} is applicable. We can easily check that the structure of $u_{R}^{*}$ is S-B-S-B, which is a portion of the extreme policy. By using Corollary \ref{cor:ut_switch}, we further know the exact switching times of $u_{T}^{*}$, which is key to obtain its structure of B-B-3. 
\end{remark}

Finally, we arrive at the following conclusion:
\begin{theorem}[]\label{the:nontrivial}
For nontrivial time-optimal maneuvers, the optimal collective thrust is bang-bang with at most 5 switches and the rotational rate is bang-singular with at most 4 switches.
\end{theorem}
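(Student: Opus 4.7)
The plan is to prove Theorem \ref{the:nontrivial} by a direct case split on the scalar quantity $c_{2}c_{3}-c_{1}c_{4}$ and then invoke the two preceding lemmas, which have already done the heavy lifting of classifying the control structure in each regime. Since exactly one of $c_{2}c_{3}=c_{1}c_{4}$ or $c_{2}c_{3}\neq c_{1}c_{4}$ must hold for any admissible adjoint coefficient vector $\mathbf{c}$, these two lemmas together cover every nontrivial time-optimal maneuver, so the theorem will follow by taking the worse of the two bounds in each coordinate (thrust switches and rotational-rate switches).

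First I would set up the case distinction, fixing a nontrivial time-optimal maneuver in the sense of Definition \ref{def:nontrivial_flight} and noting that $\mathbf{c}\neq\mathbf{0}$ and $(c_{1},c_{3})\neq(0,0)$ by the nontriviality hypothesis. In the case $c_{2}c_{3}=c_{1}c_{4}$, I would apply Lemma \ref{lem:nonhorizontal} directly: this yields the structure B-S-B-S-B for $u_{R}^{*}$ and at most $5$ switches for $u_{T}^{*}$. A small bookkeeping step is needed here to translate ``at most 2 singular arcs and 3 isolated bang arcs in the pattern B-S-B-S-B'' into a count of switching times for $u_{R}^{*}$: the pattern has $5$ arcs, hence $4$ interfaces, and each interface is by definition a switching time (including the bang-to-singular and singular-to-bang transitions, as clarified in the terminology paragraph of Section \ref{sec:properties}). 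So $u_{R}^{*}$ has at most $4$ switches in this case.

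Second, in the case $c_{2}c_{3}\neq c_{1}c_{4}$, I would invoke Lemma \ref{lem:horizontal} to conclude that $u_{T}^{*}$ is bang-bang with at most $2$ switches and $u_{R}^{*}$ is bang-singular with the pattern B-S-B, which has $3$ arcs and therefore $2$ switches. Since $2\leq 5$ and $2\leq 4$, the bounds in this case are dominated by those of the previous case. Combining the two exhaustive cases gives the stated uniform bounds of $5$ switches for $u_{T}^{*}$ and $4$ switches for $u_{R}^{*}$.

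The genuinely difficult work lies in Lemmas \ref{lem:nonhorizontal} and \ref{lem:horizontal} (and the supporting structural results Lemmas \ref{lem:phi_p2p4}--\ref{lem:ur_sbs}), whose proofs are deferred to the appendix; once these are granted, the theorem itself is essentially a bookkeeping exercise. The one point I would double-check carefully is the arc-to-switch count, since the definition in this paper counts both bang-bang and bang-singular transitions as switches; this is what makes the ``at most 4 switches'' claim for $u_{R}^{*}$ consistent with the B-S-B-S-B decomposition in Lemma \ref{lem:nonhorizontal}. No other subtleties arise, so the proof can be written in just a few lines.
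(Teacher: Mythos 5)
Your proposal is correct and takes essentially the same route as the paper, whose proof of Theorem~\ref{the:nontrivial} is simply to combine Lemma~\ref{lem:nonhorizontal} and Lemma~\ref{lem:horizontal} via the exhaustive case split on $c_{2}c_{3}=c_{1}c_{4}$ versus $c_{2}c_{3}\neq c_{1}c_{4}$. Your explicit arc-to-switch bookkeeping (B-S-B-S-B giving $4$ interfaces, hence $4$ switches under the paper's convention that bang-to-singular transitions count as switches) is a useful clarification that the paper leaves implicit, but it is not a different argument.
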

\begin{proof}
Combine Lemma \ref{lem:nonhorizontal} and Lemma \ref{lem:horizontal}.
\end{proof}

\begin{remark}
Theorem \ref{the:nontrivial} specifies the maximum switch numbers for $u_{T}^{*}$ and $u_{R}^{*}$, which is essential for the proposed AOS to ensure time optimality under arbitrary start and end conditions. Its application will become clear in the next section.
\end{remark}

\section{Automatic Optimal Synthesis}\label{sec:aos}

This section elaborates on the process of AOS, including the construction of the optimization problem and the selection of polynomial families for state \& control parameterization.

\subsection{Preliminary}\label{subsec:preliminary}
\subsubsection{System Dynamics and Constraints}
Let $\mathcal{F}^{w}$ and $\mathcal{F}^{b}$ denote the world frame and the body frame, respectively. The state of the quadrotor is defined as 
\begin{equation}
\mathbf{x}=[\mathbf{p}^{w^{T}},\mathbf{q}_{wb}^{T},\mathbf{v}^{w^{T}},\boldsymbol{\omega}^{b^{T}}]^{T}\in\mathbb{R}^{n},
\end{equation}
where $\mathbf{p}^{w}\in\mathbb{R}^3$ is the position expressed in $\mathcal{F}^{w}$, $\mathbf{q}_{wb}\in SO(3)$ the unit quaternion rotation from $\mathcal{F}^{b}$ to $\mathcal{F}^{w}$, $\mathbf{v}^w\in\mathbb{R}^3$ the velocity w.r.t. $\mathcal{F}^{w}$, and $\boldsymbol{\omega}^{b}=[\boldsymbol{\omega}_{x}^{b},\boldsymbol{\omega}_{y}^{b},\boldsymbol{\omega}_{z}^{b}]^{T}\in \mathbb{R}^3$ the body-rates of the vehicle. The control input is the thrust command for each rotor:
\begin{equation}
\mathbf{u}=[f_{1},f_{2},f_{3},f_{4}]^{T}\in\mathbb{R}^{4},
\end{equation}
We refer readers to \cite{foehn2021time} for the detailed dynamics equation, $\dot{\mathbf{x}}=\mathbf{f}(\mathbf{x},\mathbf{u})$. Note that we will use $\underline{f}$ and $\overline{f}$ to denote the minimal and maximal thrusts for each rotor, respectively. In addition, the body-rate limit $\overline{\boldsymbol{\omega}}\!=\![\overline{\omega}_{s},\overline{\omega}_{s},\overline{\omega}_{\psi}]^{T}$ is enforced to align with the maximum sensor data rate of the gyroscope, $\overline{\omega}_{s}$, as well as the maximum yaw rate, $\overline{\omega}_{\psi}$. Eventually, we obtain the following constraints for the vehicle:
\begin{align}
&|\boldsymbol{\omega}_{x}^{b}|<\overline{\omega}_{s},|\boldsymbol{\omega}_{y}^{b}|<\overline{\omega}_{s},|\boldsymbol{\omega}_{z}^{b}|<\overline{\omega}_{\psi},\\
&\underline{f}\leq f_{i}\leq \overline{f},\;i=\{1,2,3,4\}.\label{equ_system_const}
\end{align}
For convenience, these constraints will be combined into a single inequality constraint denoted $\mathbf{h}(\mathbf{x},\mathbf{u})\leq\mathbf{0}$.

\subsubsection{Differential Flatness and Polynomial Representation}\label{subsec:df}
A system is \textit{differentially flat} \cite{levine2009analysis} if it is controllable and its states and controls can be parameterized by the so-called flat outputs and their finite orders of derivatives. The quadrotor's flat output can be formulated as position $\mathbf{p}^{w}$ and yaw angle $\psi$. In practice, we use polynomial representation to construct the flat output:
\begin{equation}
\mathbf{y}(t)=[\mathbf{p}^{w}(t),\psi(t)]^{T}\in\mathbb{R}^{4},
\end{equation}
Prior literature \cite{faessler2017differential, mellinger2011minimum, wang2022geometrically} has demonstrated that the states and control inputs can be retrieved from the flat output through the flatness maps, $\boldsymbol{\Psi}_{\mathbf{x}}$ and $\boldsymbol{\Psi}_{\mathbf{u}}$:
\begin{align}\mathbf{x} & =\boldsymbol{\Psi}_{\mathbf{x}}(\mathbf{y},...,\mathbf{y}^{(s-1)}),\\
\mathbf{u} & =\boldsymbol{\Psi}_{\mathbf{u}}(\mathbf{y},...,\mathbf{y}^{(s)}),
\end{align}
where $s$ is the derivative order of the polynomial. In the supplementary material\footnote{\url{https://drive.google.com/file/d/1fgbam0UkVmK9A-OsYeAzvCpCRnNDwqI_/view?usp=drive_link}}, we detail the derivation of the mapping from flat outputs to single-rotor thrust commands, along with their Jacobians. The treatments of the singularity in $\boldsymbol{\Psi}_{\mathbf{x}}$ and $\boldsymbol{\Psi}_{\mathbf{u}}$ are also discussed. To express equivalent state and input constraints with flat outputs, we introduce the vector $\mathbf{y}^{[s]}\!=\![\mathbf{y}^{T},\mathbf{\dot{y}}^{T},...,\mathbf{y}^{(s)^{T}}]^{T}$ which enables the following expression for constraints:
\begin{equation}
\mathbf{h}_{\boldsymbol{\Psi}}(\mathbf{y}):=\mathbf{h}(\boldsymbol{\Psi}_{\mathbf{x}}(\mathbf{y}^{[s-1]}),\boldsymbol{\Psi}_{\mathbf{u}}(\mathbf{y}^{[s]}))\leq\mathbf{0}.\label{equ_poly_const}
\end{equation}

Owing to the differential-flat property,  we are able to represent substantially long state and control trajectories without concerns about dynamical feasibility, since optimization over the flat-output space implicitly ensures the satisfaction of the dynamics equations of the original system.

\subsection{Problem Formulation}\label{subsec:methodology}
Consider a continuous-time TOC problem with fixed endpoints (or end-states). Let $\bar{\mathbf{x}}_{0}$ and $\bar{\mathbf{x}}_{f}$ denote the specified start and end states of the quadrotor, respectively. The TOC problem can be written as:
\begin{subequations}\label{equ_top}
\begin{align}\min_{\mathbf{x}(\cdot),\mathbf{u}(\cdot),t_{f}}\quad & \int_{t_{0}}^{t_{f}}1dt\\
\textrm{s.t.}\quad & \mathbf{x}(t_{0})=\bar{\mathbf{x}}_{0},\;\mathbf{x}(t_{f})=\bar{\mathbf{x}}_{f},\\
& \dot{\mathbf{x}}=\mathbf{f}(\mathbf{x},\mathbf{u}),\;\mathbf{h}(\mathbf{x},\mathbf{u})\leq\mathbf{0},
\end{align}
\end{subequations}
where $t_{0}$ and $t_{f}$ are the start and end times, respectively. By dividing the trajectory into $N$ pieces, we can transform Problem (\ref{equ_top}) into an equivalent $N$-stage problem:
\begin{subequations}\label{equ_top_multistage}
\begin{align}\min_{\mathbf{x}(\cdot),\mathbf{u}(\cdot),t_{1,...,N}}\quad & \sum_{k=1}^{N}\int_{t_{k-1}}^{t_{k}}1dt\\
\text{s.t.}\quad & \mathbf{x}(t_{0})=\bar{\mathbf{x}}_{0},\;\mathbf{x}(t_{N})=\bar{\mathbf{x}}_{f}\\
& \mathbf{x}_{k}(t_{k})=\mathbf{x}_{k+1}(t_{k}),\\
& \dot{\mathbf{x}}_{k}(t)\!=\!\mathbf{f}(\mathbf{x}_{k}(t),\mathbf{u}_{k}(t)),t\!\in\![t_{k-1},t_{k}],\\
& \mathbf{h}_{k}(\mathbf{x}_{k}(t),\mathbf{u}_{k}(t))\leq\mathbf{0},\;t\in[t_{k-1},t_{k}],\\
& t_{k-1}-t_{k}<0,
\end{align}
\end{subequations}
where $t_{k}$ is the timestamp for the $k$-th intermediate point and $t_{N}\!=\!t_{f}$. The equivalence of Problem \ref{equ_top_multistage} and Problem \ref{equ_top} is obvious from the principle of optimality \cite{diehl2011numerical}, i.e., each sub-trajectory of an optimal trajectory is also an optimal trajectory. 

Next, the flat output corresponding to each sub-trajectory is represented by a polynomial $\mathbf{y}_{k}:[0,T_{k}]\rightarrow \mathbb{R}^{4}$ and we denote $T_{k}$ as the trajectory duration for the $k$-th piece which is non-negative. To ensure state continuity, these polynomial pieces should be at least ($s\!-\!1$)-th order continuous at the junctions. This results in a piecewise polynomial with $N$ piece as the optimization variable. We then introduce $\bar{\mathbf{y}}_{0}^{[s-1]}$ and $\bar{\mathbf{y}}_{f}^{[s-1]}$ as the flat output at the start and end times, respectively, which are linked to $\bar{\mathbf{x}}_{0}$ and $\bar{\mathbf{x}}_{f}$ via the following formula:
\begin{equation}
\bar{\mathbf{x}}_{0}\!=\!\boldsymbol{\Psi}_{\mathbf{x}}(\bar{\mathbf{y}}_{0}^{[s-1]})\text{ and }\bar{\mathbf{x}}_{f}\!=\!\boldsymbol{\Psi}_{\mathbf{x}}(\bar{\mathbf{y}}_{f}^{[s-1]}).
\end{equation}
Now, the TOC problem parameterized by a piecewise polynomial can be constructed as:
\begin{subequations}\label{equ_top_poly}
\begin{align}\min_{\mathbf{y}_{1\!,...,N\!}}\quad & \sum_{k=1}^{N}T_{k}\\
\text{s.t.}\quad & \mathbf{y}_{1}^{[s-1]}(0)=\bar{\mathbf{y}}_{0}^{[s-1]},\;\mathbf{y}_{N}^{[s-1]}(T_{N})=\bar{\mathbf{y}}_{f}^{[s-1]}\\
& \mathbf{y}_{k}^{[s\!-\!1]}(0)=\mathbf{y}_{k-1}^{[s\!-\!1]}(T_{k}),\;k\!=\!2,...,N,\\
& \mathbf{h}_{\boldsymbol{\Psi}}(\mathbf{y}_{k})\leq\mathbf{0},\;-T_{k}<0.
\end{align}
\end{subequations}
It is important to note that Problem (\ref{equ_top_poly}) is no longer equivalent to (\ref{equ_top}), as we have restricted the solution space to certain polynomial families which does not necessarily include the real solutions. However, leveraging the fact that the time-optimal trajectory can be represented as a concatenation of multiple pieces, it is feasible to use a sufficiently large $N$ to maintain a low optimality gap. Meanwhile, we want to avoid making $N$ too large, as this would induce unnecessary computational burden. We will leave the discussion on selecting an appropriate $N$ for the next section.

There exists a highly efficient framework \cite{wang2022geometrically} for solving optimization problems that are structured in the form of Problem (\ref{equ_top_poly}). The requirements are that the order of each polynomial should be set to $(2s-1)$ and $\mathbf{h}_{\boldsymbol{\Psi}}(\mathbf{y})$ is differentiable, both of which are satisfied in our situation. After optimization, we convert the resulting flat output trajectory into a time-discretized trajectory $\mathbf{x}$ and $\mathbf{u}$ with the desired sampling time. Note that since $\psi$ is an independent variable, users can command the desired heading at each waypoint when generating the trajectory.

\subsection{Hypothesis about the Piece Number $N$ and Optimality}

\begin{figure}[!htbp]
\centering
\includegraphics[width=0.48\textwidth]{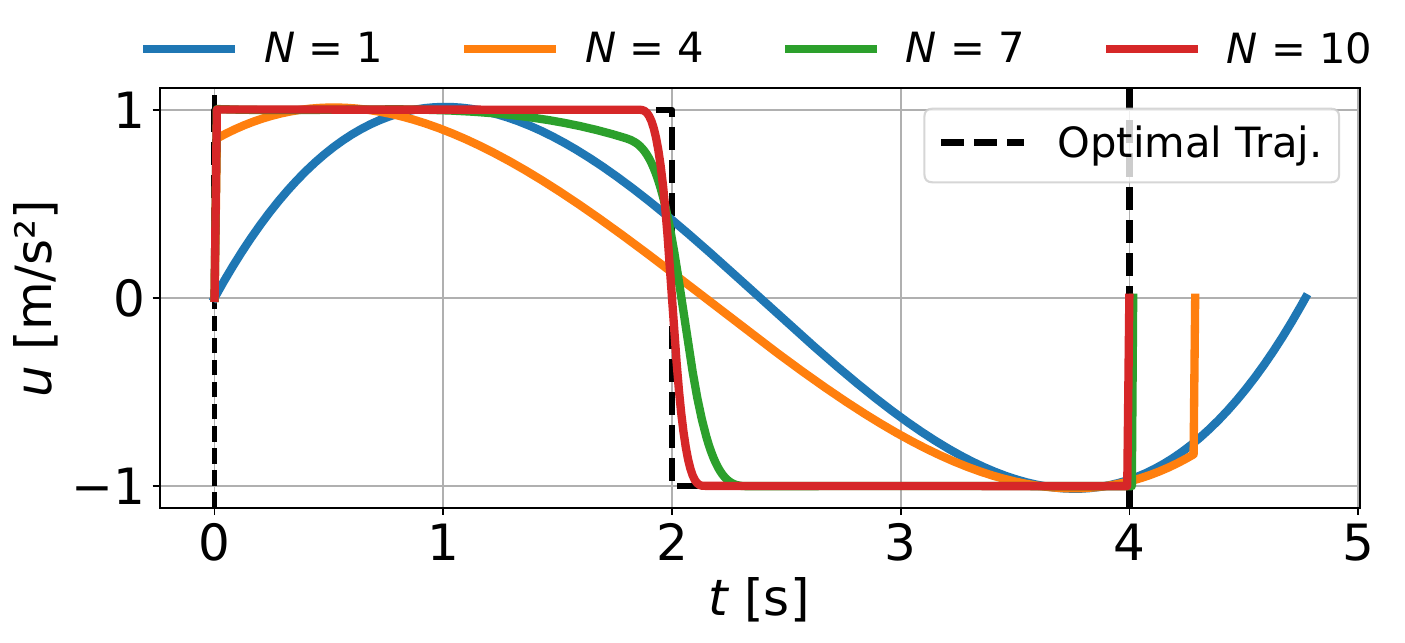}
\caption{Comparison of the resulting trajectories obtained from the proposed algorithm when applied to a double integrator ($\ddot{x}=u$ with $|u|\leq1$) with different polynomial numbers $N$. We set the initial state to $(x_{0},\dot{x}_{0})\!=\!(-2,0)$ and the terminal state to $(x_{f},\dot{x}_{f})\!=\!(0,0)$. The optimal trajectory (dashed line) can be obtained by analytical methods \cite{tedrake2009underactuated}, resulting in a duration of 4 s. To properly represent the flat output, the order of each polynomial is set to $2s\!-\!1$ where $s=3$. We notice that when $N\!=\!1$ (blue), AOS yields a timing of 5.96 s, which is 49\% longer than 4 s, and the corresponding $u$ can only touch the input limits twice. By increasing $N$ to 4 (orange) and 7 (green), the optimality gaps drop to 7\% and 1\%, respectively, and the corresponding $u$ are able to maintain extremal inputs for a substantially longer duration. When $N$ hits 10, the obtained trajectory exhibits a near bang-bang policy (red).}\label{fig_di}
\end{figure}

It is true that a single polynomial finds it challenging to fit a trajectory featuring multiple jumps in states or controls. However, as more pieces are introduced, the increased variable freedom enables better fitting quality, and a near-perfect performance can possibly be achieved as $N$ approaches infinity. We use an example to illustrate this idea. 
Consider a double integrator. We apply our AOS to generate its time-optimal trajectories with varying $N$ as shown in Fig. \ref{fig_di}. It shows that as $N$ increases, the piecewise polynomial quickly converges to the optimal trajectory, yielding an optimality gap of merely 1\% when $N\!=\!7$. This result showcases the possibility for polynomial representation to approach true time optimality. In the meantime, we notice that when $N$ is increased to certain points, such as 10 (approx. 1\% optimality gap), the improvement becomes quite marginal, yet it induces a larger computation time. To balance time optimality and computation cost, it is desirable to have a minimal $N$ that consistently yields an optimality gap below 10\%. Therefore, we raise the following hypothesis:
\begin{hyp}\label{hyp:optimal_n}
\textit{The piece number $N$ should be larger than the total number of switches in time-optimal control trajectories.}
\end{hyp}
This hypothesis stems from the observation that it is the state or control jumps at the switching times that render a single polynomial incapable of fitting the target trajectory; conversely, if there are no jumps, such as in a bang control, a polynomial is more than sufficient to represent the trajectory. We argue that having a $N$ larger than the total number of switches suffices to yield satisfactory solution quality, as each jump can be accounted for by at least one polynomial. This hypothesis will be verified in Section \ref{subsubsec:poly_num}.

Our goal is to generalize this idea to time-optimal quadrotor flights. We realize that the jumps in collective thrust and body rates shape the dominant structure of optimal trajectories, and their properties can be near-equivalently studied through the non-dimensional model introduced in Section \ref{sec:properties}. Although a two-dimensional model cannot cover all three-dimensional maneuvers, for most time-critical tasks associated with an agile platform such as a quadrotor, there often exists a primary moving direction where the most significant motion occurs (e.g., when the distance is large, the vehicle's motion typically aligns with the direction to the next waypoint), and their characteristic maneuver properties can be analyzed by using the simplified model. Note that the non-dimensional model has two control inputs, and thus the total switches in both inputs must be considered and $N$ should be assigned correspondingly. 

In the next two sections, we elucidate the principle of picking $N$, with and without prior knowledge about the switching number.

\subsection{Inference of $N$ with Known Switch Numbers}

For certain tasks, e.g., purely horizontal translations, previous works \cite{hehn2012performance} have provided a lookup table/graph for the exact switch numbers w.r.t. the flight distance. In this case, we can directly compute $N$ by using the formula below:
\begin{equation}
N=N_{T}+N_{R}+N_{SE}+1,\label{equ_polypiece}
\end{equation}
where $N_{T}$ is the switch number for the collective thrust and $N_{R}$ for the body rates. $N_{SE}=\{0,1,2\}$ indicates whether we have information about their optimal values at the start or end points; it is safe to set $N_{SE}\!=\!2$ if none of this information is accessible. Lastly, we add the last term to ensure that at least one polynomial can be assigned.

\subsection{Inference of $N$ without Exact Switch Numbers}

Considering the complexity of quadrotor maneuvers, it is impossible to obtain $N_{T}$ and $N_{R}$ for arbitrary start and end states. However, if their upper bounds are available, we are guaranteed to find a piecewise polynomial whose $N$ suffices to capture the structure of optimal trajectories. Fortunately, Theorem \ref{the:nontrivial} has answered this question--- $\overline{N_{T}}\!=\!5$ and $\overline{N_{R}}\!=\!4$---which enables the development of a robust AOS scheme, as depicted in Algorithm \ref{alg:robust_aos}, capable of tackling a broad range of time-optimal two-state problems. Note that the robust AOS is a conservative approach as it starts with the largest $N$ and then iteratively decreases it until convergence is achieved. This is because certain values of $N$ do not align well with specific trajectory structures, leading to issues in time allocation that hinder convergence. Nevertheless, thanks to the minimal representation of time-optimal trajectories, our robust AOS still consistently outperforms state-of-the-art approaches in terms of speed while achieving comparable solutions.

\begin{algorithm}[H]
\caption{Robust Automatic Optimal Synthesis}
\begin{algorithmic}[1]
\renewcommand{\algorithmicrequire}{\textbf{Input:}}
\renewcommand{\algorithmicensure}{\textbf{Output:}}
\REQUIRE $\bar{\mathbf{x}}_{0}$, $\bar{\mathbf{x}}_{f}$, $N_{SE}\!=\!\{0,1,2\}$, $(\overline{N_{T}}, \overline{N_{R}})\!=\!(5,4)$
\ENSURE  $t_{f}$, $\mathbf{x}$, $\mathbf{u}$
\STATE $(T_{init},\mathbf{y}_{init})$ $\leftarrow$ SolveAOS($N=1$) for Problem (\ref{equ_top_poly})
\STATE $N=\overline{N_{T}}+\overline{N_{R}}+N_{SE} + 1$
\WHILE {$N > 1$}
\STATE WarmstartAOS($\mathbf{y}_{init}$)
\STATE $(T_{1,...,N},\mathbf{y}_{1,...,N})$ $\leftarrow$ SolveAOS($N$)  for Problem (\ref{equ_top_poly})
\IF {Success}
\STATE $\mathbf{y}$ $\leftarrow$ Concatenate($\mathbf{y}_{1,...,N}$)
\STATE $t_{f}$ $\leftarrow$ $ \sum_{k=1}^{N}T_{k}$
\STATE $\mathbf{x}$ $\leftarrow$ $\boldsymbol{\Psi}_{\mathbf{x}}(\mathbf{y},...,\mathbf{y}^{(s-1)})$
\STATE $\mathbf{u}$ $\leftarrow$ $\boldsymbol{\Psi}_{\mathbf{u}}(\mathbf{y},...,\mathbf{y}^{(s)})$
\RETURN {($t_{f}$, $\mathbf{x}$, $\mathbf{u}$)}
\ENDIF
\STATE $N$ $\leftarrow$ $N-1$
\ENDWHILE
\end{algorithmic} \label{alg:robust_aos}
\end{algorithm}

\section{Results: Two-State Problems}\label{sec:perf_eval}

This section provides a thorough evaluation of AOS in solving the classical two-state problem.

\subsection{Introduction of Comparison Sets and Modelling Accuracy}

We compare our method with six mainstream approaches: PB \cite{hehn2012performance}, CPC \cite{foehn2021time}, TOGT \cite{qin2023time}, RTG \cite{hehn2015real}, FR \cite{han2021fast}, and REF \cite{zhou2023efficient}. Note that (i) PB and RTG are indirect methods, (ii) CPC and REF are direct methods based on MS, and (iii) TOGT and FR are polynomial-based methods. To ensure consistent trajectory quality, we unify the numerical integrator used by CPC and REF to the fourth-order Runge-Kutta method (RK4), and conduct multiple tests until the resulting sampling time is below 0.03 s. Note that these two methods are equivalent in non-waypoint flights, and thus we choose either one to compare in such cases.

There are two common quadrotor models used in previous research, one using collective thrusts and body rates as control inputs, denoted as Model S (simplified), and the other using single-rotor thrusts as control inputs while considering body-rate constraints, denoted as Model R (realistic). If not specified otherwise, the default model would be Model R. The suffix "S" will be added to the method's name if Model S is utilized.

\begin{table}[!htbp]
\centering
\caption{Quadrotor configurations}\label{tab_params}
\tabcolsep=0.08cm	
\begin{threeparttable}
\begin{tabular}{@{}ccccccc@{}}
    \toprule
    & $m$ [kg] & $l$ [m] & $\mathbf{J}_{diag}$ [gm$^2$] & $(\underline{f},\overline{f})$ [N] & $c_{\tau}$ [1]   & $\overline{\boldsymbol{\omega}}$ [rad$\,$s$^{-\!1}$] \\ \midrule
    Quad STD & 1.0  & 0.15  & [5.0,5.0,10]     & [0.25, 5.0]       & 0.01       & [10, 10, 10] \\   
    Quad RPG & 0.85  & 0.15  & [1.0,1.0,1.7]     & [0.1, 6.88]       & 0.05       & [15, 15, 3]             \\
    Quad FGG & 1.0  & 0.08  & [4.9,4.9,6.9]     & [0.1, 9.0]       & 0.136       & [10, 10, 3]             \\
    Quad FSC & 1.005 & 0.125 & [2.5,2.1,4.3] & [0.1, 9.0]     & 0.022      & [10, 10, 3]              \\ \bottomrule
\end{tabular}
\begin{tablenotes}[para,flushleft]
    \item Note that $l$ denotes the quadrotor's arm length, $\mathbf{J}_{diag}$ the diagonal elements of the inertia matrix, and $c_{\tau}$ the torque constant.
\end{tablenotes}
\end{threeparttable}
\end{table}

\subsection{Time Optimality}\label{subsec:h2h}

\begin{figure}[!htbp]
\captionsetup[subfloat]{farskip=2pt,captionskip=1pt}
\centering
\subfloat[]{\includegraphics[width=0.48\textwidth]{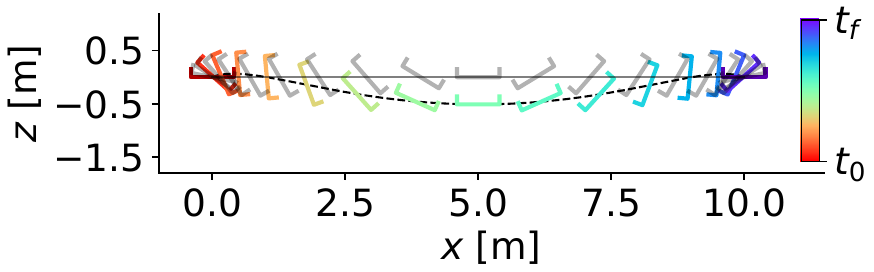}}
\hfill
\subfloat[]{\includegraphics[width=0.48\textwidth]{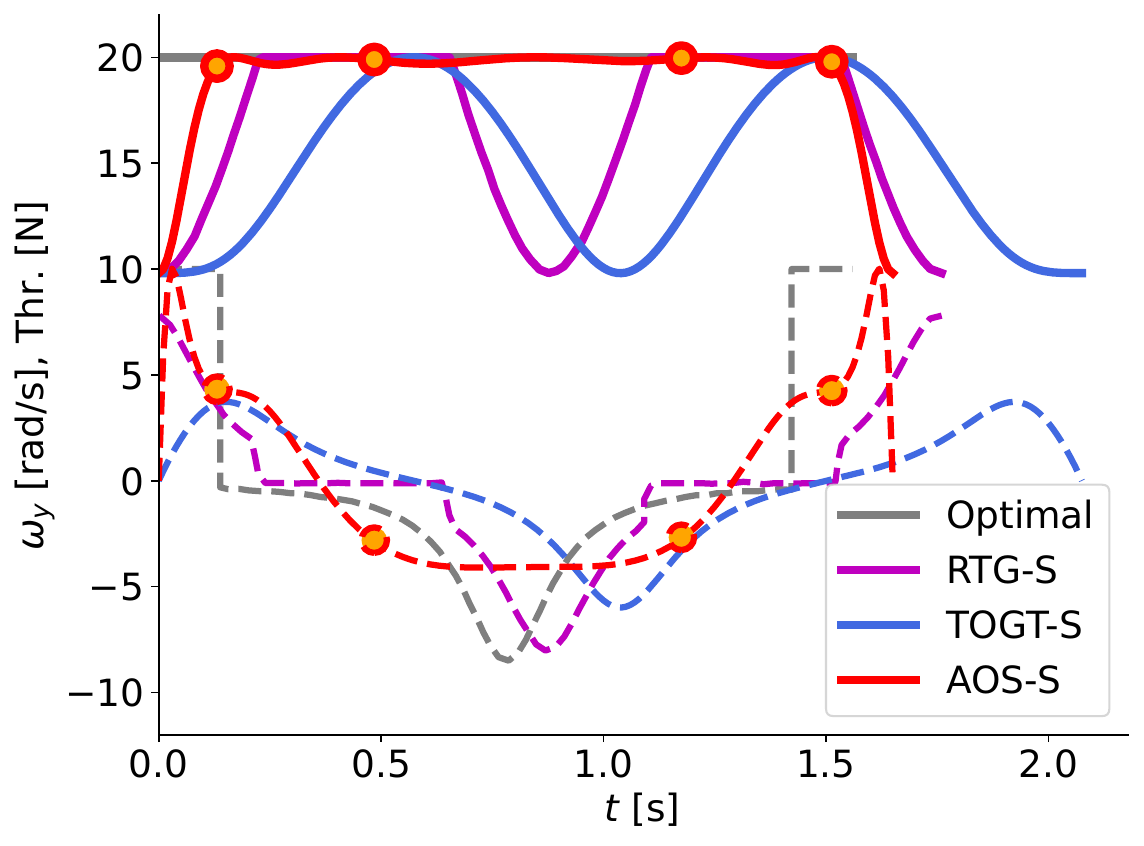}}
\caption{(a) Comparison of time-optimal maneuvers for a purely horizontal translation of 10 m. The dashed line (rainbow snapshots; total duration 1.65 s) represents the trajectory obtained from our AOS-S, while the solid line (grey snapshots; total duration 2.07 s) represents the trajectory obtained from TOGT-S \cite{qin2023time}. Snapshots are plotted at an interval of 0.1 s. The trajectory from RTG-S (total duration 1.76 s) \cite{hehn2015real} is skipped as it is similar to TOGT-S. Note that our algorithm results in an aggressive maneuver characterized by significant altitude changes. (b) Corresponding control inputs. The grey lines are the true time-optimal translation \cite{hehn2012performance} (total duration 1.56 s). The solid lines represent the thrust input while the dashed lines denote the rotational rate input. We observe that our approach (red) successfully captures the key features of the optimal trajectory, whereas TOGT-S (blue) and RTG-S (purple) have significant deviations.}\label{fig_hover_to_hover}
\end{figure}  

\subsubsection{Horizontal Hover-to-Hover Flight}\label{subsubsec:horizontal_h2h}
We compare the performance in purely horizontal translations of 3, 6, 9, 12, and 15 m. The switch numbers for this task are accessible in \cite{hehn2012performance}: $N_{R}\! = \!2$ and $N_{T}\! = \!0$. Therefore, using Eq. (\ref{equ_polypiece}), we get the minimal $N\!=\!5$ for our approach. The model parameter can refer to the Quad STD configuration in Table \ref{tab_params}. 

\begin{table}[htbp]
	\tabcolsep=0.11cm
	\centering
	\caption{Comparison of Timing in Purely Horizontal Translations}\label{tab_ph_timing}
	\begin{tabular}{@{}ccccccc@{}}
		\toprule
		$x$ vs $t_{f}$[s] & PB-S \cite{hehn2012performance}  & CPC-S  \cite{foehn2021time} & AOS-S  &  TOGT \cite{qin2023time} &  CPC \cite{foehn2021time}  & AOS \\ \midrule
		3 m     & 0.898 & 0.891   & 0.956   & 1.171 & 0.918 & 1.084      \\
		6 m     & 1.231 & 1.227   & 1.307   & 1.610 & 1.255 & 1.398      \\
		9 m     & 1.488 & 1.484   & 1.573   & 1.965 & 1.517 & 1.657      \\
		12 m    & 1.705 & 1.702   & 1.797   & 2.265 & 1.736 & 1.878      \\
		15 m    & 1.895 & 1.894   & 1.994   & 2.530 & 1.933 & 2.075      \\
		\bottomrule
	\end{tabular}
\end{table}

Table \ref{tab_ph_timing} benchmarks AOS against three approaches under different model fidelity. It shows that with Model S, the resulting timing of our method is 5.8\% slower than that of CPC-S and 6.5\% slower than that of PB-S on average. When taking into account the rotation dynamics and single-rotor limits, AOS is 10.8\% slower than CPC. This slightly longer duration is expected because discretization-based methods have a stronger capacity to simulate sudden state or control changes compared to polynomial representations. But when compared to the same method type such as TOGT, which is 29.3\% slower, the AOS still stands out regarding solution quality. Fig. \ref{fig_hover_to_hover} displays the obtained input trajectories in a horizontal displacement of 10 m. It can be observed that AOS-S allocates appropriate durations for all five polynomials to maintain maximal thrust input, reaching an optimality gap of 5.8\%. TOGT-S only reaches the maximal thrust twice and never hits the maximal rotational rate. RTG-S faces a similar issue of not being able to keep a maximal thrust input when the vehicle rotates. As a consequence, these two approaches exhibit noticeable deviations from the optimal solution. 

\subsubsection{Diagonal Hover-to-Hover Flight}\label{subsubsec:diagonal_h2h}

We evaluate our approach in a diagonal flight with 8 m of horizontal translation and 8 m of vertical translation, and the resulting trajectories are demonstrated in Fig. \ref{fig_task_02}. Note that the switch number for this task is also available: $N_{R}\! = \!4$ and $N_{T}\! = \!1$. This leads to the minimal $N\!=\!8$ for AOS. To make most approaches applicable, we mainly consider Model S in the following comparison. 

\begin{figure}[!htbp]
\centering
\hspace{-0.02\textwidth}
\includegraphics[width=0.48\textwidth]{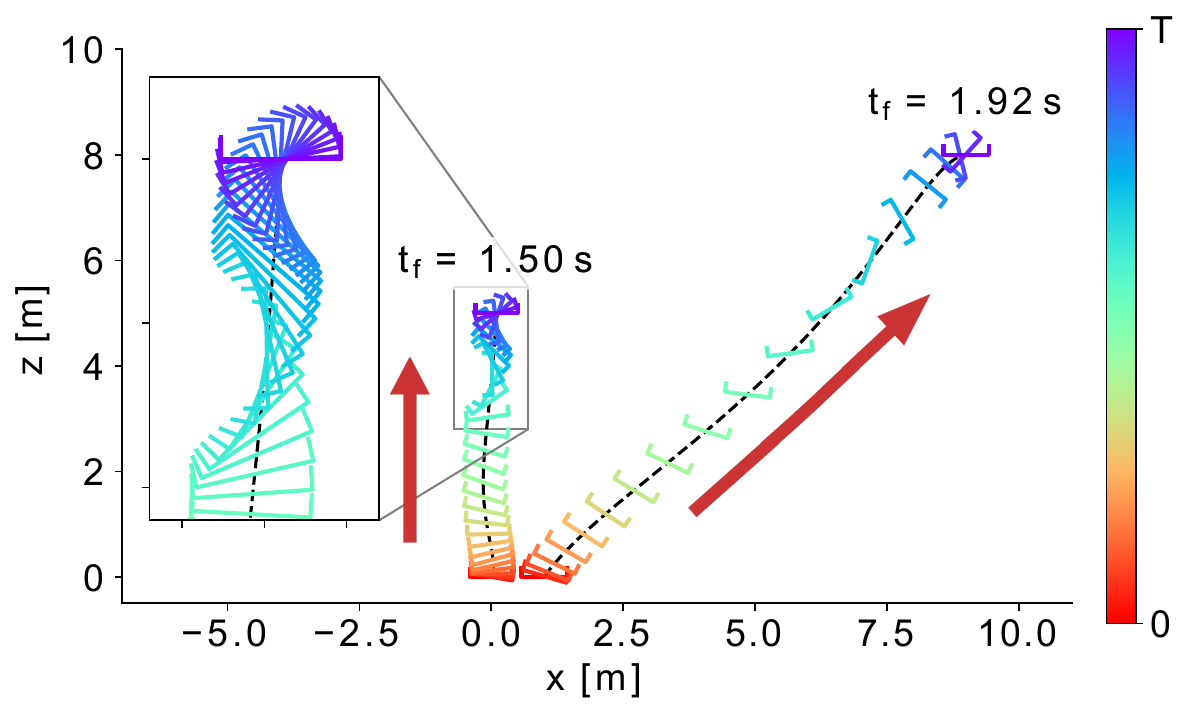}
\caption{Time-optimal maneuvers produced by AOS-S for a purely vertical translation of 5 m (left) and a diagonal translation of 8 m horizontally and 8 m vertically (right). Note that in both scenarios, the vehicle reaches a pitch angle of approx 140$^{\circ}$ in the final deceleration phase.}\label{fig_task_02}
\end{figure}

The results show that CPC-S converges to the same result of 1.76 s as PB-S, achieving true time optimality as anticipated. The durations of RTG-S and TOGT-S are 2.65 s and 2.53 s, respectively, which is substantially longer than the global optimum, and both methods exhibit worse performance due to the increased complexity of the optimal control policy compared to the previous experiment. By comparison, the proposed algorithm is able to maintain a consistently low optimality gap of 7.9\% (total duration 1.92 s). 

\subsubsection{Vertical Hover-to-Hover Flight}\label{subsubsec:vertical_h2h}

We further evaluate their performance in a purely vertical translation of 5 m, a task that involves a more complex optimal control policy---a 360$^{\circ}$ flip. The optimal trajectory obtained from our AOS-S is plotted in Fig. \ref{fig_task_02}. We first note that the resulting timing is 1.5 s, which is 15.3\% slower than 1.3 s from PB-S. The optimality gap increases because it is too challenging for polynomials to produce a flip maneuver that entails rapid and frequent value changes in a very short period. Nevertheless, the attempt of AOS-S to generate a flip is quite evident, which surpasses prior polynomial-based methods such as TOGT and FR that show no signs of conducting a flip.

\subsubsection{Flights with Random Start \& End States}

\begin{figure}[!htbp]
\centering
\includegraphics[width=0.48\textwidth]{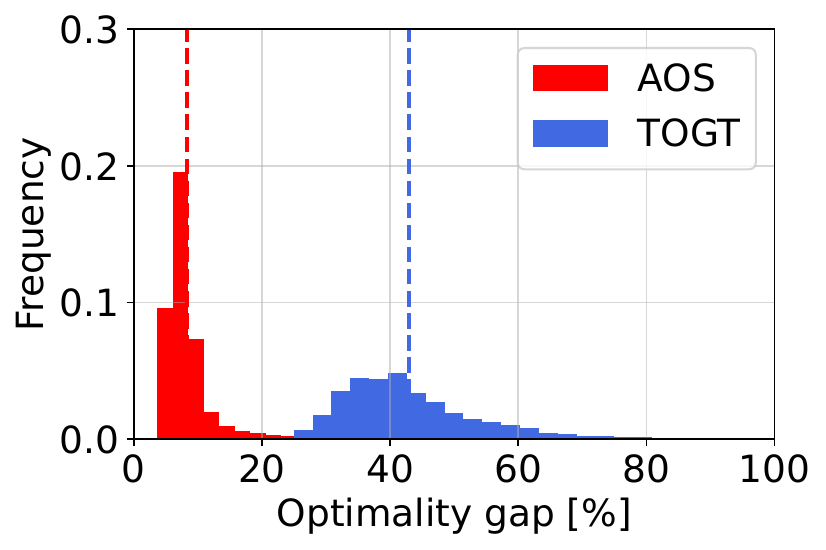}
\caption{Comparison of the optimality gap of AOS (red, medium gap 7.2\%) and TOGT (blue, medium gap 40.7\%) under random start and end conditions. The optimal timing is provided by \cite{foehn2021time}. Note that AOS exhibits a more concentrated distribution within a significantly lower error range compared to TOGT.}\label{fig_histogram}
\end{figure}

We assess the time optimality of the robust AOS approach detailed in Algorithm \ref{alg:robust_aos} under random start and end states---situations where the exact number of switches is inaccessible.

We create a grid in a rectangular cuboid within a volume of $20\!\times\!20\!\times\!10$ m$^3$ centered at the origin. The distance between two cells is $5$ m. The start position is set at the origin. For every end position, we conduct 100 Monte Carlo experiments, uniformly sampling the roll and pitch angles in a range of $(0,\pi/2)$ as well as velocity in a range of 0 to 10 m/s at each direction. We use the results from CPC as ground truth in order to quantify optimality gap.

\begin{figure}[!htbp]
\centering
\includegraphics[width=0.48\textwidth]{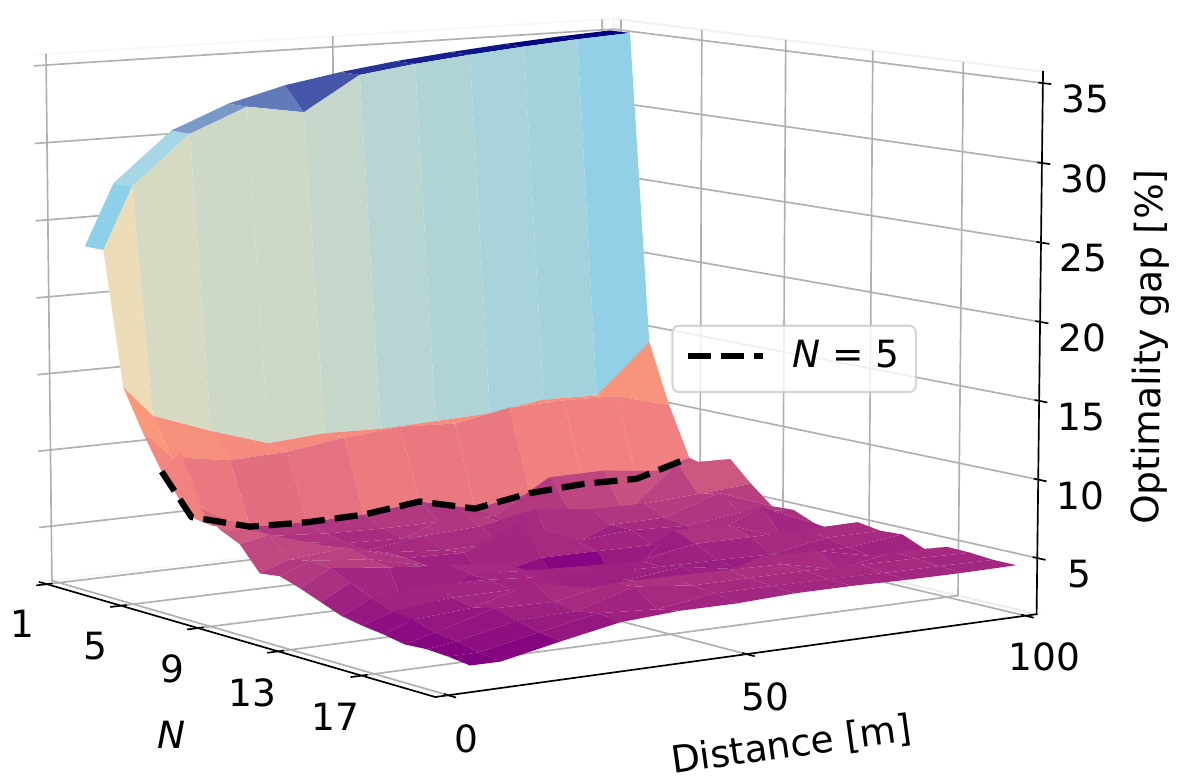}
\caption{Optimality gap of AOS in purely horizontal translations with polynomial number $N$ ranging from 1 to 20. Note that the most substantial drop in gap occurs when $N\!=\!5$, which aligns with the minimal value obtained from Eq. (\ref{equ_polypiece}). Moreover, further increasing $N$ only results in marginal improvements.}\label{fig_error_mesh}
\end{figure}

Fig. \ref{fig_histogram} showcases the distributions of the optimality gap obtained from AOS and TOGT. We find that 88\% of our method's timings fall within a gap between 3.6\% and 8.4\% and have a medium gap of 7.2\%, while the distribution of TOGT is more spread out and has a larger median gap up to 40.7\%. This evidence effectively confirms the consistency of our approach in terms of solution quality.

\subsubsection{Impact of Polynomial Number $N$}\label{subsubsec:poly_num}

This part verifies the efficacy of the proposed formula Eq (\ref{equ_polypiece}) in finding an appropriate $N$. Let's first consider purely horizontal translations with a setup similar to Section \ref{subsubsec:horizontal_h2h}. We adopt Model R and use the results from CPC as ground truth. We apply different values of $N$ to AOS for tasks with target distances ranging from 1 to 100 m. The gap to the optimal duration is recorded and displayed in Fig. \ref{fig_error_mesh}. To begin with, We notice that the solution accuracy of AOS is consistent for all distances with the same $N$. Secondly, the optimality gap quickly falls to approx. 6\% when increasing $N$ from 1 to 5, and it plateaus thereafter. This result demonstrates that $N\!=\!5$ is the sweet spot for this specific problem, as it achieves nearly the best possible solution accuracy with a minimum number of polynomials. Note that Eq (\ref{equ_polypiece}) also suggests $N\!=\!5$, which verifies its effectiveness.

In the second test, we consider diagonal translations with a setup similar to Section \ref{subsubsec:diagonal_h2h}. As shown in Fig. \ref{fig_diag_error_mesh_N}, we see that Eq (\ref{equ_polypiece}) again yields the optimal value of $N\!=\!8$ that balances time optimality and the size of the decision variable. These two numerical studies suffice to validate Hypothesis \ref{hyp:optimal_n} as well as the effectiveness of the proposed formula in choosing $N$.

One last observation is that introducing moderately redundant polynomials (e.g., setting $N\!=\!12$) will not increase the optimality gap. This feature is the foundation for the robust AOS approach, as it indicates that having redundant polynomials is safe for solution accuracy.

\begin{figure}[!t]
\centering
\includegraphics[width=0.48\textwidth]{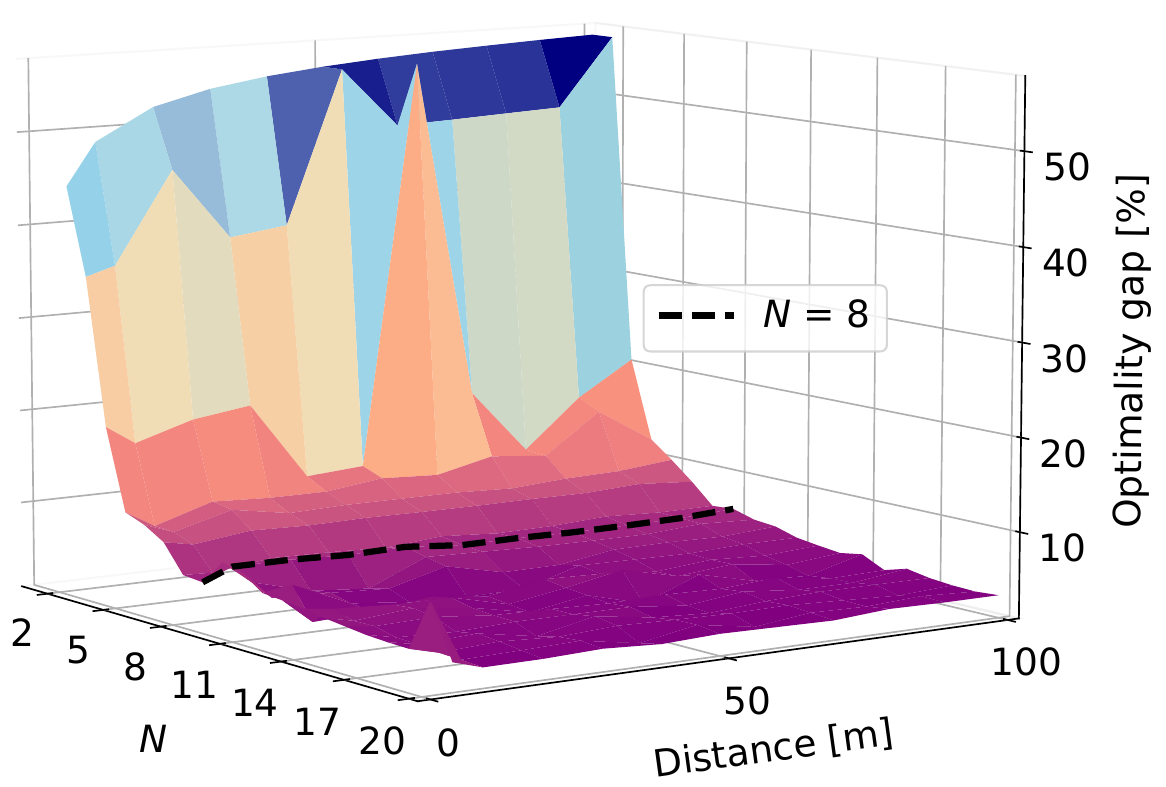}
\caption{Optimality gap of AOS in diagonal translations with polynomial number $N$ ranging from 1 to 20. The displacements in the $x$ and $y$ axes are identical. We observe that the most substantial drop in gap happens when $N\!=\!8$, which also aligns with the minimal value obtained from Eq. (\ref{equ_polypiece}).}\label{fig_diag_error_mesh_N}
\end{figure}

\subsection{Evaluation on Long-Rang Flights}

We evaluate the performance of AOS, CPC, and TOGT regarding time optimality and computational cost in flights with varying target distances. 

\begin{figure}[!htbp]
\centering
\captionsetup[subfloat]{farskip=2pt,captionskip=1pt}
\centering
\subfloat[]{\includegraphics[width=0.48\textwidth]{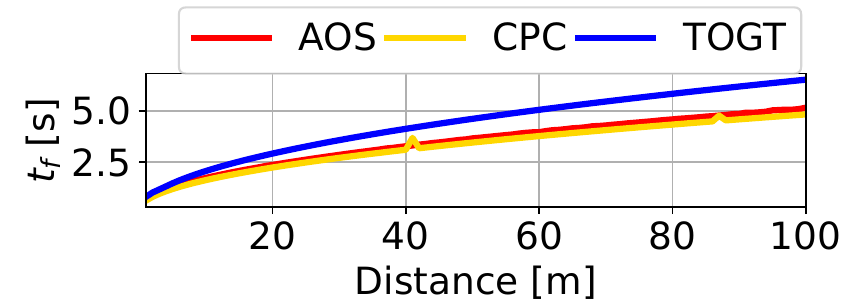}\label{fig_ts_comp}}
\hfill
\subfloat[]{\includegraphics[width=0.48\textwidth]{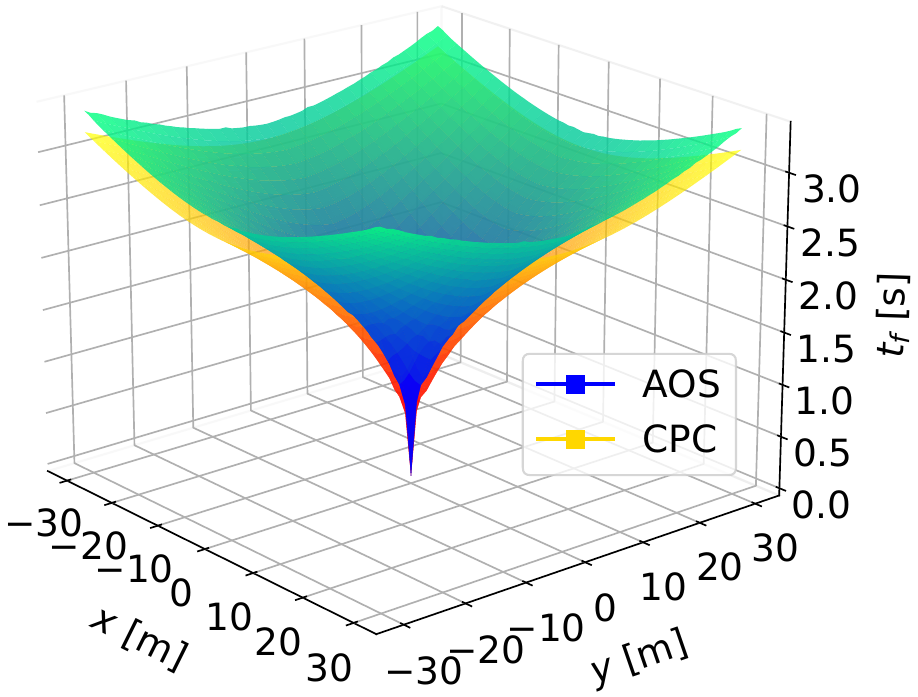}\label{fig_dual_mesh}}
\caption{(a) Time-optimal duration from AOS, CPC and TOGT in purely horizontal translations with flight ranges increasing from 1 to 100 m. (b) Time-optimal duration from AOS and CPC in purely horizontal translations spanning the $xy$ plane. Note that AOS consistently achieves comparable solution accuracy to CPC while TOGT showcases a noticeable gap.}\label{fig:long_range_flight}
\end{figure}

The results are shown in Fig. \ref{fig:long_range_flight}. Specifically, Fig. \ref{fig_ts_comp} demonstrates that the trajectory duration obtained from AOS aligns closely with that from CPC for all distances tested (average gap 6.1\%), while Fig. \ref{fig_dual_mesh} shows that AOS generalizes well to 3D, maintaining consistent time optimality across all flight directions.


\begin{figure}[!htbp]
\centering
\hspace{-0.03\textwidth}
\includegraphics[width=0.42\textwidth]{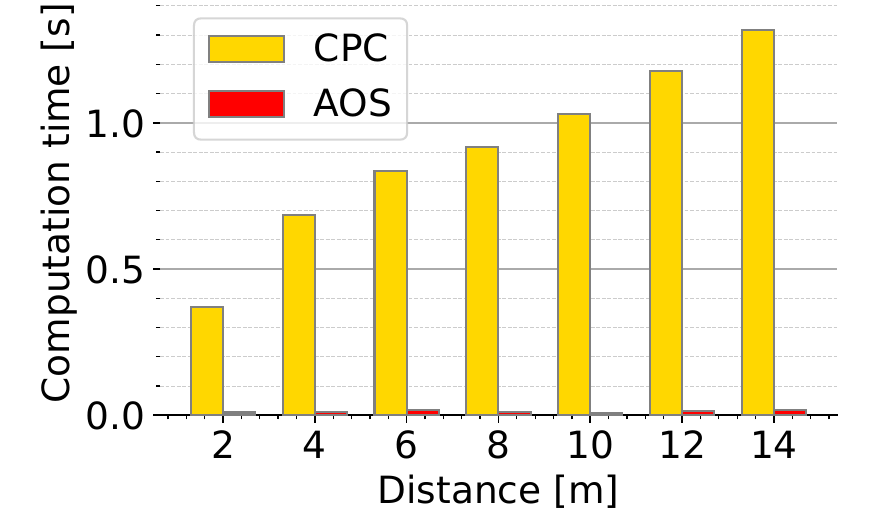}
\caption{Comparison of computation times of AOS and CPC in flights with varying distances. Note that AOS is notably faster than CPC, demonstrating almost no increase in time consumption as the mission range climbs.} \label{fig_tc_hist}
\end{figure}

The computation speed of AOS and CPC is illustrated in Fig. \ref{fig_tc_hist}. We observe that as the target distance increases, CPC requires longer computation times to maintain a sampling time within 0.03 seconds. Particularly, it takes more than 1 s to process the case of 10 m. In contrast, AOS can maintain the same $N$ across similar tasks without increasing the optimality gap, and therefore achieve a relatively stable computation time of 13 ms on average, which is significantly lower than that of CPC. By the way, AOS only takes 66 ms to process the case of 100 m while CPC spends more than an hour, with their resulting durations differing by only 6.7\%. This evidence demonstrates the remarkable time efficiency of the proposed approach.




\subsection{Experimental Validation}\label{subsec:experiment}

\begin{figure}[!htbp]
\centering
\includegraphics[width=0.45\textwidth]{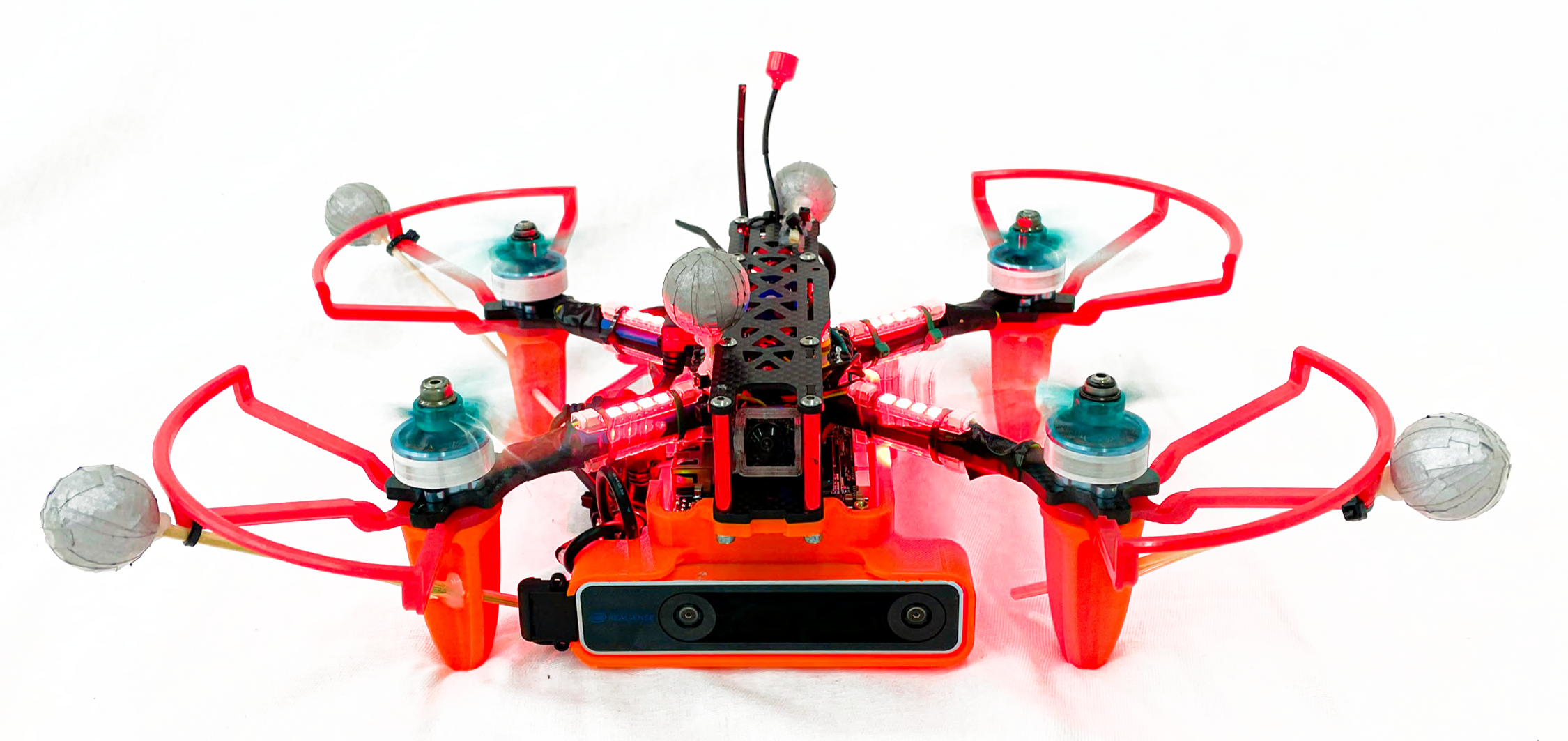}
\caption{Platform for experimental validation equipped with a Jetson TX2 NX computer, a Betaflight flight control unit (FCU), and infrared-reflective markers for	motion capture.}
\label{fig_exp_drone}
\end{figure}

Our flight arena is equipped with a motion capture system with a tracking volume of $6\!\times\!6\!\times\!2.5$ m$^3$. A laptop with an Intel Core i7 processor is used for trajectory generation. Fig. \ref{fig_exp_drone} details the configuration of the physical platform. The onboard computer runs the autopilot \cite{foehn2022agilicious} and a nonlinear model predictive controller (NMPC) \cite{verschueren2022acados} for trajectory tracking. Note that the FCU accepts body rates and collective thrust as the lowest-level control inputs. Therefore, we extract the body rates from the NMPC states and compute the collective thrust by summing up thrusts at each rotor. To maintain controllability amidst disturbances, the trajectory is generated with a slightly lower thrust bound than the platform's maximum capacity. See Quad FSC in Table \ref{tab_params} for detailed model parameters.

We execute two time-optimal trajectories produced by AOS to validate its solution quality: one with a diagonal translation of 5 m horizontally and 1.5 m vertically, and the other with a purely horizontal translation of 5 m. The results are visualized in Fig. \ref{fig_exp_diag_hori}. We observe that the quadrotor can precisely follow the planned trajectories, reaching a maximum roll angle of 100.9$^{\circ}$ and a peak velocity of 8.72 m/s in the diagonal maneuver. It shows that our trajectories have excellent dynamic feasibility to facilitate the tracking.


\begin{figure}[!htbp]
	\centering
\includegraphics[width=0.45\textwidth]{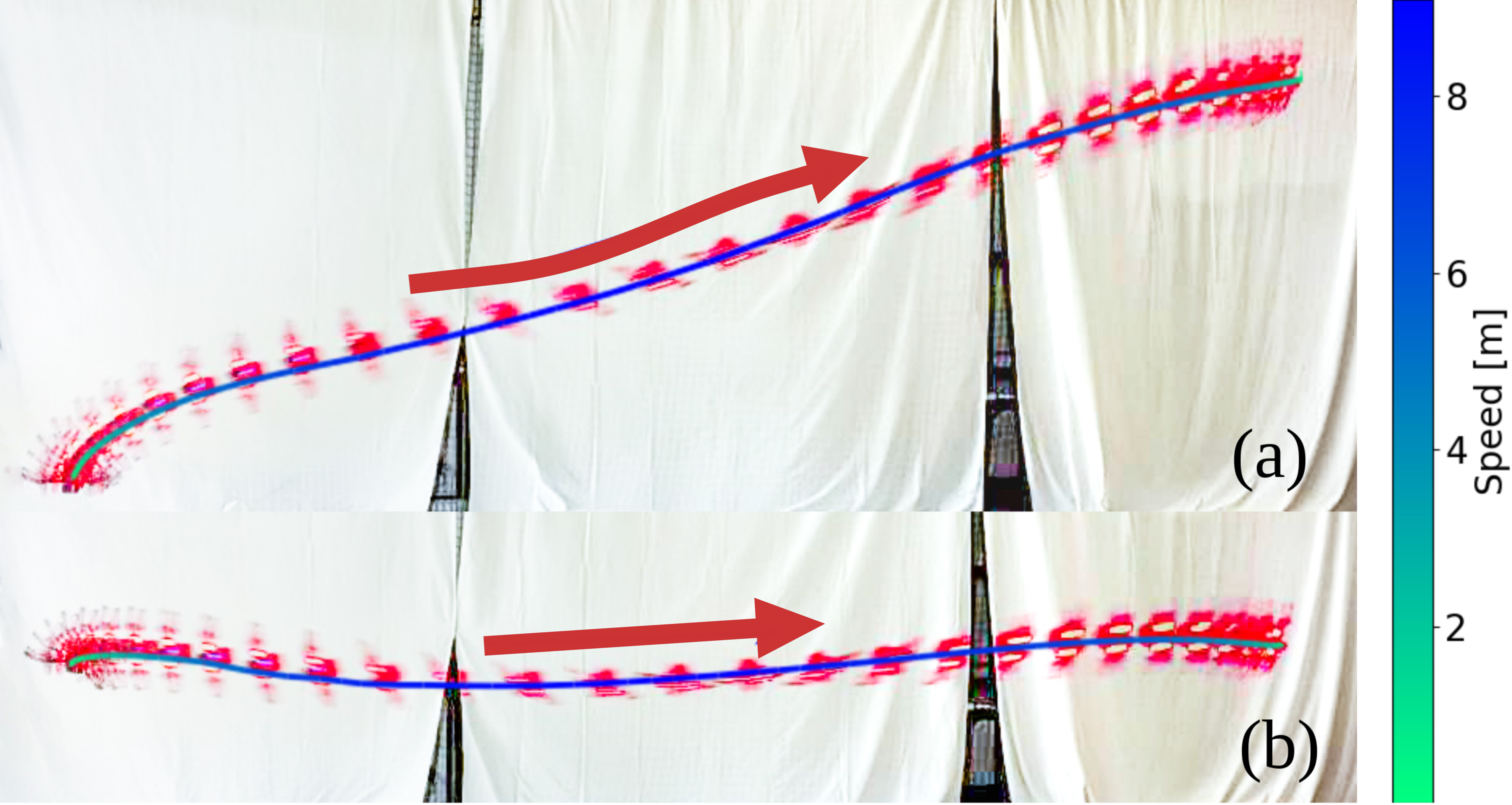}
\caption{Time-optimal maneuvers obtained by executing our planned trajectories (blue lines) with a real-world platform for (a) a diagonal translation of 5 m horizontally and 1.5 m vertically (maximum roll 100.9$^{\circ}$; maximum velocity 8.72 m/s), and (b) a pure horizontal displacement of 5 m (maximum roll 87.3$^{\circ}$; maximum velocity 8.39 m/s).}\label{fig_exp_diag_hori}
\end{figure}

\begin{figure*}[!htbp]
\centering
\includegraphics[width=0.567\textwidth]{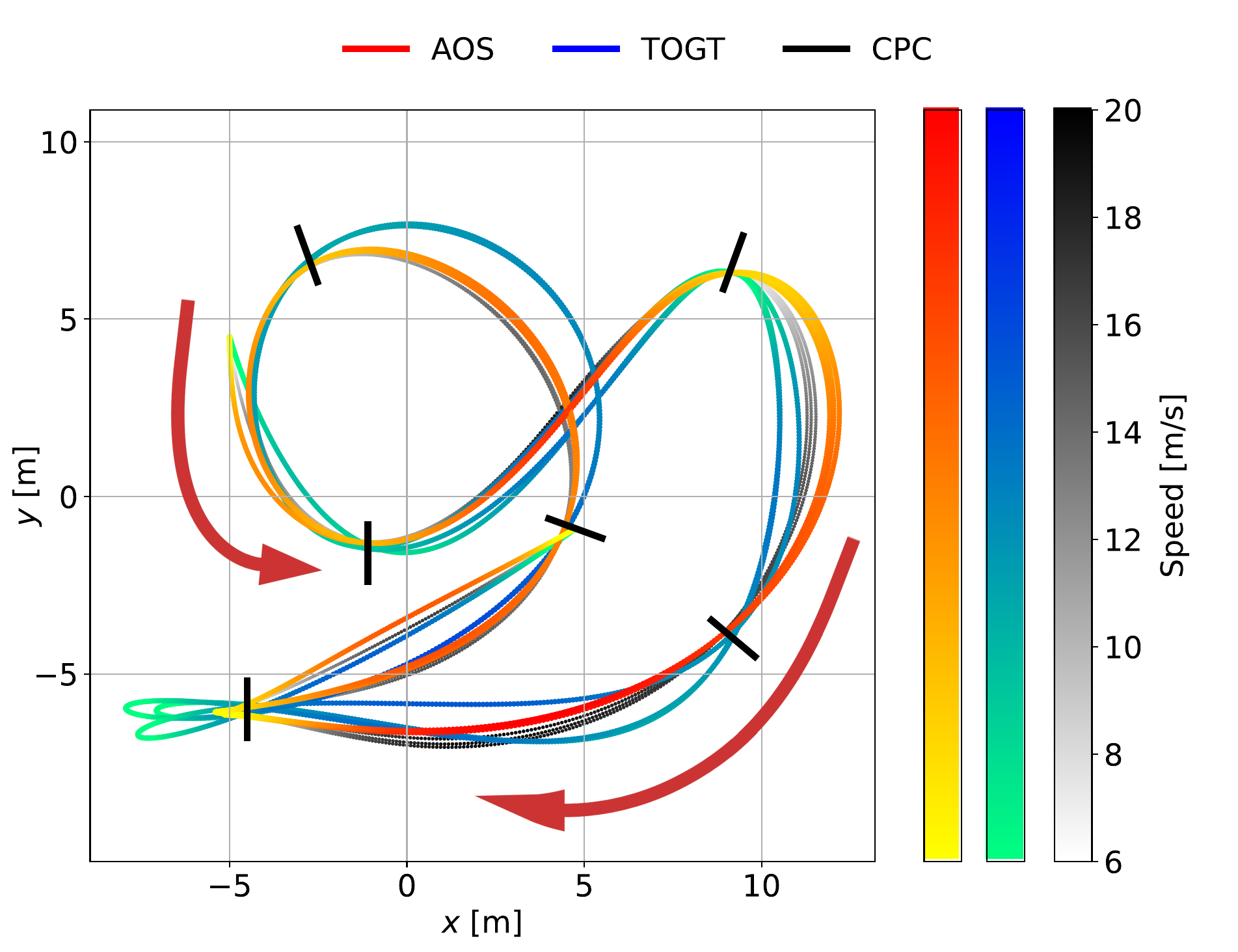}
\includegraphics[width=0.31\textwidth]{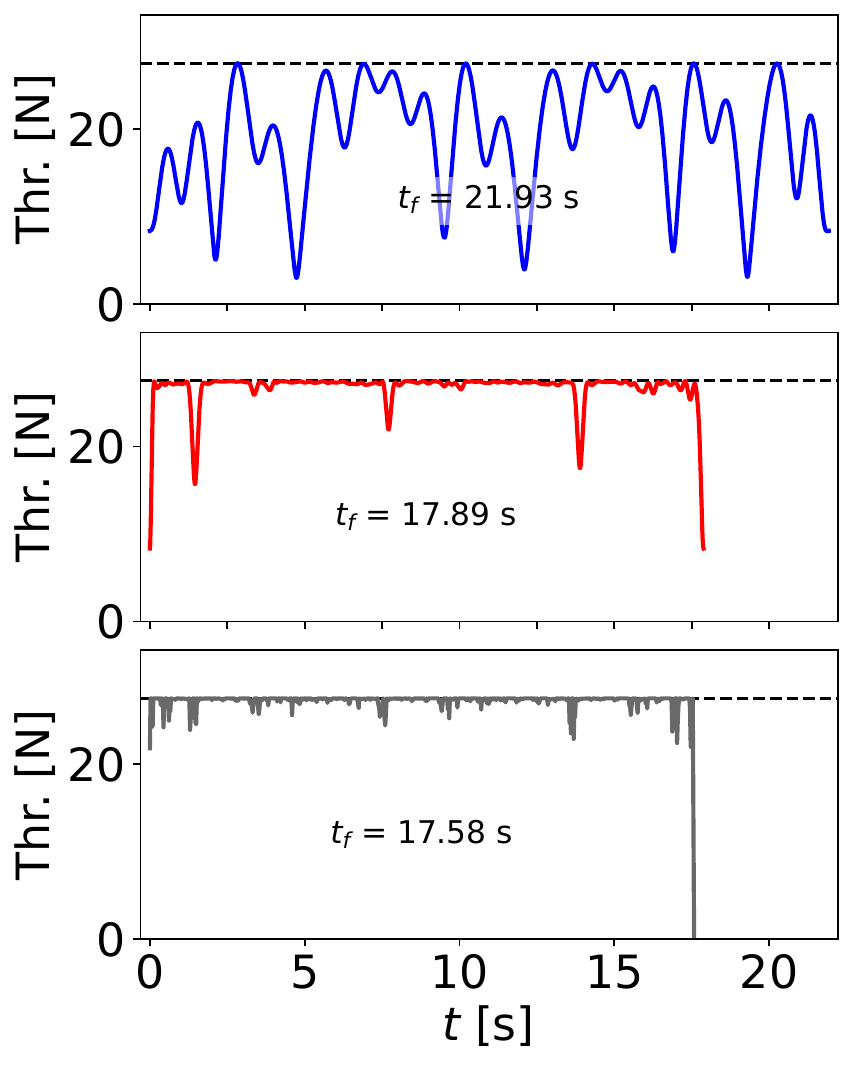}
\caption{(a) Time-optimal trajectories planned for the Split-S track \cite{song2023reaching}. This mission includes 19 gates located at 7 separate locations. Note that the gate in the bottom left corner comprises two vertically stacked gates. (b) Comparison of the collective thrust inputs from TOGT, AOS, and CPC. Note that AOS can maintain maximal collective thrusts constantly and lead to merely 1.7\% longer duration than CPC.}\label{fig_uzh_19wp}
\end{figure*}

\section{Application to Time-Optimal Waypoint Flights}

Our approach can readily incorporate waypoint constraints and demonstrate time-optimal waypoint flights. This section introduces the technical details in this regard.

\subsection{Problem Formulation}

Consider a task with $L\!-\!1$ waypoints. We denote the waypoint position as $\bar{\mathbf{p}}_i$ and the corresponding traversal time as $t_{i}$, where $i=1,...,L\!-\!1$. Given the structure of multi-waypoint flights, we can divide the original optimization problem into $L$ sub-problems based on the waypoint order, leading to a $L$-stage optimal control problem described below:
\begin{subequations}\label{equ_towpf}
\begin{align}\min_{\mathbf{x}(\cdot),\mathbf{u}(\cdot),t_{1,...,L}}\quad & \sum_{i=1}^{L}\int_{t_{i-1}}^{t_{i}}1dt\\
\text{s.t.}\quad & \mathbf{x}(t_{0})=\bar{\mathbf{x}}_{0},\;\mathbf{x}(t_{L})=\bar{\mathbf{x}}_{f}\\
& \dot{\mathbf{x}}_{i}(t)\!=\!\mathbf{f}(\mathbf{x}_{i}(t),\mathbf{u}_{i}(t)),t\!\in\![t_{k-1},t_{k}],,\\
& \mathbf{x}_{i}(t_{i})=\mathbf{x}_{i+1}(t_{i}),i=1,...,L-1,\\
& \mathbf{h}_{i}(\mathbf{x}_{i}(t),\mathbf{u}_{i}(t))\!\leq\!\mathbf{0},\;t\!\in\![t_{i\!-\!1},\!t_{i}],\\
& \mathbf{p}^{w}(t_{i})-\mathbf{\bar{p}}_{i}=\mathbf{0},i=1,...,L-1,\\
& t_{i-1}-t_{i}<0,
\end{align}
\end{subequations}
where $t_{L}\!=\!t_{f}$ is the end time. Note that $\mathbf{x}_{i}(t_{i})$ is the quadrotor's state when passing the $i$-th waypoint, and therefore a waypoint constraint should be imposed on its position component, $\mathbf{p}^{w}(t_{i})$ as shown in Eq. (\ref{equ_towpf}f). If a certain traversal tolerance is allowed for these waypoints, it can be formulated as ball constraints, as discussed in \cite{qin2023time}.


It can be seen that for each sub-problem, there is a start state, $\mathbf{x}_{i}(t_{i})$, and an end state, $\mathbf{x}_{i}(t_{i+1})$, and there are no geometrical constraints in between. Therefore, each sub-problem can be interpreted as a two-state problem discussed in Section \ref{sec:aos}, and the maximum switch number of the thrust and rotational rate inputs derived in Theorem \ref{the:nontrivial} can apply to each sub-trajectory. Using this principle, we can formulate AOS for each sub-problem and optimize all sub-trajectories along with the intermediate states in a single optimization problem. Introducing the polynomial representation, we can acquire the following $L\times N$-stage problem:
\begin{subequations}\label{equ_towpf_poly}
\begin{align}\min_{\mathbf{y}_{1\!,...,\!L,1\!,...,\!N}}\quad & \sum_{i=1}^{L}\sum_{j=1}^{N}T_{i,j}\\
	\text{s.t.}\quad & \mathbf{y}_{1,1}^{[s-1]}(0)=\bar{\mathbf{y}}_{0}^{[s-1]},\\
	& \mathbf{y}_{L,N}^{[s-1]}(T_{N})=\bar{\mathbf{y}}_{f}^{[s-1]},\\
	& \mathbf{y}_{i,j}^{[s\!-\!1]}(0)=\mathbf{y}_{i,j\!-\!1}^{[s\!-\!1]}(T_{j}),\;j\!=\!2,...,N,\\
	& \mathbf{h}_{\boldsymbol{\Psi}}(\mathbf{y}_{i,j})\leq\mathbf{0},\;-T_{i,j}<0,\\
	& \mathbf{y}_{i,N}(T_{i,N})=\mathbf{\bar{p}}_{i},\;i\!=\!1,...,L\!-\!1,
	\end{align}
\end{subequations}
where $\mathbf{y}_{i,j}$ represents the $j$-th polynomial piece of the $i$-th segments with $T_{i,j}$ its duration. Since the problem structure as well as the polynomial family are unchanged, Problem (\ref{equ_towpf_poly}) can be tackled in a similar manner. Note that one can also integrate gate constraints as illustrated in reference \cite{qin2023time}. Readers can find the implementation details in our open-sourced codes.

\begin{table*}[!htbp]
\centering
\tabcolsep=0.12cm
\caption{Comparison of Time-Optimal Planners in Waypoint Flights}\label{tab_wpf_comp}
\begin{threeparttable}

\begin{tabular}{cccccccccccccccccc}
\toprule
\multirow{2}{*}{Waypoint No.} & \multicolumn{2}{c}{CPC \cite{foehn2021time}} &  & \multicolumn{2}{c}{REF \cite{zhou2023efficient}} &  & \multicolumn{2}{c}{Sampling-based \cite{penicka2022minimum}} &  & \multicolumn{2}{c}{FR \cite{han2021fast}} &  & \multicolumn{2}{c}{TOGT \cite{qin2023time}} &  & \multicolumn{2}{c}{\textbf{AOS (ours)}} \\ \cline{2-3} \cline{5-6} \cline{8-9} \cline{11-12} \cline{14-15} \cline{17-18}
& c.t. [s]      & T [s]     &  & c.t. [s]      & T [s]     &  & c.t. [s]       & T [s]       &  & c.t. [s]        & T [s]        &  & c.t. [s]        & T [s]        &  & c.t. [s]  & T [s] \\ \toprule
19                            & 2718             & 17.58        &  & 61.32            & 17.58        &  & 355$^{*}$                & 18.80             &  & 0.016              & 37.95           &  & 1.50               & 21.93           &  & 6.240         & 17.89    \\
33                            & 6438             & 29.92        &  & 110.5            & 29.83        &  & 213$^{*}$                & 32.00             &  & 0.026              & 63.14           &  & 2.86               & 36.71           &  & 12.28        & 30.29    \\
47                            & 28405           & 44.01        &  & 150.4            & 42.09        &  & 287$^{*}$               & 45.21             &  & 0.024              & 88.33           &  & 3.49               & 51.51           &  & 17.20        & 42.68    \\
61                            & --               & --           &  & 201.9            & 54.35        &  & 661                & 58.42             &  & 0.041              & 113.5           &  & 5.18               & 66.30           &  & 24.81        & 55.02    \\
75                            & --               & --           &  & 255.7            & 66.60        &  & 272$^{*}$                & 71.62             &  & 0.043              & 138.7           &  & 5.61               & 81.08           &  & 34.67        & 67.44    \\ \bottomrule
\end{tabular}
\begin{tablenotes}[para,flushleft]
\item Note that the superscript $^{*}$ indicates that the algorithm does not fully converge and -- represents optimization failure.
\end{tablenotes}
\end{threeparttable}

\end{table*}

\section{Results: Multi-Waypoint Flight Problems}
\subsection{Time Optimality}
We benchmark our algorithm against the state-of-the-art methods on the Split-S race track \cite{song2023reaching} which consists of 7 gates. The layout can be found in Fig. \ref{fig_uzh_19wp}. The gate center is treated as a waypoint, and a tolerance of 0.3 m is assigned. The mission will start from position $[-5.0,4.5]$ m. After completing a certain number of laps, the vehicle should come to a stop at position $[4.75,-0.9,1.2]$ m. The Quad RPG configuration in Table \ref{tab_params} gives the quadrotor parameters used in this task.

The resulting timings are available in Table \ref{tab_wpf_comp}. As expected, MS methods (CPC and REF) yield the shortest duration. However, as more laps are involved, REF exhibits a monotonically increase in processing time, and CPC fails to converge in the case of 47 waypoints. The sampling-based method \cite{penicka2022minimum} can output results with slightly longer durations after multiple trials, but there is no guarantee of convergence or time optimality. Prior polynomial-based methods (FR and TOGT) showcase fast computation speed but at the cost of suboptimal timings; In the case of 75 waypoints, TOGT is 21.7\% slower than REF while FR directly doubles the lap time. By comparison, our AOS successfully achieves a good trade-off between computation speed and solution accuracy. Firstly, it outperforms sampling-based and polynomial-based methods in terms of the optimality gap. In the 19-waypoint case, its trajectory duration is only 1.7\% longer than CPC and REF, while in the 75-waypoint case, the gap further drops to 1.2\%. Fig. \ref{fig_uzh_19wp} clearly shows that AOS has largely overcome the limitations of polynomials mentioned in \cite{foehn2021time}, showing the capability of maintaining maximal thrust inputs constantly. Furthermore, the required computation time is lower than REF by an order of magnitude, not to mention CPC. Additionally, the result also manifests the AOS's strong scalability w.r.t. the number of waypoints.

\begin{figure}[!htbp]
\centering
\includegraphics[width=0.45\textwidth]{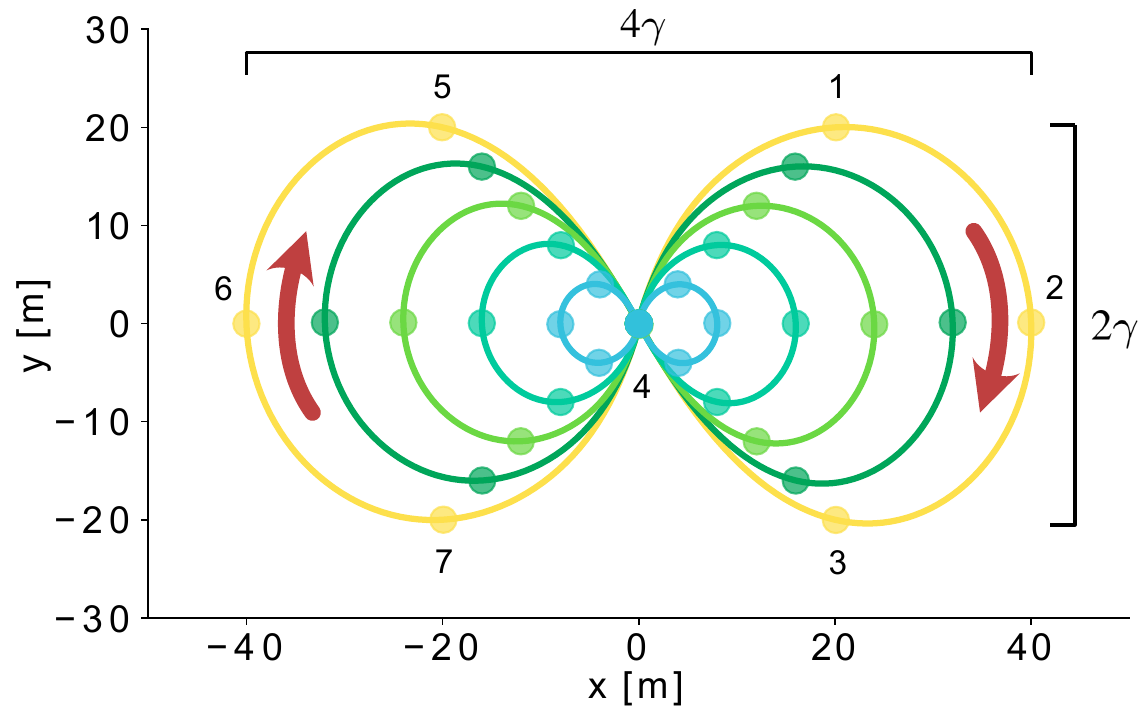}
\includegraphics[width=0.45\textwidth]{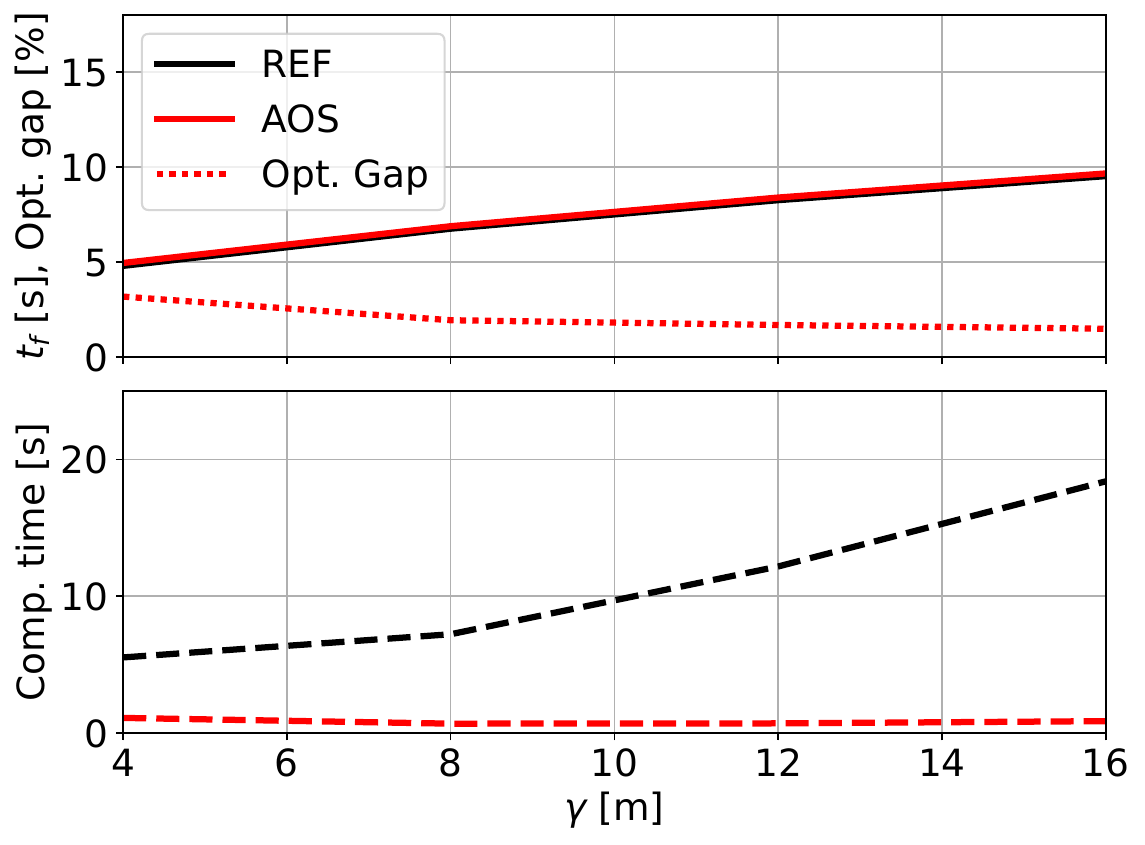}
\caption{Evolution of the optimality gap and computation time as the radius $\gamma$ rises from 4 m to 16 m. The largest layout (yellow) covers an area of $64\times32$ m$^2$. Circles with the same color denote the waypoints in the same round. Note that AOS can maintain a small discrepancy to REF regardless of the size of the layout (2.0\% on average and 1.5\% at $\gamma\!=\!16$). And its computation time is significantly lower than that of REF.}\label{fig:figure8_all}
\end{figure}

\subsection{Evaluation on Long-Range Flights}



\begin{figure*}[!htbp]
\centering
\captionsetup[subfloat]{farskip=2pt,captionskip=1pt}
\centering
\includegraphics[width=0.98\textwidth]
{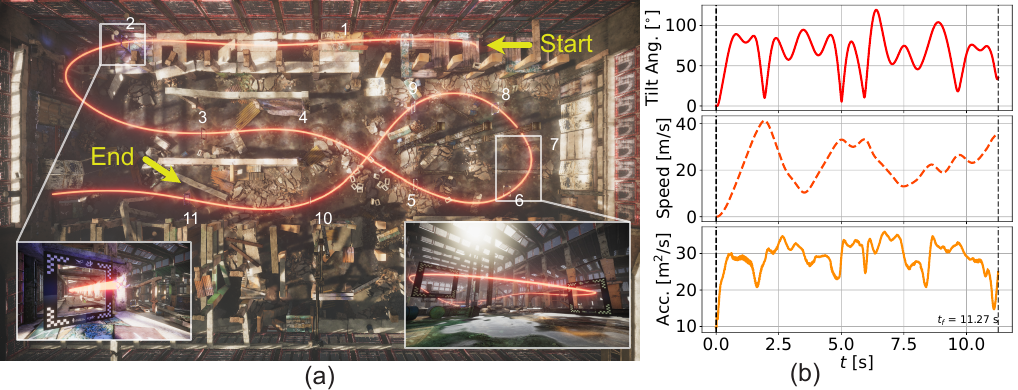}
\caption{(a) Time-optimal trajectory generated by our AOS for the AlphaPilot simulation challenge \cite{foehn2022alphapilot} and executed by NMPC in the FlightGoggle simulator \cite{guerra2019flightgoggles}. This track includes 11 gates with an order indicated by the number. Note that we achieve a lap time of 11.27 s which is faster than the best records of 18 s. (b) Time evaluation of the tilt angle, speed, and acceleration. Our approach showcases superhuman performance in long-range time-critical tasks, reaching a maximum velocity of 148 km/h.}\label{fig:alphapilot_photo}
\end{figure*}

We now quantitatively assess the scalability of the proposed algorithm w.r.t. the waypoint layout. As depicted in Fig. \ref{fig:figure8_all}, we aim to generate a figure-8 path using 7 waypoints, whose layout is detailed in Table \ref{tab:figure8}. Note that the start and end positions are both at the origin. The strategy is to increase the gap between waypoints by controlling the scalar $\gamma$ and we observe how the tested algorithm behaves in terms of computation time and solution accuracy. The results in Fig. \ref{fig:figure8_all} show that AOS has no apparent increase in computation time compared to REF when $\gamma$ climbs from 4 to 16. Meanwhile, it consistently achieves a gap of approx. 2\% to the solution from REF. This sufficiently proves that AOS also keeps its range-insensitive feature in waypoint flights.

To verify its performance in actual long-range time-optimal tasks, we generate racing trajectories using our method for the autonomous drone racing competition conducted in the FlightGoggle simulator \cite{guerra2019flightgoggles}, resulting in a trajectory with a length of 256.8 m within 5 s. Then, the trajectory is executed by an NMPC using the same setup in Section \ref{subsec:experiment}. The quadrotor parameters can refer to Quad FGG in Table \ref{tab_params}. The results are given in Fig. \ref{fig:alphapilot_photo}. It shows that AOS can complete a lap within 11.27 s with a peak velocity of 41.23 m/s, significantly outperforming the best record of approx. 18 s at the simulation competition \cite{mit2024alphapilot}. Please refer to our attached video for the first-person-view footage.


\begin{table}[!htbp]
\begin{center}
\centering
\caption{Waypoint Layout of the Figure-8 Trajectory}\label{tab:figure8}
\begin{tabular}{@{}cccccccc@{}}
    \toprule
    Waypoint Order      & 1        & 2         & 3         & 4   & 5         & 6          & 7         \\ \midrule
    x [m] & $\gamma$ & 2$\gamma$ & $\gamma$  & 0.0 & -$\gamma$ & -2$\gamma$ & -$\gamma$ \\
    y [m] & $\gamma$ & 0.0       & -$\gamma$ & 0.0 & $\gamma$  & 0.0        & -$\gamma$ \\ \bottomrule
\end{tabular}

\end{center}
\end{table}

\begin{figure*}[!htbp]
\centering
\includegraphics[width=0.48\textwidth]{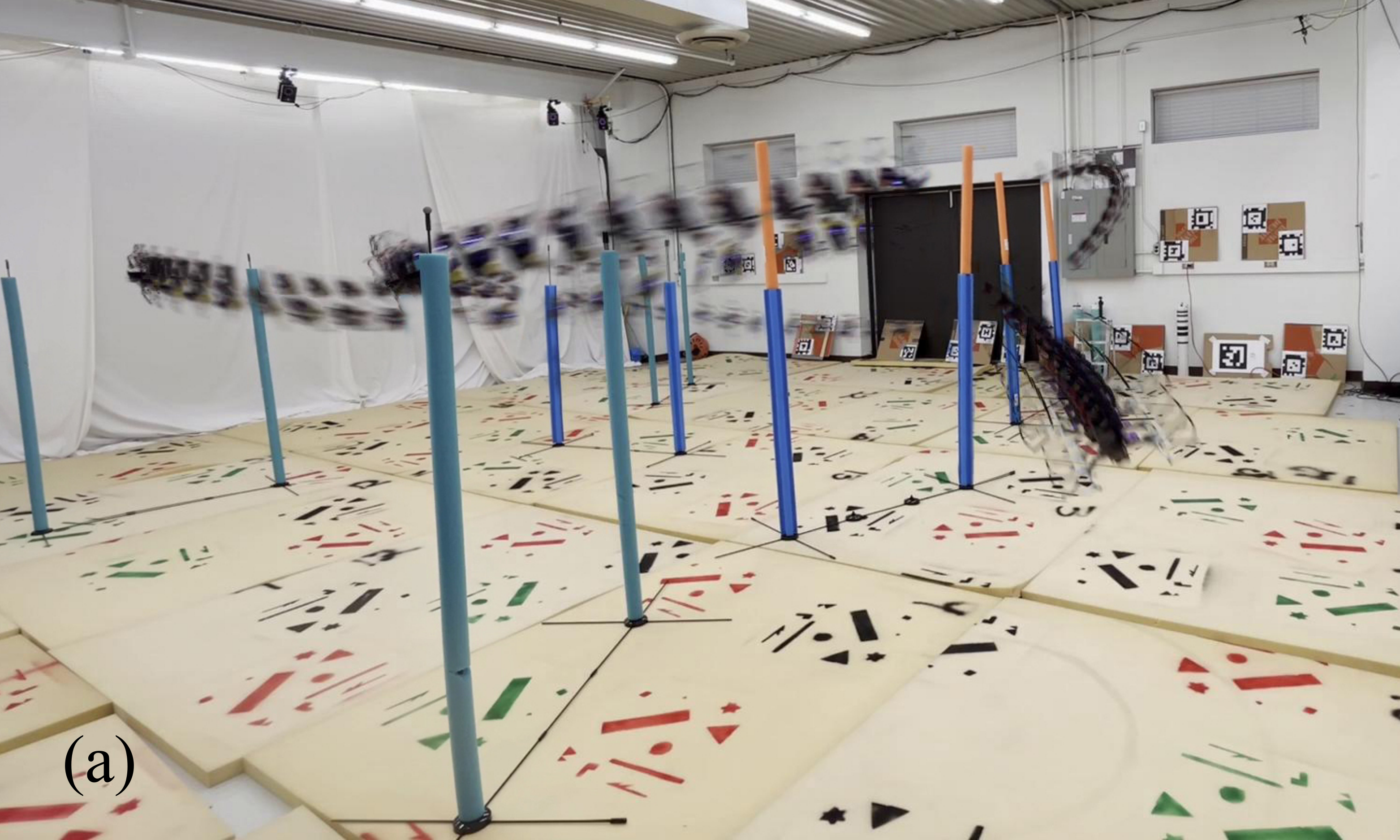}
\includegraphics[width=0.48\textwidth]{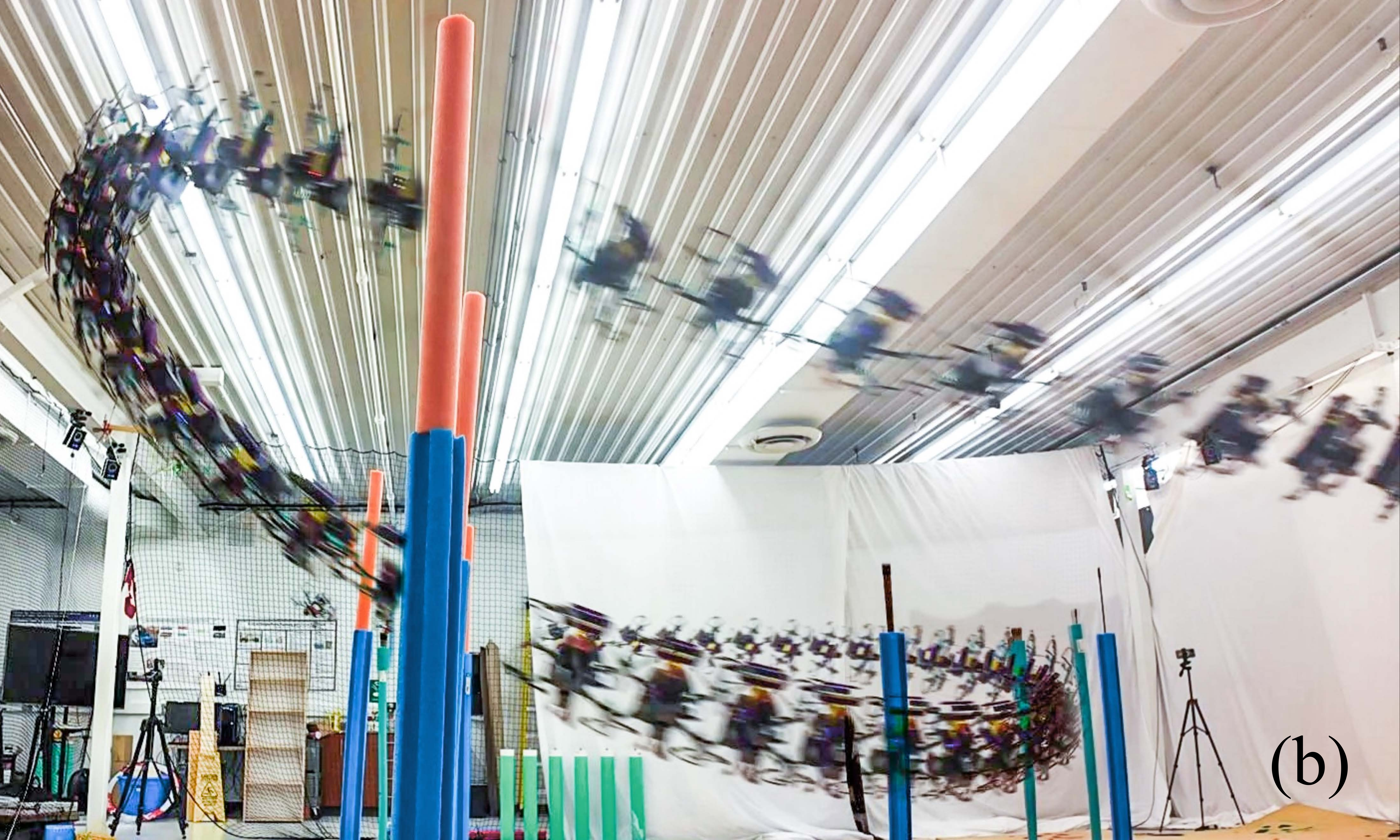}
\includegraphics[trim={25mm 15mm 30mm 25mm},clip,width=1.0\textwidth]{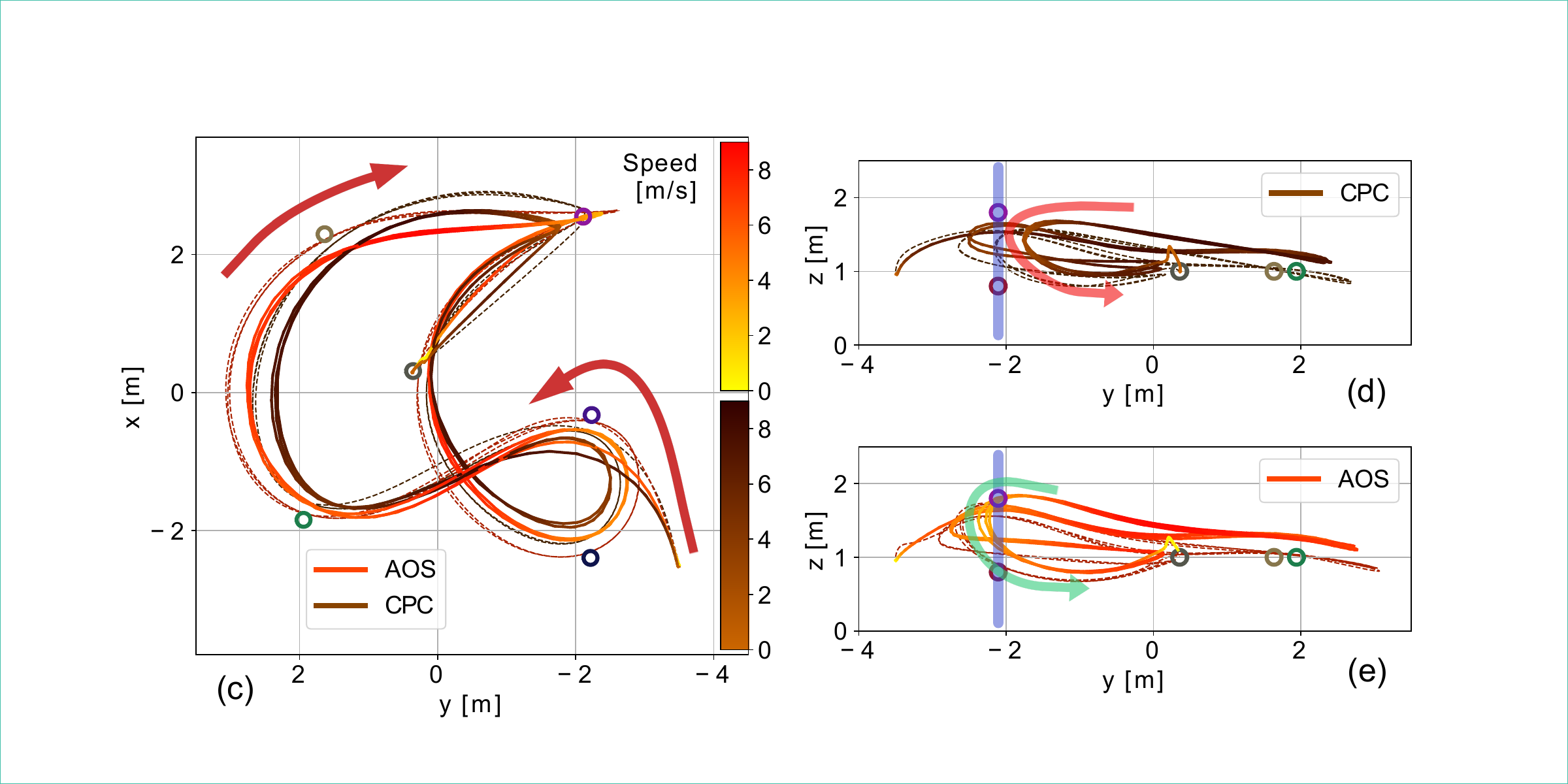}
\vspace{-5mm}
\caption{Comparison of time-optimal trajectories generated by AOS and CPC for the adapted Split-S track with the number of waypoints but slightly different locations. Each waypoint is marked as a circle with a radius of 0.1 m. (a-b) Photos of flight trajectories by using the algorithm presented herein. (c) Planned and actual trajectories of AOS (grey dashed line; red solid line) and CPC (black dashed line; brown solid line). (d-e) Side views when passing the two vertical waypoints. Note that AOS successfully passed through these two points whereas CPC did not.}
\end{figure*}

\subsection{Experimental Validation}

We conduct real-world experiments to validate the proposed algorithm qualitatively. We adapt the Split-S track to fit within the accessible tracking volume and reduce the waypoint tolerance to 0.1 m. The objective is to analyze the performance difference between CPC and AOS.

Fig. \ref{fig_uzh_19wp} presents the executed trajectories from both methods. The result suggests that our approach can successfully navigate the vehicle through all waypoints in 11.0 s, hitting a maximum speed of 8.86 m/s. Although CPC achieves a faster lap time of 10.1 s, we find that it cuts corners seriously and does not pass through the two vertical waypoints shown in the upper right corner. And notably, AOS exhibits better convergence as the planned trajectory strictly passes through each waypoint.

\section{Discussion}

\subsection{Limitations}

Several issues specific to time-optimal trajectory planning may arise when using AOS. First of all, it is unstable to generate flip maneuvers in purely vertical translations even if the singularity has been avoided by strategies introduced in the supplementary material. One possible solution is to introduce the Hopf Fibration Control Algorithm (HFCA) proposed in \cite{watterson2019control} to switch between two geometric constructions of $SO(3)$ in light of the desired thrust direction of the quadrotor. Secondly, AOS may exhibit degraded performance when the waypoints are too close together, especially with a non-collinear layout. This is because, in such circumstances, a primary moving direction between two waypoints may not exist, rendering a two-dimensional model inappropriate for capturing the motion characteristics. But for long-range missions, this situation is quite rare.

\subsection{Generalization of AOS to Other Systems}

The central idea of AOS can be generalized to other systems, albeit not in the exact same manner, and we have demonstrated a successful application to a double integrator in previous sections. The key is to identify a suitable functional subspace that can easily scale with time and has the ability to construct or approximate each trajectory segment appearing in the time-optimal trajectory. Interestingly, this concept can be linked to the theory of functional connection \cite{johnston2021theory} with a context of time-optimal control problems.





\section{Conclusion}


In this paper, we present a polynomial-based quadrotor motion planner that can generate time-optimal trajectories for a wide range of scenarios, e.g., drone racing, and provide necessary theoretical support regarding time optimality. It enables long-rang time-optimal trajectory planning with high time efficiency and is suitable for large-scale operations. Extensive simulations and experiments demonstrate its superior scalability w.r.t. the number of waypoints and mission ranges. Specifically, it maintains an optimality gap within 10\% in the majority of tasks, while consuming significantly shorter computation time than state-of-the-art methods. In future work, we aim to reduce the tracking error by incorporating data-driven quadrotor models in order to enhance trajectory feasibility.

\section*{Acknowledgments}
The authors would like to acknowledge the sponsorship by the Natural Sciences and Engineering Research Council of Canada (NSERC) under grant RGPIN-2023-05148. Additionally, we express our gratitude for the experimental support provided by the Learning Systems \& Robotics Lab (formerly Dynamic Systems Lab) at TUM and UTIAS.


{\appendices
\section*{Appendix A: Proof of Corollary \ref{cor:ut_switch}}

\begin{proof}
``$\Rightarrow$'': Since $\varTheta$ is flat, we have  $c_{2}c_{3}\!=\!c_{1}c_{4}$ as per Lemma \ref{lem:vartheta} $\Rightarrow$ $p_{2}$ and $p_{4}$ can be both zeros by Lemma \ref{lem:phi_p2p4}. Therefore, according to Lemma \ref{lem:phit_0}, there are two possible cases for $\Phi_{T}\!=\!0$ to happen, i.e., either $\theta^{*}(\hat{\tau})=\varTheta(\hat{\tau})\pm\frac{\pi}{2}$ or $p_{2}(\hat{\tau})=p_{4}(\hat{\tau})=0$ hold. 

``$\Leftarrow$'': As established by Lemma \ref{lem:phi_p2p4} and Lemma \ref{lem:phit_0}, either condition (i) or (ii) ensures that $\Phi_{T}(\hat{\tau})\!=\!0$, thereby meeting the necessary condition for a switching time. However, it is unclear whether they do result in a switch because $\Phi_{T}$ may not change sign after $\hat{\tau}$. We need to discuss both cases separately. For condition (i), it is trivial to obtain that $\Phi_{T}$ will flip its sign because $p_{2}$ and $p_{4}$ are both affine functions of time (and thus their signs must change after passing the zeros). For condition (ii), we need to prove that $\hat{\tau}$ is not a local extremal of $\Phi_{T}$. The proof can be done via a contradiction. Assume that condition (ii) holds and $\hat{\tau}$ is not a switching time. This implies that $\Phi_{T}$ keeps the same sign after meeting its zero. Without loss of generality, we assume that $\Phi_{T}\geq0$ within a neighborhood of $\hat{\tau}$ (and therefore $u_{R}^{*}\!=\!1$ within the same interval). It follows that $\hat{\tau}$ is the local minimizer corresponding to $\Phi_{T}\!=\!0$. As a result, we have $\dot{\Phi}_{T}(\hat{\tau})=0$, which can be expanded to:
\begin{equation}
(p_{2}(\hat{\tau})\!-\!c_{3})\cos\theta^{*}(\hat{\tau})\!+\!(-p_{4}(\hat{\tau})\!-\!c_{1})\sin\theta^{*}(\hat{\tau})\!=\!0.
\end{equation}
To meet $\Phi_{T}(\hat{\tau})\!=\!\dot{\Phi}_{T}(\hat{\tau})\!=\!0$, $\theta^{*}$ must satisfy:
\begin{equation}
\theta^{*}(\hat{\tau})\!=\!\arctan\left(\frac{p_{2}(\hat{\tau})-c_{3}}{p_{4}(\hat{\tau})+c_{1}}\right)\!=\!\arctan\left(\frac{-p_{4}(\hat{\tau})}{p_{2}(\hat{\tau})}\right).
\end{equation}
Taking tangents on both sides, multiplying out the constraint, and rearranging the equation yield:
\begin{equation}
p_{2}^{2}(\hat{\tau})+p_{4}^{2}(\hat{\tau})=c_{2}c_{3}-c_{1}c_{4}.
\end{equation}
Since $c_{2}c_{3}=c_{1}c_{4}$ is given, both sides of the equation must be zeros, including $p_{2}(\hat{\tau})=p_{4}(\hat{\tau})=0$. However, this leads to a contradiction to the given condition.
\end{proof}

{\appendices
	\section*{Appendix B: Proof of Lemma \ref{lem:ur_notbb}}

\begin{proof}
We will prove it by making a contradiction. Assume that there exists a phase that $u_{R}^{*}$ is bang-bang with a switching time of $\hat{\tau}$ (and therefore $\Phi_{R}(\hat{\tau})\!=\!0$). We use $\theta_{\hat{\tau}}$ to denote the optimal rotation at $\hat{\tau}$. Regarding the value of $u_{T}^{*}$ within a small enough neighborhood of $\hat{\tau}$, there are two possibilities: (i) $u_{T}^{*}$ does not change value around $\hat{\tau}$ and (ii) $u_{R}^{*}$ and $u_{T}^{*}$ change values at $\hat{\tau}$ simultaneously.

Case (i): Since $u_{T}^{*}$ is unchanged around $\hat{\tau}$, there must exist a $\varepsilon>0$ such that $u_{T}^{*}$ is bang during $[\hat{\tau}-\varepsilon, \hat{\tau}+\varepsilon]$. We introduce an auxiliary function $g:\mathbb{R}\!\rightarrow\!\mathbb{R}$ whose input is a single real value $\delta\in[0,\varepsilon]$:
\begin{equation}
g(\delta)=H^{*}(\hat{\tau}\!+\!\delta)-H^{*}(\hat{\tau}\!-\!\delta).\label{equ_auxil_func}
\end{equation}
We know that $g(\delta)\!=\!0$ for every $\delta\in[0,\varepsilon]$ since the Hamiltonian is 0 along the optimal trajectory. This means that $g(\delta)$ must vanish along the optimal trajectory over $[\hat{\tau}-\varepsilon, \hat{\tau}+\varepsilon]$. If we can show that $g(\delta)$ is a nontrivial analytical function of $\delta$, we establish a contradiction because the zeros of analytical functions are isolated. Now we will delve into $g(\delta)$.


We are aware that there are two possible types of $u_{R}^{*}$ around $\hat{\tau}$, one from $1$ to $-1$, and the other from $-1$ to $1$. Since both cases are identical in terms of trajectory structure, it will suffice to study the former case. Inserting Eq. (\ref{equ_hamiltonian_details}) into Eq. (\ref{equ_auxil_func}) and going through a series of elimination, we arrive at:
\begin{align}
\begin{split}
g(\delta)=0=& 2\delta u_{T}^{*}(c_{1}(\sin(\theta_{\hat{\tau}}-\delta)-\sin\theta_{\hat{\tau}})\\
&+c_{3}(\cos(\theta_{\hat{\tau}}-\delta)-\cos\theta_{\hat{\tau}})).\label{equ_auxil_func_ret}
\end{split}
\end{align}
Note that in the above derivation, we need to rely on the Lebesgue integral because $u_{R}^{*}$ is undefined at $\hat{\tau}$ (and consequently $\ddot{\hat{x}}(\hat{\tau})$, $\ddot{\hat{z}}(\hat{\tau})$ and $\dot{\Phi}_{R}(\hat{\tau})$). Using $2\delta u_{T}^{*}\!\neq\!0$, we get:
\begin{equation}
c_{1}(\sin(\theta_{\hat{\tau}}\!-\!\delta)\!-\!\sin\theta_{\hat{\tau}})\!+\!c_{3}(\cos(\theta_{\hat{\tau}}\!-\!\delta)\!-\!\cos\theta_{\hat{\tau}})\equiv 0.\label{equ_urbangbang_const}
\end{equation}
It is not difficult to find that Eq. (\ref{equ_urbangbang_const}) cannot hold for every $\delta\in[0,\varepsilon]$ unless $c_{1}=c_{3}=0$, which reaches a contradiction. 

Case (ii): Following similar steps, we deduce $g(\delta)$ with $u_{T}^{*}$ following a bang-bang policy and switching value exactly at $\hat{\tau}$. We skip the details as the process is similar. The result shows that $g(\delta)$ is a nontrivial analytical function of $\delta$ given $(c_{1},c_{3})\neq(0,0)$, and hence, $u_{R}^{*}$ is never a bang-bang trajectory.
\end{proof}

{\appendices
	\section*{Appendix C: Proof of Lemma \ref{lem:ur_sbs}}

\begin{proof}
(i): From Lemma \ref{lem:ur_notbb}, we know that $u_{R}^{*}$ is not bang-bang. Moreover, $|\dot{\varTheta}|<1$ almost everywhere is a given condition for every singular flow. As a consequence,  once $\theta^{*}$ leaves a singular flow, it must increase or decrease until it meets another singular flow. In other words, these two singular arcs are on separated singular flows. 

(ii): Being adjacent means that their corresponding singular flows have a gap of $\pi$, and hence, there is no singular flow in between. Since $c,d$ are both switching times for $u_{R}^{*}$, we first have $\Phi_{R}(c)\!=\!\Phi_{R}(d)\!=\!0$. As per the Mean Value Theorem, there must exist a time $e\in(c,d)$ such that $\dot{\Phi}_{R}(e)=0$. To meet this condition, by Corollary \ref{cor:phir_0}, either $p_{2}(e)\!=\!p_{4}(e)\!=\!0$ or $\theta^{*}(e)=\varTheta(e)$ must hold. Under this situation, only the former case is possible because $\theta^{*}$ cannot intersect any singular flow for any time between $(c,d)$.

(iii): Being 2-adjacent means that their corresponding singular flows have a gap of $2\pi$. Using $c_{2}c_{3}\!=\!c_{1}c_{4}$ with $(c_{1},c_{3})\!\neq\!(0,0)$, we know from Lemma \ref{lem:vartheta} that it is a flat singular flow and its value is $\theta^{*}(c) = \arctan(c_{1}/c_{3}) \pm k\pi$ for some $k\in\mathbb{N}_{0}$. We also know that the duration for the bang arc is exactly $2\pi$, i.e., $d\!-\!c\!=\!2\pi$.

Without loss of generality, we assume that the first singular arc lies on $\varTheta$ while the second one is on $\varTheta+2\pi$. It follows that $u_{R}^{*}=1$ on $(c,d)$. For convenience, we set $u_{T}^{*}(c)=\overline{u_{T}}$ as its value doesn't affect the proof, and the same proof can be also carried out with $\underline{u_{T}}$. 

Now, let's prove this statement by contradiction. Assume that $(p_{2}(\hat{t}),p_{4}(\hat{t}))\neq(0,0)$ for all $\hat{t}\in(c,d)$. In this case, Corollary \ref{cor:ut_switch} indicates that there are exactly two switching times for $u_{T}^{*}$ at $c+\pi/2$ and $c+3\pi/2$. This suffices to deduce the optimal thrust control over $(c,d)$: 
\begin{align}
u_{T}^{*}=\begin{cases}
\begin{array}{c}
\overline{u_{T}}\\
\underline{u_{T}}\\
\overline{u_{T}}
\end{array} & \begin{array}{c}
c\leq\hat{t}<c+\pi/2\\
c+\pi/2<\hat{t}<c+3\pi/2\\
c+3\pi/2<\hat{t}<c+2\pi
\end{array}\end{cases}.
\end{align}
By the maximization condition, we have $H^{*}(c)\!=\!H^{*}(d)\!=\!0$, and subsequently, $H^{*}(d)\!-\!H^{*}(c)=0$. Applying Eq. (\ref{equ_hamiltonian_details}), we are able to expand and simply its expression to the following form:
\begin{equation}
(c_{1}\sin\theta^{*}(c)\!+\!c_{3}\cos\theta^{*}(c))((1-\pi)\overline{u_{T}}-\underline{u_{T}})\!=\!0.\label{equ_hdhc}
\end{equation}
Note the caveat that $u_{T}^{*}$ is undefined at the switching time, and hence, the Lebesgue integral will be used to compute the integral. Firstly, we find that $(1-\pi)\overline{u_{T}}\!-\!\underline{u_{T}}\neq0$ by using $\overline{u_{T}},\underline{u_{T}}\!>\!0$ and $1\!-\!\pi<0$. It implies that $c_{1}\sin\theta^{*}(c)+c_{3}\cos\theta^{*}(c)$ must be zero to make the right-hand side of Eq. (\ref{equ_hdhc}) zero. However, this is impossible because $\theta^{*}(c) = \arctan(c_{1}/c_{3}) \pm k\pi$.
\end{proof}

{\appendices
	\section*{Appendix D: Proof of Lemma \ref{lem:nonhorizontal}}

 
\begin{proof}
(i): $c_{2}c_{3}\!=\!c_{1}c_{4}$ implies that singular flows are flat, i.e., $\varTheta$ is a constant $\Rightarrow$ $\dot{\varTheta}=0$. Moreover, 
Lemma \ref{lem:ur_singular} indicates that when $u_{R}^{*}$ is singular, we must have $\theta^{*}=\varTheta$, and therefore $u_{R}^{*}=0$.

(ii): We have already understood that $u_{R}^{*}$ does not have two consecutive bang arcs from Lemma \ref{lem:ur_notbb}. The key to proving (ii) is to show that there is an upper bound on the number of singular arcs for $u_{R}^{*}$, and this upper bound is two. Assume that $u_{R}^{*}$ contains three singular arcs. According to Lemma \ref{lem:ur_sbs}, there must exist two places where $p_{2}\!=\!p_{4}\!=\!0$. However, this is impossible as per  Lemma \ref{lem:p2p4}. Therefore, $u_{R}^{*}$ has at most two singular arcs. Subsequently, it can easily deduce the most complicated pattern for $u_{R}^{*}$ is B-S-B-S-B, which consists of 4 switches. For the maximum switch number of $u_{T}^{*}$, one just needs to calculate the possible intersections between $\theta^{*}$ and the singular flows. Taking into account an additional switch happening when $p_{2}\!=\!p_{4}\!=\!0$ (by Lemma \ref{lem:phit_0}), we know that the most complex $u_{T}^{*}$ is B-B with at most 5 switches.
\end{proof}
 
{\appendices
	\section*{Appendix E: Proof of Lemma \ref{lem:horizontal}}

\begin{proof}
(i): See Lemma \ref{lem:phi_p2p4}.

(ii): Lemma \ref{lem:prior_works} gives the expression of $u_{R}^{*}$ when singular. Moreover, it is nonzero because both its denominator and numerator are nonzero. This can be verified by demonstrating that the root of the denominator does not exist with $\mathbf{c}\!\neq\!\mathbf{0}$.

(iii): Because of $(p_{2},p_{4}\neq(0,0)$, it is impossible to have two singular arcs for $u_{R}^{*}$ as per Lemma \ref{lem:ur_sbs}. Furthermore, the given condition has suggested that the only singular arc must lie on the central singular flow. Therefore, the most complicated $u_{R}^{*}$ allowed is B-S-B. In this case, $\theta^{*}$ can at most pass $\varTheta\pm \frac{\pi}{2}$ twice and thus $u_{T}^{*}$ contains at most 2 switches.
\end{proof}

\bibliographystyle{IEEEtran}
\bibliography{IEEEabrv,aos}

\end{document}